\theoremstyle{plain}
\newtheorem{theorem}{Theorem}[section]
\newtheorem{proposition}[theorem]{Proposition}
\theoremstyle{definition}
\theoremstyle{remark}
\def\bal#1\eal{\begin{align}#1\end{align}} % align environment
\DeclareMathOperator*{\argmin}{arg\,min} % argmin
\def\m{\mathbf}
\def\mc{\mathcal}
\newcommand {\bbmtx}{\begin{bmatrix}} % begin matrix environment
\newcommand {\ebmtx}{\end{bmatrix}} % end matrix environment
\definecolor{Myblue}{rgb}{0.86,0.84,0.92}  % 219,216,236
\definecolor{Mypink}{rgb}{0.98,0.84,0.85}  % 251,216,217
\newcommand{\glb}[1]{\colorbox{Mypink}{#1}}
\newcommand{\lcl}[1]{\colorbox{Myblue}{#1}}
\DeclareMathOperator*{\expect}{\mathbb{E}}
\def\eqdef{\stackrel{\operatorname{def}}{=}}
\let\oldnl\nl% Store \nl in \oldnl
\newcommand{\nonl}{\renewcommand{\nl}{\let\nl\oldnl}}% Remove line number for one line
\icmltitlerunning{One-Step Residual Shifting Diffusion for Image Super-Resolution via Distillation}
\begin{document}

\twocolumn[
  \icmltitle{One-Step Residual Shifting Diffusion for Image Super-Resolution via Distillation}

  % It is OKAY to include author information, even for blind submissions: the
  % style file will automatically remove it for you unless you've provided
  % the [accepted] option to the icml2026 package.

  % List of affiliations: The first argument should be a (short) identifier you
  % will use later to specify author affiliations Academic affiliations
  % should list Department, University, City, Region, Country Industry
  % affiliations should list Company, City, Region, Country

  % You can specify symbols, otherwise they are numbered in order. Ideally, you
  % should not use this facility. Affiliations will be numbered in order of
  % appearance and this is the preferred way.
  \icmlsetsymbol{equal}{*}

  \begin{icmlauthorlist}
    \icmlauthor{Daniil Selikhanovych}{equal,kl}
    \icmlauthor{David Li}{equal,mbzuai}
    \icmlauthor{Aleksei Leonov}{equal,miriai,aifoundationlab}
    \icmlauthor{Nikita Gushchin}{equal,appliedai,axxx}
    \icmlauthor{Sergei Kushneriuk}{aifoundationlab}
    \icmlauthor{Alexander Filippov}{aifoundationlab}
    \icmlauthor{Evgeny Burnaev}{appliedai,axxx}
    %\icmlauthor{}{sch}
    \icmlauthor{Iaroslav Koshelev}{aifoundationlab}
    \icmlauthor{Alexander Korotin}{appliedai,axxx}
    %\icmlauthor{}{sch}
    %\icmlauthor{}{sch}
  \end{icmlauthorlist}

  \icmlaffiliation{kl}{Kandinsky Lab, Moscow, Russia}
  \icmlaffiliation{mbzuai}{Mohamed bin Zayed University of Artificial Intelligence, Abu Dhabi, United Arab Emirates}
  \icmlaffiliation{aifoundationlab}{Luzin Research Center, Moscow, Russia}
  \icmlaffiliation{miriai}{Moscow Independent Research Institute of Artificial Intelligence, Moscow, Russia}
  \icmlaffiliation{axxx}{AXXX, Moscow, Russia}
  \icmlaffiliation{appliedai}{Applied AI Institute, Moscow, Russia}

  \icmlcorrespondingauthor{Daniil Selikhanovych}{selikhanovychdaniil@gmail.com}
  % \icmlcorrespondingauthor{Firstname2 Lastname2}{first2.last2@www.uk}

  % You may provide any keywords that you find helpful for describing your
  % paper; these are used to populate the "keywords" metadata in the PDF but
  % will not be shown in the document
  \icmlkeywords{Diffusion Models, Super-Resolution, Distillation, Image Restoration}

  \vskip 0.3in
]

% this must go after the closing bracket ] following \twocolumn[ ...

% This command actually creates the footnote in the first column listing the
% affiliations and the copyright notice. The command takes one argument, which
% is text to display at the start of the footnote. The \icmlEqualContribution
% command is standard text for equal contribution. Remove it (just {}) if you
% do not need this facility.

% Use ONE of the following lines. DO NOT remove the command.
% If you have no special notice, KEEP empty braces:
% \printAffiliationsAndNotice{}  % no special notice (required even if empty)
% Or, if applicable, use the standard equal contribution text:
\printAffiliationsAndNotice{\icmlEqualContribution}

% \vspace{-6mm}
\begin{abstract}\label{sec:abstract}
% \vspace{-1mm}
Diffusion models for super-resolution (SR) produce high-quality visual results but require expensive computational costs. Despite the development of several methods to accelerate diffusion-based SR models, some (e.g., SinSR) fail to produce realistic perceptual details, while others (e.g., OSEDiff) may hallucinate non-existent structures. To overcome these issues, we present \textbf{RSD}, a new distillation method for ResShift. Our method is based on training the student network to produce images such that a new fake ResShift model trained on them will coincide with the teacher model. RSD achieves single-step restoration and outperforms the teacher by a noticeable margin in various perceptual metrics (LPIPS, CLIPIQA, MUSIQ). We show that our distillation method can surpass SinSR, the other distillation-based method for ResShift, making it on par with state-of-the-art diffusion SR distillation methods with limited computational costs in terms of perceptual quality. Compared to SR methods based on pre-trained text-to-image models, RSD produces competitive perceptual quality and requires fewer parameters, GPU memory, and training costs. We provide experimental results on various real-world and synthetic datasets, including RealSR, RealSet65, DRealSR, ImageNet, and DIV2K. We provide
the code at \url{https://github.com/Daniil-Selikhanovych/RSD}.
\end{abstract}
% \vspace{-5mm}
% \vspace{-5mm}
% , to support our claims.  

% Our method is based on a novel objective for training a student network and deriving its equivalent formulation with a tractable numerical optimization procedure.
% \david{My version(mb change with last three sentences): Experimental evaluations on synthetic and real-world datasets (RealSR, RealSet65, DRealSR, ImageNet, and DIV2K) demonstrate that (1) RSD outperforms the other SOTA ResShift-based distillation SinSR and (2) regarding to pre-trained text-to-image SR methods, our approach shows competitive perceptual quality, improved alignment with degraded inputs, and reduced model complexity and inference time.}

% \vspace{0mm}
\section{Introduction}
\label{sec:introduction}
% \vspace{0mm}

% dong2016image
Single image super-resolution (SR) \citep{Irani1991ImprovingRB,glasner2009superresolution,dong2016image} is an inverse imaging problem that reconstructs a high-resolution (HR) image from a degraded low-resolution (LR) observation. These degradations are usually \textit{complex and unknown} in real-world scenarios involving digital single-lens reflex cameras \citep{ignatov2017dslrquality,cai2019toward,wei2020component}, referred to as the blind real-world image SR problem (Real-ISR). The SR problem is highly ill-posed, and many methods have been proposed in the literature to address it.

% In real-world settings, these degradations are typically \textit{complex and unknown}, especially for digital single-lens reflex cameras \citep{cai2019toward,wei2020component}, defining the blind real-world SR problem (Real-ISR). Due to its ill-posedness, numerous methods have been proposed.
% These degradations are usually \textit{complex and unknown} for real-world scenarios when dealing with digital single-lens reflex cameras \citep{cai2019toward,wei2020component}, referred to as the blind real-world image SR problem (Real-ISR). The SR problem is highly ill-posed, and many methods have been proposed in the literature to address it.
% ignatov2017dslrquality

% \include{sec/teaser_summary}
% ji2020realworld
\noindent Recently, diffusion models (DMs) have emerged as a strong alternative to GAN-based SR methods \citep{wang2021real,zhang2021designing} due to their ability to model complex data distributions \citep{dhariwal2021diffusion}. Their competitive perceptual quality for Real-ISR is supported by higher human preference scores compared to GAN-based approaches \citep{saharia2023image,wang2024exploiting}.
% \noindent Recently, diffusion models (DMs) have been developed for the blind SR problem and became a strong alternative to methods based on generative adversarial networks (GAN) \citep{wang2021real,zhang2021designing} due to their good capabilities to learn complex data distributions \citep{dhariwal2021diffusion}. The competitive perceptual quality of DMs for different Real-ISR problems is supported by higher human evaluation preferences compared to GAN-based methods \citep{saharia2023image,wang2024exploiting}. 
% Early diffusion methods for SR constructed a denoising process, which starts with Gaussian prior and ends in the HR image, while the LR image is used as a condition for the denoiser \citep{rombach2022highresolution,pmlr-v202-luo23b}. 
% saharia2023image

% However, this strategy also leads to large computational resources and slow inference, requiring dozens or hundreds for the number of function evaluations (NFE) of the denoiser and limiting DMs based on these strategies from practically important real-time SR on consumer devices. 
% However, early DMs for SR used large computational resources and slow inference, requiring dozens or hundreds for the number of function evaluations (NFE) of the denoiser and limiting DMs from practically important real-time SR on consumer devices. The subsequent work developed different approaches to accelerate those models while maintaining their high quality. 
However, early DMs for SR were computationally expensive, requiring dozens or hundreds for the number of function evaluations (NFE) of the denoiser, which limits their real-time deployment on consumer devices. Subsequent work focused on methods to accelerate these models while maintaining perceptual quality.
Among them, ResShift \citep{yue2023resshift} achieves perceptually better results compared to state-of-the-art (SOTA) models of other classes, including GANs \citep{wang2021real,zhang2021designing} and transformers \citep{liang2021swinir} while using only 15 NFE.
% models of the other classes, including GANs \citep{wang2021real,zhang2021designing} and transformers \citep{liang2021swinir}, using only 15 NFE.
%, and previous DMs \citep{rombach2022highresolution}.
% Among them, ResShift \citep{yue2023resshift} achieves perceptually high results in solving the Real-ISR problem using only 15 NFE. This model surpasses the results of state-of-the-art (SOTA) models from the other classes, including GANs \citep{wang2021real,zhang2021designing} and transformers \citep{liang2021swinir}.

\begin{figure*}[hbt!]
    \centering
    % \vspace{0mm}
    \resizebox{0.9\textwidth}{!}{
    \begin{subfigure}{0.5\linewidth}
    \includegraphics[width=1.0\linewidth]{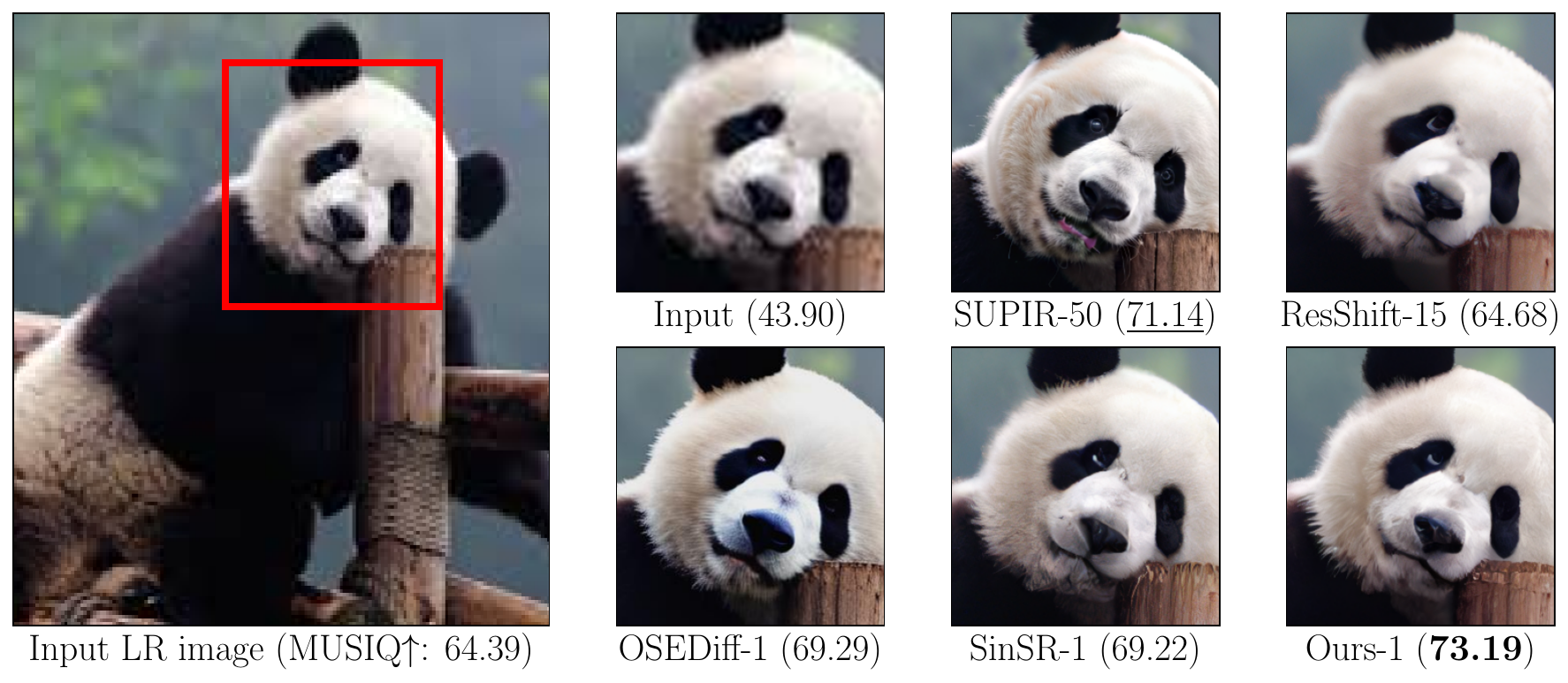}%{RSD_project/images/realset65_results/panda/rebuttal_teaser_ours_final_image_realset65_panda_crop_8.pdf}%{images/realset65_results/panda/teaser_ours_final_image_realset65_panda_crop_8.pdf}
    % \vspace{-2mm}
    % \vspace{-4mm}
    % \caption{A comparison between the recent diffusion-based methods for SR - ResShift \citep{yue2023resshift}, SinSR \citep{wang2024sinsr}, OSEDiff \citep{wu2024onestep}, SUPIR \citep{yu2024scaling} - and the proposed RSD method. Our model has two advantages compared with other distillation-based models: \textbf{(1)} It achieves superior perceptual quality compared to SinSR; \textbf{(2)} It requires less computational resources compared to OSEDiff; see Tab. \ref{tab:effectiveness}. ("-N" behind the method name represents the number of inference steps, and the value in the bracket is  MUSIQ$\uparrow$~\citep{ke2021musiq} for full images). Please zoom in $\times 5$ times for a better view.}
    % \label{fig:teaser_summary}
    % \vspace{-6mm}
    % \vspace{0mm}
    \end{subfigure}

    \begin{subfigure}{0.28\linewidth}
    \includegraphics[width=\linewidth]{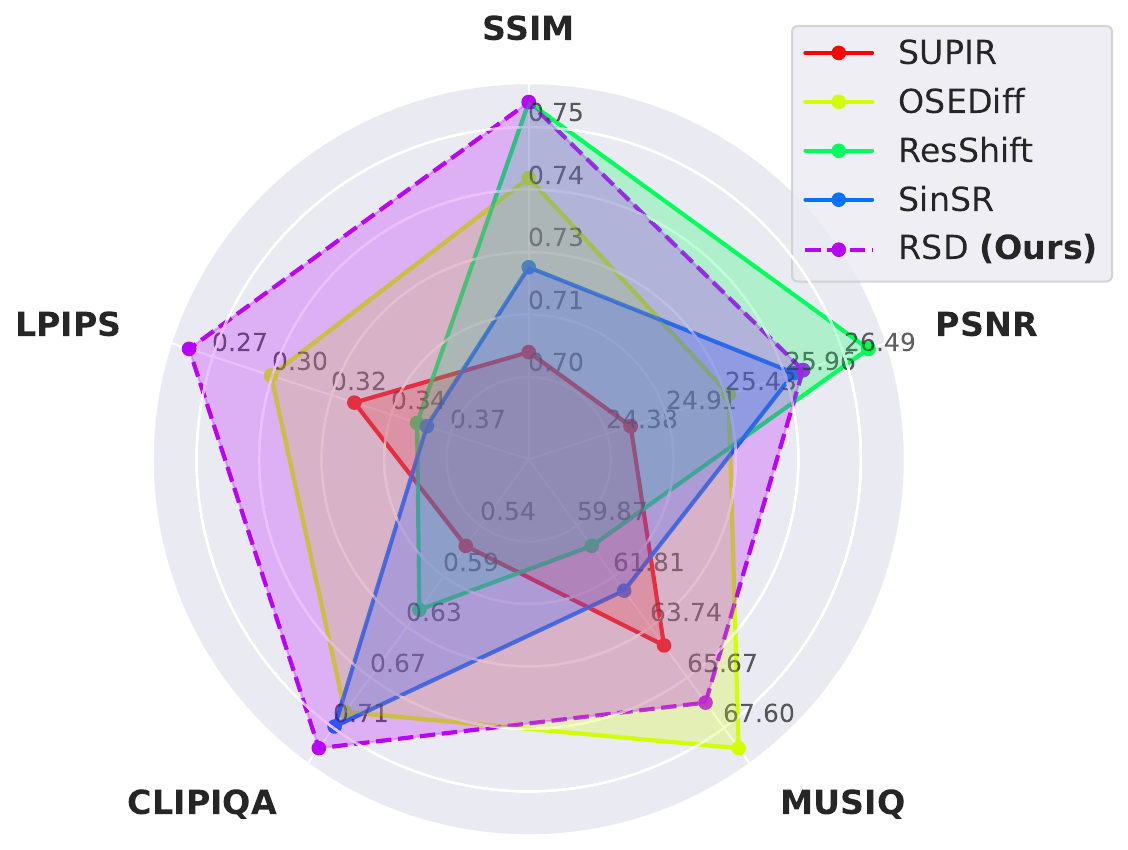}
    % \caption{\lcl{Comparison among diffusion SR methods on RealSR.} RSD (\textbf{Ours}) achieves top scores on most metrics while remaining computationally efficient compared to T2I methods such as OSEDiff\citep{wu2024onestep} and SUPIR\citep{yu2024scaling}.}
    % \label{fig:related_works_spyder}
    \end{subfigure}
    }
    % \vspace{0mm}
    \caption{\footnotesize{\textbf{Left.} A comparison between the recent diffusion methods for Real-ISR - ResShift, SinSR, OSEDiff, SUPIR - and the proposed RSD method. RSD has the following advantages: \textbf{(1)} It achieves superior perceptual quality compared to SinSR; \textbf{(2)} It requires less computational resources compared to OSEDiff; see Table \ref{tab:effectiveness}. ("-N" behind the method name is the NFE, and the value in the bracket is  MUSIQ$\uparrow$ for full images). Please zoom in $\times 5$ times for a better view. \textbf{Right.} Comparison among diffusion SR methods on RealSR. RSD (\textbf{Ours}) achieves top scores on most metrics while remaining computationally efficient compared to T2I methods - OSEDiff and SUPIR.}}
    \label{fig:teaser_summary}
    % \vspace{-2mm}
\end{figure*}
% \vspace{-1mm}
% \vspace{0mm}

% \begin{figure}[hbt!]
%     \centering
%     \includegraphics[width=0.4\textwidth]{images/radar/metrics_radar_plot.pdf}
%     \cxfaption{\lcl{Comparison among diffusion SR methods on RealSR.} RSD (\textbf{Ours}) achieves top scores on most metrics while remaining computationally efficient compared to T2I methods such as OSEDiff\citep{wu2024onestep} and SUPIR\citep{yu2024scaling}.}
%    \label{fig:related_works_spyder}
%     \vspace{0mm}
% \end{figure}

\noindent Unfortunately, ResShift inference remains 10$\times$ times slower than GANs, as shown in ResShift (Table 2), which highlights the challenge of accelerating Real-ISR DMs while preserving perceptual quality.
% Unfortunately, the inference time for ResShift still remains 10 times larger than that of GAN-based models, as shown in ResShift (Table 2). The challenge arises when considering the problem of further acceleration of DMs while maintaining their perceptual quality at the same level.
SinSR \citep{wang2024sinsr} showed that reducing the NFE of ResShift further degrades performance and proposed a 1-step \textbf{knowledge distillation} approach based on deterministic sampling, which is inspired by DDIM \citep{song2021denoising}.
% As shown in SinSR \citep{wang2024sinsr}, ResShift exhibits degraded results if the NFE is further reduced. To solve this issue, SinSR proposed a \textbf{knowledge distillation} for ResShift in 1 NFE, which is based on the deterministic sampling for the reverse process in ResShift inspired by DDIM \citep{song2021denoising}. 
However, SinSR tends to produce blurred results (Figure \ref{fig:comparison}), which was also reported in recent studies \citep{wu2024onestep,chen2025adversarial,dong2025tsdsr}.
% , as can be seen in the second row of Figure \ref{fig:comparison} and was also noted in recent work \citep{wu2024onestep,dong2025tsdsr}. 
% sun2023ccsr

% saharia2022photorealistic dao2024swiftbrush sun2023ccsr wang2024exploiting \citep{rombach2022highresolution,podell2024sdxl} ,yang2024pixelaware
% for various synthetic and Real-ISR benchmarks 
Another acceleration strategy is to condition pre-trained \textit{text-to-image} (T2I) models on the LR input using LoRA \citep{hu2022lora} and distill them with \textbf{variational score distillation} (VSD) \citep{wang2023prolificdreamer,yin2024onestep}, as proposed in OSEDiff \citep{wu2024onestep}.
% Another promising direction in acceleration of DMs for SR is to add conditioning on the LR image to pre-trained text-to-image (T2I) models with LoRA \citep{hu2022lora} and distill them with \textbf{variational score distillation} (VSD) \citep{wang2023prolificdreamer,yin2024onestep} as proposed by OSEDiff \citep{wu2024onestep}.
Although this approach greatly reduced NFE from tens or even hundreds to one across the class of T2I-based SR models \citep{wu2024seesr,yu2024scaling,lin2024diffbir} and achieved better perceptual results than ResShift and SinSR, T2I-based SR models exhibit notable drawbacks: they incur high training and inference costs due to having $\times2.5-10$ more parameters than SinSR, as we show in Table \ref{tab:effectiveness}, and achieve lower full-reference fidelity metrics such as PSNR or SSIM \citep{wang2004image} compared to ResShift and SinSR (\lcl{Table \ref{tab:imagenet-test}} and \glb{Table \ref{tab:benchmarks_crops_short}}).
% Although this approach greatly reduced NFE from tens or even hundreds to one across the class of T2I-based SR models \citep{wu2024seesr,yu2024scaling,lin2024diffbir} and achieved better perceptual results than ResShift and SinSR, we observe the following issues with T2I-based models for the SR problem: (1) as we show in Table \ref{tab:effectiveness}, T2I architectures like Stable Diffusion \citep{rombach2022highresolution} or SDXL \citep{podell2024sdxl} still lead to high computational inference and training costs with $\times10$ more parameters than SinSR; (2) these models also produce lower full-reference fidelity metrics such as PSNR and SSIM \citep{wang2004image} compared to ResShift and SinSR, as shown in \lcl{Table \ref{tab:imagenet-test}} and \glb{Table \ref{tab:benchmarks_crops_short}}, aligned with OSEDiff.   

These limitations motivate the following two questions.
% Due to these issues of distillation methods for diffusion SR, we address the following \textbf{2 questions}.
% \vspace{-2mm}
\begin{enumerate}[leftmargin=0.3cm, itemindent=0.5cm,itemsep=0.2pt]
    \item Are knowledge and variational score distillations the \underline{best} methods for efficient 1-step Real-ISR DMs? 
    \item Can we unite \underline{the best of two worlds} for SinSR and OSEDiff to achieve a 1-step diffusion SR model that has \textit{good perceptual quality} comparable to recent T2I-based SR models, like OSEDiff, with \textit{limited training and inference computational costs}, like SinSR, at the same time?
    % \item Can we achieve this goal and avoid computationally demanding T2I models, bringing DMs closer to being deployed in practical SR scenarios with a \underline{limited computational budget}?
\end{enumerate}
% \vspace{0.5mm}
% \vspace{-3mm}
\underline{\textbf{Contributions}}. Our main contributions are the following:

% \noindent \textbf{(I) Theory}. Inspired by the successful distillation of ResShift achieved by SinSR and recent progress in the distillation of image-to-image DMs
\noindent \textbf{(I) Theory}. Inspired by SinSR and recent image-to-image DM distillation methods \citep{he2024consistency,gushchin2025inverse}, we propose a novel objective for 1-step distillation of diffusion SR models in \textbf{discrete time} and derive its tractable version. We show the difference between the proposed objective and the VSD objective. Motivated by ResShift's good perception-distortion trade-off across DMs and its justified diffusion process, we build our method on top of it and name it \textbf{RSD}: \textbf{R}esidual \textbf{S}hifting \textbf{D}istillation. 
    
\noindent \textbf{(II) Practice}. We show that RSD, trained with the proposed objective, outperforms the teacher for the Real-ISR problem in multiple perceptual metrics, including LPIPS \citep{zhang2018unreasonable}, CLIPIQA \citep{Wang_Chan_Loy_2023}, and MUSIQ \citep{ke2021musiq}. 
% We show that in practice, the discrete-time RSD objective leads to much better performance compared to the related continuous-time IBMD objective \citep{gushchin2025inverse} for Real-ISR (Appendix \ref{app:ibmd_only}).
We show that our discrete-time RSD objective substantially outperforms the related continuous-time IBMD objective \citep{gushchin2025inverse} for the Real-ISR problem in perceptual metrics, as well as in computational training and inference costs (Appendix \ref{app:ibmd_only}). Our method improves the trade-off between fidelity, perceptual quality, and computational efficiency for Real-ISR with DMs in several aspects, as shown in Figure~\ref{fig:teaser_summary} and Table~\ref{tab:effectiveness}:
% ~\ref{tab:effectiveness}
% \vspace{-2mm}
\begin{enumerate}[leftmargin=0.3cm, itemindent=0.3cm]
    \item \textbf{Perceptual quality}.
    Compared to SinSR, the other 1-step ResShift distillation method, RSD achieves better perceptual results on synthetic and Real-ISR benchmarks.
    % Compared to the other 1-step distillation method for ResShift (SinSR), our method achieves better perceptual results on synthetic and real data for the blind SR problem.
    %\item \textbf{Fidelity quality}. Compared to the other 1-step diffusion SR model, which is based on pre-trained T2I models - OSEDiff - our method provides competitive perceptual results while having better fidelity. 
    \item \textbf{Performance-efficiency trade-off}. 
    Compared to the T2I-based 1-step diffusion SR model of OSEDiff, our method achieves competitive perceptual quality while requiring substantially lower computational cost and budget, bringing diffusion SR closer to real-time applications.
    % Compared to the other 1-step diffusion SR model, which is based on pre-trained T2I models, OSEDiff, our method provides competitive perceptual results. At the same time, to bring DMs closer to real-time SR applications, RSD suggests an alternative 1-step model with much lower computational costs compared to T2I-based SR models. % This algorithm has a fast inference time comparable to SOTA GAN-based models for the blind real-world SR problem.
\end{enumerate}

% \subsection*{Conflict of Interest Disclosure}

% \subsection{Dual submission}
% \eg

% \begin{equation}
%   v = a\cdot t.
%   \label{eq:also-important}
% \end{equation}

% \url{http://www.pamitc.org/documents/mermin.pdf}.

% \subsection{Blind review}

% \begin{quote}
% \begin{center}
%     An analysis of the frobnicatable foo filter.
% \end{center}
% \end{quote}

%  {\em require} 

%  \etal.

% \begin{quotation}
% \noindent
%    We describe a system for zero-g frobnication.
% \end{quotation}
% \etal
% \medskip

% \begin{figure}[t]
%   \centering
%   \fbox{\rule{0pt}{2in} \rule{0.9\linewidth}{0pt}}
%    %\includegraphics[width=0.8\linewidth]{egfigure.eps}

%    \caption{Example of caption.
%    }
%    \label{fig:onecol}
% \end{figure}

% \noindent
% Compare the following:\\
% \begin{tabular}{ll}
%  \verb'$conf_a$' &  $conf_a$ \\
%  \verb'$\mathit{conf}_a$' & $\mathit{conf}_a$
% \end{tabular}\\

% \begin{figure*}
%   \centering
%   \begin{subfigure}{0.68\linewidth}
%     \fbox{\rule{0pt}{2in} \rule{.9\linewidth}{0pt}}
%     \caption{An example of a subfigure.}
%     \label{fig:short-a}
%   \end{subfigure}
%   \hfill
%   \begin{subfigure}{0.28\linewidth}
%     \fbox{\rule{0pt}{2in} \rule{.9\linewidth}{0pt}}
%     \caption{Another example of a subfigure.}
%     \label{fig:short-b}
%   \end{subfigure}
%   \caption{Example of a short caption, which should be centered.}
%   \label{fig:short}
% \end{figure*}

% \vspace{0mm}
%\vspace{0mm}
\section{Related Work}
\label{sec:related_work}
% \vspace{-1mm}

% \noindent \textbf{Early deep SR models}. A seminal SRCNN work \citep{dong2014learning} introduced a deep convolution neural network (CNN) approach for the SR problem and formulated SR as a regression problem. This approach became a downstream and many further works  continuously improved the SR performance, especially PSNR metric value \citep{kim2016accurate,lai2017deep,kim2016deeplyrecursive,tai2017image,tai2017memnet,haris2018deep,zhang2018residual,zhang2018image}. However, formulating SR as the regression problem and training networks by minimizing the mean squared error (MSE) between the recovered HR image and the ground truth leads to over-smoothed results \citep{ledig2017photorealistic}. The fundamental limited property to capture perceptual important differences of PSNR-oriented methods can be explained by the pixel-wise image differences \citep{wang2003image,wang2004image,gupta2011modified}. 
% \vspace{-2mm}
% \vspace{0mm}
% sajjadi2017enhancenet dong2016image,kim2016accurate wang2024exploiting and transformer SR models \citep{liang2021swinir}
\noindent \textbf{GAN-based SR models}.
With the rise of GANs \citep{goodfellow2014generative}, GAN-based SR methods \citep{ledig2017photorealistic,wang2021real,zhang2021designing} achieved much better perceptual quality than previous regression-based approaches \citep{lim2017enhanced,zhang2018image}.
% With the rise of GANs \citep{goodfellow2014generative}, one line of research adapted the GAN framework to the SR problem \citep{ledig2017photorealistic,wang2021real,zhang2021designing} and achieved a much better perceptual quality of the generated images than previously developed regression-based methods \citep{lim2017enhanced,zhang2018image}, which minimize the mean squared error (MSE) between predicted and ground truth HR images.
Real-ESRGAN \citep{wang2021real} and BSRGAN \citep{zhang2021designing} suggested complex degradation pipelines to synthesize LR-HR image pairs to model real-world data. Earlier methods assumed a pre-defined degradation (e.g., bicubic), limiting generalization. The degradations of Real-ESRGAN and BSRGAN improved the results of GAN-based Real-ISR models and have also been widely used by diffusion SR models \citep{yue2023resshift,wu2024onestep}. 
% \noindent \textbf{Diffusion-based SR models}. Diffusion models for image generation typically construct a Markov chain, which starts from random noise and ends in target image distribution.
% chung2022comecloser pmlr-v235-yue24d song2021scorebased and trains the denoiser from scratch 
% The second category of methods uses unconditional to the LR image pre-trained diffusion priors and modifies their reverse process \citep{wang2024exploiting,choi2021ilvr}.

\noindent \textbf{Diffusion SR models.} 
% Existing methods, which adapt DMs \citep{pmlr-v37-sohl-dickstein15,song2019generative,ho2020denoising} for the blind SR problem, can be divided into several categories depending on how they utilize the LR image.
% The first category of methods uses the LR image as an additional condition for the denoiser 
% wang2024exploiting
SR methods, which are based on DMs \citep{pmlr-v37-sohl-dickstein15,song2019generative,ho2020denoising}, can be categorized by how they utilize the LR image.
The first category uses the LR image as a condition for the denoiser
\citep{choi2021ilvr,rombach2022highresolution,saharia2023image,pmlr-v202-luo23b}. Another line of work argues that the Gaussian prior requires large NFE and is suboptimal for SR, where the LR image already provides structural information. Following this motivation, the second category starts the denoising in the noised LR image \citep{yue2023resshift,pmlr-v202-liu23ai}. Its representative method, ResShift \citep{yue2023resshift}, has two advantages: \textbf{(1)} it achieves high perceptual results for blind Real-ISR using only 15 NFE and operates in the latent space of an autoencoder \citep{esser2021taming}, while I2SB \citep{pmlr-v202-liu23ai} considers only simple degradations and requires hundreds of NFEs for denoising in the pixel space; \textbf{(2)} compared to LDM \citep{rombach2022highresolution}, it has $\times$2-4 faster inference with 15 NFE only and better perception-distortion trade.
% Despite achieving promising and perceptually good results, both approaches inherit the Markov chain of diffusion models for image generation like DDPM \citep{ho2020denoising} and result in a big NFE and inference time.
% \noindent \textbf{Acceleration of diffusion-based SR models}. To further reduce the inference time for ResShift, SinSR \citep{wang2024sinsr} suggested a knowledge distillation for the diffusion process of ResShift. Combined with a novel consistency-preserving loss for utilizing ground-truth data during training, SinSR achieved similar or even superior performance compared to the teacher ResShift diffusion model for the blind real-world SR using only 1 NFE.

% luo2023diffinstruct huang2024flow
\noindent \textbf{Acceleration of diffusion SR models}. Despite superior generative performance \citep{dhariwal2021diffusion}, DMs suffer from slow inference. To mitigate this issue, various acceleration techniques have been proposed, with distillation among the most effective. These methods have also been extended to diffusion SR models.
% To improve the efficiency of ResShift, SinSR \citep{wang2024sinsr} applied knowledge distillation to its diffusion process. With the consistency-preserving loss that uses ground-truth data during training, SinSR achieved results comparable to the teacher ResShift with only 1 NFE.
SinSR \citep{wang2024sinsr} applied knowledge distillation to ResShift, achieving performance comparable to the teacher with only 1 NFE with consistency-preserving supervised loss. In our work, we draw inspiration from distillation methods that involve training an auxiliary "fake" model \citep{yin2024improved, huang2024flow, pmlr-v235-zhou24x, gushchin2025inverse}. We give a detailed discussion of the relation of our method to these approaches in Appendix \ref{app:vsd}. CTMSR \citep{you2025consistency} proposed a 1-step distillation-free method, which is based on recent advances in consistency training \citep{pmlr-v202-song23a,song2024improved} and used the ResShift architecture.

\noindent \textbf{T2I-based SR models}. Recent studies \cite{wu2024onestep,wu2024seesr,dong2025tsdsr} show that ResShift and SinSR underperform in perceptual metrics and may synthesize non-realistic structures compared to SR models, which apply pre-trained T2I models. This limitation can be explained by the restricted generalization due to the lack of large-scale SR training data.
% However, as pointed out in recent works \cite{wu2024onestep,wu2024seesr,dong2025tsdsr}, ResShift and SinSR show worse perceptual metrics and may fail to synthesize realistic structures when compared with other models, which exploit pre-trained T2I DMs for the blind Real-ISR problem.
% The reason for this is the limited generalization of these models, which is constrained by the absence of large-scale data for training.
In contrast, T2I models were trained on billions of natural image-text pairs and became the natural choice for Real-ISR applications. To adapt T2I models for the SR problem, such methods have two components: 
\textbf{(1)} LR conditioning with controllers such as LoRA layers (OSEDiff \citep{wu2024onestep}, PiSA-SR \citep{sun2025pisasr}), ControlNet \citep{zhang2023adding} (SeeSR \citep{wu2024seesr}, DiffBIR \citep{lin2024diffbir}, SUPIR \citep{yu2024scaling}) or other modules (StableSR \citep{wang2024exploiting}, PASD \citep{yang2024pixelaware}); \textbf{(2)} prompts for LR images are used as predefined (StableSR, DiffBIR, TSD-SR \citep{dong2025tsdsr}) or extracted with additional models (SeeSR, SUPIR, PASD).

% \textbf{(2)} prompts for LR images are used as predefined (StableSR, DiffBIR, TSD-SR \citep{dong2025tsdsr}) or extracted with additional models such as DAPE \citep{wu2024seesr} (SeeSR, OSEDiff), LLaVA \citep{liu2023visual} (SUPIR), or BLIP \citep{pmlr-v202-li23q} (PASD).

% \citep{wang2024exploiting,lin2024diffbir,yang2024pixelaware,wu2024seesr,yu2024scaling} dao2024swiftbrush
% due to the high dependence on noise initialization for the start of the denoising process
% InvSR \citep{yue2025arbitrarysteps} proposed a diffusion inversion technique for Real-ISR problems, which supports an arbitrary number of NFE for the trained T2I-based SR model. 
\noindent \textbf{Acceleration of T2I-based SR models}. The most notable challenge in T2I-based Real-ISR models is the high computational cost, as many of them require tens or hundreds of NFE (StableSR, DiffBIR, PASD, SeeSR, SUPIR). Recent one-step diffusion distillation methods utilize variational score distillation (VSD) \citep{wang2023prolificdreamer,yin2024onestep} (OSEDiff 
\citep{wu2024onestep}), adversarial diffusion distillation (ADD) \citep{sauer2024adversarial} (AddSR \citep{xie2024addsr}), or target score distillation (TSD-SR \citep{dong2025tsdsr}). These methods significantly accelerate the inference of T2I-based SR models but do not solve the problem of large T2I architectures with billion parameters. 
To decrease the size of T2I-based Real-ISR models, AdcSR \citep{chen2025adversarial} proposed the knowledge distillation method applied to OSEDiff, which is based on adversarial training of the compressed student network by removing and pruning teacher modules. 
The second challenge is the prediction instability depending on the noise realization, which can lead to poor fidelity and unfaithful details \citep{sun2023ccsr}. PiSA-SR \citep{sun2025pisasr} proposed the T2I-based SR model, which can adjust the perception-distortion trade-off \citep{blau2018perception} during inference without re-training. % ResShift does not have such instability due to the strong conditioning on the LR at the start of the denoising process, which is different from the poorly tractable conditioning on the LR image in those methods utilizing pre-trained T2I models with adaptors.

\section{Method}
\label{sec:method}
% \vspace{0mm}
% \vspace{-2mm}

We start by recalling the ResShift formulation in \wasyparagraph \ref{subsec:resshift}. Then, we propose our method for distillation of the ResShift teacher in a one-step generator and derive its computationally tractable form in \wasyparagraph \ref{subsec:residual_shifting_distillation}. We expand the method to the multistep training in \wasyparagraph \ref{subsec:multistep} and add additional supervised losses in \wasyparagraph \ref{subsec:gt_losses}. We then formulate the final objective for our RSD method in \wasyparagraph \ref{subsec:putting}. We discuss the novelty of RSD in relation to existing distillation Real-ISR methods in \wasyparagraph \ref{subsec:difference}. 
% \vspace{0.5mm}

%  \vspace{-1mm}
\noindent \textbf{Remark.} While we derive our distillation method for ResShift, we note that ResShift is essentially a conditional DDPM \citep{ho2020denoising}, where the forward process ends in a Gaussian centered at the LR image. RSD can be \underline{generalized} for any DMs built on DDPM (Appendix~\ref{app:generalization_rsd}).

%\vspace{1mm}\hspace{-4.5mm}\fbox{%
%    \parbox{0.98\linewidth}{
%    \centering@
%    In our work, we use the standard practice of training and distilling models in the latest space of autoencoder while at the same time using additional losses operating in pixel space. To efficiently operate with both latent and pixel spaces we denote by $x$ and $y$ variables in pixel space and by $z$ variables in latent space.
%    }
% }
% \begin{tcolorbox}[colback=gray!20, colframe=gray!20, arc=2mm, boxrule=0pt, width=1\linewidth, boxsep=-1pt]
% In our work, we use the standard practice of training and distilling models in the latest space of autoencoder while at the same time using additional losses operating in pixel space. To efficiently operate with both latent and pixel spaces we denote by $x$ and $y$ variables in pixel space and by $z$ variables in latent space.
% \end{tcolorbox}
% \vspace{-2mm}
% \vspace{0mm}
\subsection{Background}
\label{subsec:resshift}
% \vspace{-2mm}
% \vspace{-1mm}
As part of diffusion models, ResShift can be described by specifying the forward (noising) process, the parameterization of the reverse (denoising) process, and the objective for training the reverse process.

% \vspace{-1mm}
\noindent \textbf{Forward process}. Consider a pair of $(\text{LR}, \text{HR})$ images ${(y_0, x_0) \!\sim\! p_{\text{data}}(y_0, x_0)}$.
% $y_{0} \sim q(y_0)$ and HR image $x_{0} \sim q(x_0)$ from corresponding distributions of LR and HR images. 
For a residual ${e_{0} \!=\! y_{0} \!-\! x_{0}}$, ResShift uses the forward process with the Gaussian kernel: 
% to transit from  $x_{0}$ to $y_{0}$ with the Markov chain $\{x_{t}\}_{t = 1}^{T}$ of length $T$ through the following Gaussian transition distribution:
% \vspace{-2mm}
\begin{equation}
    q(x_{t} | x_{t - 1}, y_{0}) = \mc N ( x_{t} |  x_{t - 1} + \alpha_{t}\ e_{0}, \kappa^{2}\alpha_{t}\m I),
    \label{eq:forward_process_resshift}
    \vspace{0mm}
\end{equation}
where $\alpha_t = \eta_t - \eta_{t-1}$, $\alpha_1 = \eta_1$, and $\{\eta\}_{t=1}^T$ is a schedule, while $\kappa$ is a hyper-parameter controlling the noise variance. The corresponding posterior distribution is given as:
% \vspace{-1mm}
\begin{equation}
   q(x_{t - 1} | x_{t}, \! x_{0}, \! y_{0}) \! = \!
    \mathcal{N}\!\!\left(\! x_{t - 1} \Bigl| \frac{\eta_{t - 1}}{\eta_{t}}x_{t} \! + \! \frac{\alpha_{t}}{\eta_{t}}x_{0},  \beta_{t} \mathbf{I}\right)
    \label{eq:resshift_posterior}
    % \vspace{-3mm}
\end{equation}
\begin{equation}
    \beta_{t} \eqdef \frac{\kappa^{2}\eta_{t - 1}}{\eta_{t}}\alpha_{t}
\end{equation}
The transition distribution \eqref{eq:forward_process_resshift} leads to an analytically tractable marginal distribution of $q(x_{t} | x_{0}, y_{0})$ at any timestep $t$:
\begin{equation}
    q(x_{t} | x_{0}, y_{0}) = \mathcal{N}(x_{t} | x_{0} + \eta_{t}e_{0}, \kappa^{2}\eta_{t} \m I), t \in [1, T],
    \label{eq:forward_process_resshift_integrable}
\end{equation}
\noindent \textbf{Reverse process.}
% ResShift suggests the reverse process in the following parametrized form:
ResShift defines the reverse process as:
% \vspace{-1mm}
\begin{equation}
    \hspace{-2mm} p_{\theta}(x_{0}|y_{0}) = \int p(x_{T}|y_{0})\prod\nolimits_{t=1}^{T}p_{\theta}(x_{t-1}|x_{t},y_{0})dx_{1:T}
    \label{eq:reverse_resshift_parametric}
    % \vspace{-1mm}
\end{equation}
Here $p(x_{T}|y_{0}) = \mathcal{N}(x_{T} | y_{0}, \kappa^{2}\eta_{T} I)$, and $p_{\theta}(x_{t-1}|x_{t},y_{0})$ is a Gaussian reverse transition kernel with parameters $\mu_\theta$ and $\Sigma_\theta$.
% is the reverse transition kernel from $x_{t-1}$ to $x_{t}$ approximated with Gaussian distribution with parameters $\mu_{\theta}$ and $\Sigma_{\theta}$.

% \vspace{-1mm}
\noindent \textbf{Objective.} ResShift sets $\Sigma_{\theta}(x_{t}, y_{0}, t)$ to be independent of $x_{t}$ and $y_{0}$ and reparameterizes $\mu_{\theta}(x_{t}, y_{0}, t)$ as:
% \vspace{-1mm}
\begin{equation}
    \mu_{\theta}(x_{t}, y_{0}, t) = \frac{\eta_{t - 1}}{\eta_{t}}x_{t} + \frac{\alpha_{t}}{\eta_{t}}f_{\theta}(x_{t}, y_{0}, t),
    \vspace{-1mm}
    \label{eq:mu_resshift_parametric}
\end{equation}

% \vspace{-2mm}
where $f_{\theta}$ is a neural network that predicts $x_0$. The training objective of ResShift is as follows:
% \vspace{-1mm}
\begin{equation}
\min_{\theta}\sum\nolimits_{t = 1}^{T}\mathbb{E}_{p(x_{0}, y_{0}, x_{t})}w_{t}\|f_{\theta}(x_{t}, y_{0}, t) - x_{0}\|^{2},
% \vspace{-2mm}
\label{eq:fake-resshift-obj}
\end{equation}
where $w_{t}>0$ and $p(x_0, y_0, x_t)$ are given by the ResShift forward process. We provide other details in Appendix \ref{app:appendix_resshift}.
% More detailed information on \underline{ResShift} can be found in Appendix \ref{app:appendix_resshift}.

\begin{figure*}[!t]
    % \vspace{-4mm}
    % \vspace{-1mm}
    \centering
    \includegraphics[width=0.85\linewidth]{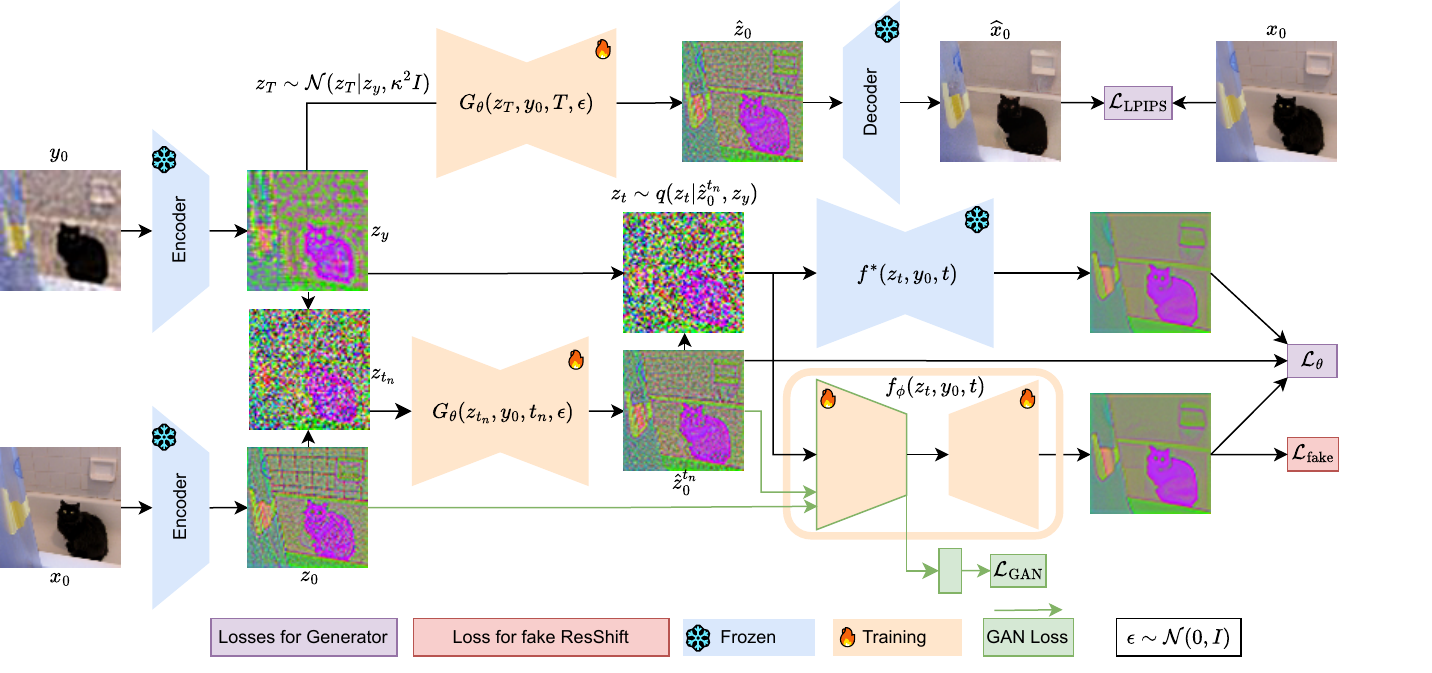}
    % \vspace{-2mm}
    % \vspace{-2mm}
    \caption{\textbf{The training framework of RSD.} First, we encode the (LR, HR) pair $(y_0, x_0)$ into latents \((z_y, z_0)\). Then, we obtain \(z_{t_n}\) via the forward process \eqref{eq:forward_process_resshift_integrable}, generate \(\widehat{z}_0^{t_n}\) and sample \(z_t\) using sampling \eqref{eq:forward_process_resshift_integrable}. We process \(z_t\) using both the fake and teacher ResShift models and compute the distillation losses \(\mathcal{L}_{\theta}\) and \(\mathcal{L}_{\text{fake}}\) from Proposition~\ref{prop:main-proposition}. For \(\mathcal{L}_{\text{LPIPS}}\), we sample \(z_T\) from  \(z_y\), generate \(\widehat{z}_0\) from timestep \(T\) and decode it back to obtain \(\widehat{x}_0\) in pixel space. For \(\mathcal{L}_{\text{GAN}}\), we use encoder features of \(f_{\phi}\) with an additional discriminator head.}
    \label{fig:method}
    % (following procedure of one-step inference)
    % \caption{\red{\textbf{The training framework of RSD.} We start from encoding (LR, HR) pair $(y_0, x_0)$ to the latent $(z_y, z_0))$. First, to calculate LPIPS loss $\mathcal{L}_{\text{LPIPS}}$, we use $z_y$ to sample $z_T$ and generate output $\widehat{z}_0$ from timestep $T$ (same procedure as one-step inference) followed by decoding it back to pixel space to get $\widehat{x}_0$. Second, to calculate multistep distillation loss, we get input data $z_{t_n}$ from the forward diffusion process in latent space \eqref{eq:forward_process_resshift} and get output from generator $\widehat{z}_0^{t_n}$. Then we use posterior sampling \eqref{eq:resshift_posterior} to sample $z_t$, process it with fake and teacher ResShift models and calculate distillation losses $\mathcal{L}_{\theta}$ and $\mathcal{L}_{\text{fake}}$ from Proposition~\ref{prop:main-proposition}. Finally, to calculate GAN loss $\mathcal{L}_{\text{GAN}}$, we use features from the encoder part of the fake U-net model $f_{\phi}$ and additional discriminator head.}}
    % \vspace{-5mm} 
    % \vspace{-3mm} 
\end{figure*}

% \noindent \textbf{Loss objective.} Teacher model $f^*(x_t, y_0, t)$ is learned by minimizing of the following objective:
%\begin{gather}
%    \min_{\theta} \Bigl[ \mathbb{E}_{p(x_0, x_{t}, y_0)} \sum_t w_t \| f_{\theta}(x_t, y_0, t) - x_0 \|_2^2 \Bigr].
%    \label{eq:fake-resshift-obj}
% \end{gather}
% \vspace{-1mm}
% \vspace{-3mm}
\subsection{Residual Shifting Distillation (RSD)}
\label{subsec:residual_shifting_distillation}
% \vspace{-1mm}
% \vspace{-1mm}
We distill the ResShift \textit{teacher} $f^*(x_t, y_0, t)$ into a stochastic one-step \textit{student} generator $G_{\theta}$ mapping $y_0$ to $x_0$, which is parameterized as $\widehat{x}_0 = G_{\theta}(x_T, y_0, \epsilon)$ with $x_T\sim q(x_T|y_0)$ and $\epsilon\sim\mathcal{N}(0,I)$.
% , which maps the LR image $y_0$ to the HR image $x_0$. To achieve it, we parametrize the generator $\widehat{x}_0 = G_{\theta}(x_T, y_0, \epsilon)$ to have three inputs: the LR image $y_0$, its noisy version $x_T \sim q(x_T|y_0)$ and additional noise input $\epsilon \sim \mathcal{N}(\epsilon|0, I)$. 
We denote by $p_{\theta}(\widehat{x}_0|x_T, y_0)$ the distribution of $G_{\theta}$ produced for given $y_0, x_T$ and random $\epsilon$.
% $p_{\theta}(x_0, x_t, y_0) = q(x_t|x_0, y_0)p_{\theta}(x_0|y_0)q(y_0)$ the distribution produced by sa

We train $G_\theta$ so that a \textbf{ResShift model $f_{G_\theta}$ trained on its outputs matches the teacher $f^{*}$}, assuming $f_{G_\theta}\approx f^{*}$ implies that the generated and real (LR, HR) distributions match:
% Then, \textbf{we force the generator to produce such data $p_{\theta}(\widehat{x}_0|y_0)$, that ResShift trained on it will coincide with the teacher model} $f^*(x_t, y_0, t)$. We assume that if the ResShift $f_{G_{\theta}}$, which was learned on student outputs $G_{\theta}$, and the teacher model $f^{*}$, which was learned on real data, coincide, then generator outputs and datasets data come from the same distributions:
% \vspace{-2mm}
\begin{equation}
    f_{G_{\theta}} \approx f^{*} \Rightarrow p_{\theta}(y_{0}, x_{0}) \approx p_{\text{data}}(y_{0}, x_{0}) \label{eq:assumption_statement} % \vspace{0mm}
\end{equation}
We show that this assumption holds under mild conditions in Appendix \ref{app:proofs}. Based on this assumption, we align student $G_{\theta}$ by producing data from the same distribution of (LR, HR) pairs as those in the training datasets for the teacher network $f^{*}$:
% \vspace{-1mm}
\begin{align}
% \vspace{-3mm}
    & \min_{\theta} \mathcal{L}_{\theta}, \nonumber \\
   &\mathcal{L}_{\theta} \eqdef  \sum\limits_{t=1}^{T} w_t \underset{p_{\theta}(\widehat{x}_0, y_0, x_t)}{\mathbb{E}}\| f_{G_{\theta}}(x_t, y_0, t) - f^*(x_t, y_0, t) \|_2^2  \label{eq:main_loss}
\end{align}
where $p_{\theta}(\widehat{x}_0, y_0, x_t)$ is induced by $\widehat{x}_0 = G_{\theta}(x_T, y_0, \epsilon)$, and the posterior $q(x_t|\widehat{x}_0, y_0)$ \eqref{eq:forward_process_resshift_integrable}. In turn, $f_{G_{\theta}}(x_t, y_0, t)$ is the ResShift trained on the generator data $p_{\theta}(\widehat{x}_0|y_0)$. $\nabla_{\theta} \mathcal{L}_{\theta}$ includes  $\nabla_{\theta}f_{G_{\theta}}(x_t, y_0, t)$, which is not tractable since backpropagation through the whole learning of the ResShift $f_{G_{\theta}}(x_{t}, y_0, t)$ is computationally infeasible, as we show in Appendix \ref{app:proofs}). To solve the problem, we propose an equivalent expression of $\mathcal{L}_{\theta}$, which can be used for its evaluation:
\begin{proposition}\label{prop:main-proposition}
Given a teacher model $f^*$, loss in Equation \eqref{eq:main_loss} can be evaluated in a tractable form:
% \vspace{-1mm}
\begin{align}
    & \hspace{-2mm} \mathcal{L}_{\theta} \! = - \min_{\phi} \Big\{\sum\nolimits_{t} w_t \mathbb{E}_{p_{\theta}(\widehat{x}_0, y_0, x_t)} \Big( - \| f^*(x_t, y_0, t)\|_2^2 +
    \nonumber
    \\
    & \!\!\!\!\!\! \hspace{0mm} \underbrace{\| f_{\phi}(x_t, y_0, t)\|_2^2 - 2 \langle f_{\phi}(x_t, y_0, t) \!- f^*\!(x_t, y_0, t), \widehat{x}_0  \rangle}_{\text{\tiny{This objective $\mathcal{L}_{\text{fake}}$ is equivalent to training a fake model $f_{\phi}$ with objective \eqref{eq:fake-resshift-obj}}}}\Big)\hspace{-1mm}\Big\}
    \label{eq:tractable_objective}
\end{align}
Here, $f_{\phi}$ is an additional ResShift trained to optimize  $\mathcal{L}_{\text{fake}}$ in Equation \eqref{eq:tractable_objective} for the estimation of $\mathcal{L}_{\theta}$. Furthermore, minimizing the loss in Equation \eqref{eq:tractable_objective} over \( \phi \) is equivalent to training a "fake" ResShift using data generated by \( G_\theta \).
\end{proposition}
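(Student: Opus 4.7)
The plan is to treat the auxiliary network $f_\phi$ as a surrogate for the inaccessible "data-dependent" ResShift $f_{G_\theta}$ that sits inside $\mathcal{L}_\theta$. The key fact is that the ResShift training objective \eqref{eq:fake-resshift-obj}, when applied to samples drawn from $p_\theta(\widehat{x}_0, y_0, x_t)$, is an ordinary $L^2$ regression whose population minimizer is the conditional mean $\mathbb{E}_{p_\theta}[\widehat{x}_0 \mid x_t, y_0]$, which coincides with $f_{G_\theta}$ by definition. So I expect the inner minimum over $\phi$ on the RHS of \eqref{eq:tractable_objective} to be attained at $f_\phi = f_{G_\theta}$, and the rest is completion of squares plus the tower property.

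First I would isolate the $\phi$-dependence in the bracket of \eqref{eq:tractable_objective} by rewriting
\begin{equation*}
\|f_\phi\|^2 - 2\langle f_\phi, \widehat{x}_0\rangle = \|f_\phi - \widehat{x}_0\|^2 - \|\widehat{x}_0\|^2.
\end{equation*}
Substituting this back turns the integrand into $\|f_\phi - \widehat{x}_0\|^2$ plus three $\phi$-independent terms $-\|f^*\|^2 - \|\widehat{x}_0\|^2 + 2\langle f^*, \widehat{x}_0\rangle$. The inner minimization therefore reduces to $\min_\phi \mathbb{E}_{p_\theta}\|f_\phi - \widehat{x}_0\|^2$, which is exactly the ResShift objective \eqref{eq:fake-resshift-obj} evaluated on generator data. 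This already establishes the annotation in the proposition that $\mathcal{L}_{\text{fake}}$ trains a fake ResShift; its minimizer is $f_\phi^* = f_{G_\theta}$, with optimal value $\mathbb{E}\|\widehat{x}_0\|^2 - \mathbb{E}\|f_{G_\theta}\|^2$ by the standard variance-of-residual identity (another tower-property application).

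Next, I would collapse the remaining $\phi$-independent expectations using the tower property under $p_\theta(\widehat{x}_0 \mid x_t, y_0)$. Since $f^*(x_t, y_0, t)$ is $(x_t, y_0)$-measurable, $\mathbb{E}\langle f^*, \widehat{x}_0 \rangle = \mathbb{E}\langle f^*, f_{G_\theta}\rangle$. After substituting this and the optimal value above into the bracket, the $\pm\mathbb{E}\|\widehat{x}_0\|^2$ cancel and the per-timestep residue becomes $-\mathbb{E}\|f^*\|^2 + 2\mathbb{E}\langle f^*, f_{G_\theta}\rangle - \mathbb{E}\|f_{G_\theta}\|^2 = -\mathbb{E}\|f^* - f_{G_\theta}\|^2$. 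Flipping the overall sign from the leading "$-\min_\phi$" and reinstating $\sum_t w_t$ recovers \eqref{eq:main_loss} exactly.

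The only delicate point, and the natural place to flag an assumption, is realizability: equating $\arg\min_\phi \mathbb{E}\|f_\phi - \widehat{x}_0\|^2$ with the conditional mean implicitly requires the network family $\{f_\phi\}$ to be expressive enough to represent $\mathbb{E}_{p_\theta}[\widehat{x}_0 \mid x_t, y_0]$ — a standard nonparametric idealization also adopted in VSD/DMD-style distillation analyses. Modulo that caveat, the argument is a single completion-of-squares computation combined with one application of the tower property, so I do not foresee a technical obstacle.
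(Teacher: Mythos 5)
Your proposal is correct and follows essentially the same route as the paper's own proof: both arguments isolate the $\phi$-dependent terms, identify the inner minimization as the ResShift regression whose population minimizer is $f_{G_\theta}(x_t,y_0,t)=\mathbb{E}_{p_\theta(\widehat{x}_0\mid x_t,y_0)}[\widehat{x}_0]$, push the remaining $\widehat{x}_0$-expectations through the tower property, and finish by completing the square to recover $\|f^*-f_{G_\theta}\|_2^2$. The one genuine addition in your write-up is the explicit realizability caveat (the network family must contain the conditional mean), which the paper leaves implicit; that is a worthwhile remark but does not change the substance of the argument.
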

% \vspace{-1mm}
\noindent Thus, we solve the intractable gradient problem in Equation \eqref{eq:main_loss} by incorporating the fake ResShift model training into $\mathcal{L}_{\theta}$ \eqref{eq:main_loss}. The \underline{proof} of {Proposition~\ref{prop:main-proposition}} is in Appendix \ref{app:proofs}.
% In Appendix~\ref{app:vsd_only}, we also compare our method with the VSD used in \underline{OSEDiff}. We provide an additional point of view on the motivation for the proposed RSD loss \eqref{eq:main_loss} in Appendix \ref{app:vsd_only}, showing that the RSD loss \eqref{eq:main_loss} is equivalent to:
In Appendix \ref{app:vsd_only}, we compare the RSD loss and the VSD objective used in OSEDiff and show that $\mathcal{L}_{\theta}$ is equal to:
% \vspace{-1mm}
\begin{equation}
   \mathcal{L}_{\theta} = \mathbb{E}_{p(y_0)}\mathcal{D}_{\text{KL}}\left(p(x_{0:T}|y_0)\,\|\,p^*(x_{0:T}|y_0)\right)
   \label{eq:RSD_KL_loss}
\end{equation}
% \david{link to the difference with VSD}

% \vspace{-1mm}
% \vspace{-2mm}
\subsection{Multistep RSD training}
\label{subsec:multistep}
% \vspace{-1mm}
% \vspace{-1mm}
%pmlr-v202-song23a
To further improve the quality of RSD, we consider multistep  generator training \cite{yin2024improved,pmlr-v235-zhou24x}. We fix a subset of $N$ timesteps $1 < t_1 < \dots < t_N = T$ and append additional time conditioning to $G_{\theta}(x_t, t, y_0, \epsilon)$. We denote $\widehat{x}_0^{t_n}$ as an output of $G_{\theta}(x_t, t, y_0, \epsilon)$ for timestep $t =t_{n}$.
In this setup, the generator $G_{\theta}$ approximates the distributions $p_{\theta}(\widehat{x}_0|x_{t_n}, y_0) \approx q(x_0|x_{t_n}, y_0)$ for all $t_{n}$ from the set instead of only $t=T$.
% instead of only approximating the distribution $p_{\theta}(\widehat{x}_0|x_{T}, y_0) \approx q(x_0|x_T, y_0)$ in one-step training.
We generate training input data $q(x_{t_n}|x_0, y_0)$ using ground-truth pairs $p_{\text{data}}(x_0, y_0)$ and the forward marginal \eqref{eq:forward_process_resshift_integrable}, and train $G_{\theta}$ jointly over all $t_n$ using Proposition~\ref{prop:main-proposition}.
% Then, we use the objective from Proposition~\ref{prop:main-proposition} to train the generator for all $t_{n}$ simultaneously.
Inference remains a single-step for speed; multistep training improves robustness and performance (Table \ref{tab:nfe_ablation}).
% In inference, we use a single sampling step to minimize time. This strategy shows better results than one-step training since training across multiple time steps appears to help the network learn more robust mappings (see Table \ref{tab:nfe_ablation}).
For consistency, we denote the output of the single-step network at the timestep $T$ by $\widehat{x}_0$.

% \vspace{-1mm}
% \subsection{Multistep RSD training}
% \label{subsec:multistep}
% \vspace{-1mm}
% To further improve the quality of images produced by our method, we consider the multistep training of the generator, following previous diffusion distillation works \citep{yin2024improved,pmlr-v235-zhou24x,pmlr-v202-song23a}. We fix a subset of $N$ timesteps $1 < t_1 < t_2 < \dots < t_N= T$ and append additional time conditioning for the generator $G_{\theta}(x_t, t, y_0, \epsilon)$. 
% In this setup, the generator $G_{\theta}$ should approximate the distributions 
% \(p_{\theta}(\widehat{x}_0 \mid x_{t_n}, y_0) \approx q(x_0 \mid x_{t_n}, y_0)\) 
% for all fixed time steps \(t_n\), rather than focusing solely on 
% \(p_{\theta}(\widehat{x}_0 \mid x_{T}, y_0) \approx q(x_0 \mid x_T, y_0)\) 
% as in single-step training. For multistep training, we generate input data 
% \(q(x_{t_n} \mid y_0)\) 
% using the ground-truth data distribution 
% \(p(x_0 \mid y_0)\) 
% of LR and HR images and the posterior distribution \eqref{eq:resshift_posterior}, then train \(G_\theta\) on all selected timesteps simultaneously using the objective from Proposition~\ref{prop:main-proposition}.

% \noindent \textcolor{red}{\textbf{Single-step inference.} We use a single sampling step at inference to maximize inference speed. We denote this single-step output of the network at the timestep \(T\) by \(\widehat{x}_0\). We use such a scheme since we observe that single-step inference with single-step training leads to worse performance.}

% \vspace{-1mm}
% \vspace{-1mm}
\subsection{Supervised losses}
\label{subsec:gt_losses}
% \begin{tcolorbox}[colback=gray!20, colframe=gray!20, arc=2mm, boxrule=0pt, width=1\linewidth, boxsep=-1pt]
% For clarity, we define $x_0$ to be the high-resolution(HR) ground truth, while $\widehat{x}_0^{t_n}$ denotes the corresponding output predicted by the generator $G_{\theta}(x_{t_n}, t_n, y_0, z).$ For specfic timestep $T$ we will denote output of the network as $\widehat{x}_0$ for simplicity.
% \end{tcolorbox}
% \vspace{-1mm}
% \vspace{-1mm}
Since teacher predictions may be biased by approximation errors in estimating $x_0$, we add supervised losses in RSD.
% In our distillation approach, we rely on the teacher's prediction to guide the solution. However, this approach may yield suboptimal results due to inherent approximation errors in the estimation of $x_0$. To mitigate this issue, we integrate additional losses into the distillation process.

% \vspace{-1mm}
\noindent \textbf{LPIPS loss.}
Inspired by OSEDiff, we use LPIPS ($\mathcal{L}_{\mathrm{LPIPS}}$ \citep{zhang2018unreasonable}) to compare student output with HR ground truth in perceptual feature space, improving the recovery of textures and structural details beyond teacher guidance. Although OSEDiff also used MSE loss for better fidelity alignment, we found that it did not help in our setup.
% While OSEDiff also uses MSE for fidelity, it did not help in our setup.
% Inspired by OSEDiff, we used LPIPS loss in our approach. By employing LPIPS loss ($\mathcal{L}_{\mathrm{LPIPS}}$ \citep{zhang2018unreasonable}), we enable the student to directly compare its output with the HR ground truth in terms of perceptual features. This comparison helps the network to recover essential textures and structural details that might be missed when relying on the teacher's guidance. Although OSEDiff also used MSE loss for better fidelity alignment, we found that it did not help in our setup.

% \vspace{-1mm}
% Although previous works \citep{yin2024improved, xu2024ufogen}
% thereby yielding overall superior image quality
\noindent \textbf{GAN loss.} 
Inspired by DMD2 \citep{yin2024improved}, we add a GAN loss to better match the HR distribution, using a small discriminator head on features from the fake ResShift bottleneck (Figure \ref{fig:method}). Unlike DMD2, which compares the marginals of noised data and generator outputs, we find it more effective to compare $p_{\text{data}}(x_0|y_0)$ with $p_\theta(\widehat{x}_0^{t_n}|y_0)$ at each $t_n$:
% In line with DMD2 \citep{yin2024improved}, we integrate a GAN loss into our framework. Incorporating the GAN loss enhances the student model's capacity to align its predictions with the distribution of HR images. Our minimalist design - adding a classification branch to the bottleneck of the fake ResShift (see Figure \ref{fig:method}) - mirrors DMD2.  Although DMD2 implemented GAN loss for comparing marginal distributions of noised data and generator output, we notice that using GAN loss to compare data distribution $p_{\text{data}}(x_0|y_0)$ with generator distribution $p_{\theta}(\widehat{x}_0^{t_n}|y_0)$ at each $t_n$ is more effective:
% \vspace{-1mm}
% \vspace{-1mm}
\begin{align}
 & \mathcal{L}_{\text{GAN}} 
= \nonumber \\
& \max_{D}\left[\underset{p_{\mathrm{data}}(x_0|y_0)}{\mathbb{E}} \log D\bigl(x_0|y_0\bigr) - \hspace{-3mm} \underset{p_{\mathrm{\theta}}(\widehat{x}_0^{t_n}| y_0)}{\mathbb{E}}\log D\bigl(\widehat{x}_0^{t_n}|y_0\bigr)\right]
   \label{eq:gan loss}
\end{align}

% \vspace{-2mm}
% \vspace{-2mm}
\subsection{Putting everything together}
\label{subsec:putting}

% \vspace{-1mm}
% \vspace{-1mm}
\textbf{Translation into a latent space.}
Although described in the image space ($x$), ResShift was trained in the latent space ($z$). We therefore compute $\mathcal{L}_{\theta}$ and $\mathcal{L}_{\text{GAN}}$ in the latent space to avoid extra encode/decode, while keeping $\mathcal{L}_{\text{LPIPS}}$ in the image space since the LPIPS network was trained there.
% Thus far, we assumed that losses operate in the image space (denoted as $x$), although the ResShift was originally trained in the latent space (denoted as $z$). 
% Following the hypothesis of OSEDiff\citepp[Section 3.2 VSD in Latent Space]{wu2024onestep}, the VAE’s Encoder and Decoder satisfy $E(x) = E(D(z)) \approx z$. Therefore, our theory also holds in the latent space. 
% We also move our losses to the latent space, eliminating  latent encoding and decoding when computing losses. Specifically, we calculate the distillation loss ($\mathcal{L}_{\theta}$) and GAN loss ($\mathcal{L}_{\text{GAN}}$) in the latent space, while the LPIPS loss ($\mathcal{L}_{\text{LPIPS}}$) remains in the image space, as the LPIPS network was originally trained there.

% \vspace{-1mm}
\noindent \textbf{Final algorithm.} The final loss function for each $t_n$ is:
% \vspace{-1mm}
% \vspace{-2mm}
\begin{equation}
    \! \mathcal{L}_{\theta} + \lambda_1\mathcal{L}_{\mathrm{LPIPS}} + \lambda_2\mathcal{L}_{\text{GAN}}  \!
    \label{eq:final loss}
    % \vspace{-1mm}
    % \vspace{-4mm}
\end{equation}

% \vspace{-1mm}
% \vspace{0mm}
\noindent The complete RSD \underline{algorithm} with the respective notation is given in Appendix \ref{app:algorithm_details}, with an illustration in Figure \ref{fig:method}.
% The full RSD \underline{algorithm} is given in Appendix \ref{app:algorithm_details} (Figure \ref{fig:method}).

% \vspace{-1mm}
\subsection{Difference between RSD, VSD and ADD}\label{subsec:difference}
% \vspace{-2mm}
% We note that the proposed RSD loss \eqref{eq:main_loss} differs from VSD and ADD distillation losses.
We note that the proposed RSD loss \eqref{eq:main_loss} differs from the VSD loss. Specifically, we show that our method is different from VSD conceptually and computationally, and discuss the relation of the RSD loss \eqref{eq:main_loss} with the VSD loss (see Eq. 5 in VSD \citep{wang2023prolificdreamer}) in Appendix \ref{app:vsd_only}. The key difference is that the VSD loss (Eq. \eqref{eq:VSD KL loss}) aligns the marginal distributions at each timestep $t$ between the teacher’s and fake’s distributions, while the RSD loss \eqref{eq:RSD_KL_loss} matches the joint distribution across all $t$. In Section \ref{subsec:experimental_results}, we discuss the results of RSD and VSD for the SR. In Appendix \ref{app:addsr}, we discuss the difference between ADD \citep{sauer2024adversarial}, its SR extension, AddSR \citep{xie2024addsr}, and RSD.

\begin{table*}[!ht]
% \vspace{-2mm} 
% \vspace{0mm} 
% \captionsetup{font=footnotesize, labelfont=footnotesize}
    \centering
    % \vspace{0mm} 
    \caption{\footnotesize{Results on real-world RealSR and RealSet65 datasets. The best and second best results are highlighted in \textbf{bold} and \underline{underline}.}\label{tab:benchmarks_full_images}}
    %\vspace{-1mm}
    \renewcommand{\arraystretch}{1.2}
    \setlength{\tabcolsep}{6pt}
    \resizebox{\linewidth}{!}{
    \lcl{ % Light blue background for the entire table environment
    \begin{tabular}{l|c|c|ccccc|cc}
        \hline
        \multirow{3}{*}{Methods} & \multirow{3}{*}{T2I prior} & \multirow{3}{*}{NFE} & \multicolumn{7}{c}{Datasets}\\
        \cline{4-10}
       & & & \multicolumn{5}{c|}{\textit{RealSR}} & \multicolumn{2}{c}{\textit{RealSet65}} \\
        \cline{4-10}
       & &  & PSNR$\uparrow$ & SSIM$\uparrow$ & LPIPS$\downarrow$ & CLIPIQA$\uparrow$ & MUSIQ$\uparrow$ & CLIPIQA$\uparrow$ & MUSIQ$\uparrow$ \\
        \hline
        SUPIR & \multirow{5}{*}{\begin{tabular}{c} yes, \\ $>450$M params \end{tabular}} & 50 & 24.38 & 0.698 & 0.331 & 0.5449 & 63.676 & 0.6133 & 66.460 \\
        OSEDiff & & 1 & 25.25 & 0.737 & 0.299 & 0.6772 & 67.602 & 0.6836 & 68.853 \\
        AdcSR & & 1 & 25.63 & 0.735 & 0.300 & 0.7033 & 67.550 & 0.7044 & 69.185 \\
        PiSA-SR & & 1 & 25.59 & 0.750 & \textbf{0.271} & 0.6678 & \underline{67.993} & 0.7062 & \underline{70.208} \\
        TSD-SR & & 1 & 24.88 & 0.723 & 0.281 & 0.7336 & \textbf{69.871} & 0.7263 & \textbf{70.958} \\
        \hline
        ResShift & \multirow{7}{*}{\begin{tabular}{c} no, \\ $<180$M params \end{tabular}} & 15 & \textbf{26.49} & \underline{0.754} & 0.360 & 0.5958 & 59.873 & 0.6537 & 61.330 \\
        CTMSR & & 1 & \underline{26.18} & \textbf{0.765} & 0.294 & 0.6449 & 64.796 & 0.6893 & 67.173 \\
        SinSR (distill only) & & 1 & 26.14 & 0.732 & 0.357 & 0.6119 & 57.118 & 0.6822 & 61.267 \\
        SinSR & & 1 & 25.83 & 0.717 & 0.365 &  0.6887 & 61.582 & 0.7150 & 62.169 \\
        ResShift-VSD (Appendix \ref{app:vsd_only}) &  & 1 & 23.96 & 0.616 & 0.466 & \underline{0.7479} & 63.298 & \textbf{0.7606} & 66.701  \\
        RSD (\textbf{Ours}, distill only) & & 1 & 24.92 & 0.696 & 0.355 & \textbf{0.7518} & 66.430 & \underline{0.7534} & 68.383 \\
        RSD (\textbf{Ours}) &  & 1 & 25.91 & \underline{0.754} & \underline{0.273} & 0.7060 & 65.860 & 0.7267 & 69.172 \\
        \hline
    \end{tabular}
    }
    }
    % \vspace{-2mm} 
% \vspace{-3mm} 
\end{table*}

% \vspace{0mm}
\begin{table*}[!ht]
% \captionsetup{font=footnotesize, labelfont=footnotesize}
    \centering
    %\vspace{-8mm} 
    % \renewcommand{\arraystretch}{1.2}
    % \setlength{\tabcolsep}{6pt}
    % \vspace{0mm}
    \caption{\footnotesize{Results on ImageNet-Test \cite{yue2023resshift}. The best and second best results are highlighted in \textbf{bold} and \underline{underline}.}\label{tab:imagenet-test}}
    % \vspace{-1mm}
    \resizebox{\linewidth}{!}{
    \lcl{
    \begin{tabular}{l|c|c|cccccc}
        \hline
        Methods & T2I prior & NFE & PSNR$\uparrow$ & SSIM$\uparrow$ & LPIPS$\downarrow$ & CLIPIQA$\uparrow$ & MUSIQ$\uparrow$ & FID$\downarrow$ \\ \hline
        SUPIR & \multirow{5}{*}{\begin{tabular}{c} yes, \\ $>450$M params \end{tabular}} & 50 & 22.56 & 0.574 & 0.302 & \textbf{0.786} & 60.487 & 24.70 \\
        OSEDiff &  & 1 & 23.02 & 0.619 & 0.253 & 0.677 & 60.755 & 23.13 \\
        AdcSR & & 1 & 22.99 & 0.615 & 0.252 &  \underline{0.711} & \underline{63.218} & 34.61 \\
        PiSA-SR & & 1 & 24.29 & \underline{0.670} & 0.213 & 0.629 & 62.137 & \textbf{19.34} \\ 
        TSD-SR & & 1 & 23.58 & 0.645 & \underline{0.197} & 0.673 & \textbf{65.299} & \underline{20.55} \\
        \hline
        ResShift & \multirow{7}{*}{\begin{tabular}{c} no, \\ $<180$M params \end{tabular}} & 15 & \textbf{25.01} & \textbf{0.677} & 0.231 & 0.592 & 53.660 & 30.34 \\
        % \hline
        CTMSR & & 1 & \underline{24.73} & 0.666 & \underline{0.197} & 0.691 & 60.142 & 24.19  \\
        SinSR (distill only) & & 1 & 24.69 & 0.664 & 0.222 & 0.607 & 53.316 & 32.13  \\
        SinSR & & 1 & 24.56 & 0.657 & 0.221 & 0.611 & 53.357 & 25.85 \\ 
        
        ResShift-VSD (Appendix \ref{app:vsd_only}) & & 1 & 23.69 & 0.624 & 0.230 & 0.665 & 58.630 & 32.22 \\
        RSD (\textbf{Ours}, distill only) & & 1 & 23.97 & 0.643 & 0.217 & 0.660 & 57.831 & 28.93 \\
        RSD (\textbf{Ours}) & & 1 & 24.31 & 0.657 & \textbf{0.193}  & 0.681 & 58.947 &  25.46 \\
        \hline
    \end{tabular}
    }
    }
    % \vspace{0mm} 
\end{table*}

% \vspace{-2mm}
\section{Experiments}
\label{sec:experiments}
% \vspace{-2mm}
% \vspace{0mm}
In this section, we pursue two main goals: 
\textbf{(1)} to demonstrate that our proposed \textit{distillation} method outperforms existing \textit{distillation} methods \underline{under the same experimental setup}. We chose the ResShift setup to be consistent with our teacher model and SinSR. We show our improvements compared to the current SOTA ResShift distillation method (SinSR), and we also implement and compare with \underline{VSD-based method} applied to ResShift, called \textbf{ResShift-VSD} (see Appendix \ref{app:vsd_only}); \textbf{(2)}, to show that RSD achieves competitive perceptual performance with recent T2I-based SR methods such as OSEDiff and SUPIR while requiring much smaller computational training and inference resources. These goals are supported by evaluations using the SinSR and OSEDiff setups. We present two types of models: RSD (\textbf{Ours}, distill only), where we used only distillation loss during training, and RSD (\textbf{Ours}), where we used additional losses (\wasyparagraph\ref{subsec:gt_losses}). Appendix \ref{app:experiment_details} provides all relevant \underline{experimental details}. 
% For a fair comparison with the SOTA models, we also provide results of very recent diffusion SR models, including CTMSR, AdcSR, PiSA-SR, and TSD-SR.
We also compare RSD with very recent diffusion Real-ISR baselines, including CTMSR, AdcSR, PiSA-SR, and TSD-SR.

% \input{sec/sinsr_tables}

% \vspace{-2mm}
\subsection{Experimental setup}\label{subsec:experimental_setup}
% \vspace{0mm}
% \input{sec/osediff_tables_short}
\noindent \textbf{Training and evaluation details.}
For a fair comparison, we follow the \textit{training setup} of SinSR and ResShift, using $256 \times 256$ HR images randomly cropped from ImageNet \citep{deng2009imagenet} and generating LR images via the Real-ESRGAN degradations \cite{wang2021real} with $\times 4$ SR factor. We also adopt the ResShift teacher used in SinSR.
\textit{For the evaluation}, we follow \underline{two different protocols} from \lcl{SinSR} and \glb{OSEDiff} ($\times 4$ SR factor). Following \lcl{SinSR}, we use the following datasets: (1) for real-world degradations, we use full-size images from RealSR \citep{cai2019toward} and RealSet65 \citep{yue2023resshift}; (2) for synthetic degradations, we use ImageNet-Test \citep{yue2023resshift}. Following \glb{OSEDiff}, we use test sets of $512 \times 512$ HR crops from StableSR \citep{wang2024exploiting}, including synthetic DIV2K-Val \citep{agustsson2017ntire}  and real-world pairs from RealSR and DRealSR \citep{wei2020component}. 

% \vspace{0mm}
\begin{table*}[!ht]
% \vspace{-5mm
% \vspace{-2mm}
% \captionsetup{font=footnotesize, labelfont=footnotesize}
% \vspace{0mm}
    %\caption{\glb{\footnotesize{Quantitative results of models on crops $512 \times 512$ from \cite{wang2024exploiting}. The best and second best results are highlighted in \textbf{bold} and \underline{underline}.}}}\label{tab:benchmarks_crops_short}
    \caption{\footnotesize{Results on crops $512 \times 512$ from StableSR. The best and second best results are highlighted in \textbf{bold} and \underline{underline}.}}\label{tab:benchmarks_crops_short}
    % \vspace{-1mm}
    \renewcommand{\arraystretch}{1.2}
    \setlength{\tabcolsep}{3pt}
    \resizebox{\linewidth}{!}{
    \glb{\begin{tabular}{c|c|c|c|cccccccccc}
        \hline
        Datasets & Methods & T2I prior & NFE & PSNR$\uparrow$ & SSIM$\uparrow$ & LPIPS$\downarrow$ & DISTS$\downarrow$ & NIQE$\downarrow$ & MUSIQ$\uparrow$ & MANIQA$\uparrow$ & CLIPIQA$\uparrow$ & FID$\downarrow$ \\
        \hline
        \multirow{6}{*}{DIV2K-Val} 
        & SUPIR & \multirow{2}{*}{\begin{tabular}{c} yes, \\  $>1.7$B params \end{tabular}} & 50 & 22.13 & 0.5280 & 0.3923 & 0.2314 & 5.6758 & 63.82 & 0.5933 & \textbf{0.7147} & \underline{31.46} \\
        & OSEDiff & & 1 & 23.72 & 0.6108 & \underline{0.2941} & \underline{0.1976} & \textbf{4.7097} & \underline{67.97} & \textbf{0.6148} & 0.6683 & \textbf{26.32} \\
        % & {\color{orange} AdcSR} & & {\color{orange} 1} & {\color{orange} 23.74} & {\color{orange} 0.6017} & {\color{orange} 0.2853} & {\color{orange} 0.1899} & {\color{orange} \underline{4.3579}} & {\color{orange} 68.00} & {\color{orange} 0.6073} & {\color{orange} 0.6764} & {\color{orange} \underline{25.52}} \\
        % & {\color{orange} PiSA-SR} & & {\color{orange} 1} & {\color{orange} 23.87} & {\color{orange} 0.6058} & {\color{orange} \underline{0.2823}} & {\color{orange} \underline{0.1934}} & {\color{orange} 4.5565} & {\color{orange} \underline{69.68}} & {\color{orange} 0.6375} & {\color{orange} 0.6928} & {\color{orange} \textbf{25.09}} \\
        % & {\color{orange} TSD-SR} & & {\color{orange} 1} & {\color{orange} 23.02} & {\color{orange} 0.5808} & {\color{orange} \textbf{0.2673}} & {\color{orange} \textbf{0.1821}} & {\color{orange} \textbf{4.3244}} & {\color{orange} \textbf{71.69}} & {\color{orange} \textbf{0.6192}} & {\color{orange} \textbf{0.7416}} & {\color{orange} 29.16} \\
        \cline{2-13}
        & ResShift & \multirow{4}{*}{\begin{tabular}{c} no, \\ $<180$M params \end{tabular}} & 15 & \underline{24.65} & \underline{0.6181} & 0.3349 & 0.2213 &  6.8212 & 61.09 & 0.5454 & 0.6071 & 36.11 \\
        % \cline{2-12}
        & SinSR & & 1 & 24.41 & 0.6018 & 0.3240 & 0.2066 & 6.0159 & 62.82 & 0.5386 & 0.6471 & 35.57 \\
        & CTMSR & & 1 & \textbf{24.88} & \textbf{0.6265} & 0.3026 & 0.2040 & \underline{5.1146} & 65.62 & 0.5165 &  0.6601 & 34.15 \\
        & RSD (\textbf{Ours}) & & 1 & 23.91 & 0.6042 & \textbf{0.2857} & \textbf{0.1940} & 5.1987 & \textbf{68.05} & \underline{0.5937} & \underline{0.6967} & 34.84 \\
        % & RSD-512 (\textbf{Ours}) & & 1 & 23.91 & 0.6042 & 0.2857 & 0.1940 & 5.1987 & 68.05 & 0.5937 & 0.6967 & {\color{orange} 34.84} \\
        %& \textit{Difference} & - & \textcolor{green!50!black}{+0.80\%} & \textcolor{orange}{-1.08\%} & \textcolor{green!50!black}{+2.86\%} & \textcolor{green!50!black}{+1.82\%} & \textcolor{orange}{-10.38\%} & \textcolor{green!50!black}{+0.12\%} & \textcolor{orange}{-3.43\%} & \textcolor{green!50!black}{+4.25\%} \\   
        \hline
        \multirow{6}{*}{DRealSR} & SUPIR & \multirow{2}{*}{\begin{tabular}{c} yes, \\  $>1.7$B params \end{tabular}} & 50 & 24.93 & 0.6360 & 0.4263 & 0.2823 & 7.4336 & 59.39 & 0.5537 & 0.6799 & 164.86  \\
        & OSEDiff & & 1 & 27.92 & \textbf{0.7835} & \textbf{0.2968} & \textbf{0.2165} & 6.4902 & \textbf{64.65} & \textbf{0.5899} & \underline{0.6963} & \textbf{135.30} \\
        % & {\color{orange} AdcSR} & & {\color{orange} 1} & {\color{orange} 28.10} & {\color{orange} 0.7726} & {\color{orange} 0.3046} & {\color{orange} 0.2200} & {\color{orange} 6.4467} & {\color{orange} \underline{66.27}} & {\color{orange} \underline{0.5916}} & {\color{orange} \underline{0.7049}} & {\color{orange} \underline{134.05}} \\
        % & {\color{orange} PiSA-SR} & & {\color{orange} 1} & {\color{orange} 28.32} & {\color{orange} 0.7804} & {\color{orange} \textbf{0.2960}} & {\color{orange} 0.2169} & {\color{orange} \underline{6.1766}} &
        % {\color{orange} 66.11} &{\color{orange} \textbf{0.6161}} & {\color{orange} 0.6968} & {\color{orange} \textbf{130.61}} \\
        % & {\color{orange} TSD-SR} & & {\color{orange} 1} & {\color{orange} 27.77} & {\color{orange} 0.7559} & {\color{orange} \underline{0.2967}} & {\color{orange} \textbf{0.2136}} & {\color{orange} \textbf{5.9131}} & {\color{orange} \textbf{66.62}} & {\color{orange} 0.5874} & {\color{orange} \textbf{0.7343}} & {\color{orange} 134.98} \\
        \cline{2-13}
        & ResShift & \multirow{4}{*}{\begin{tabular}{c}no, \\ $<180$M params \end{tabular}} & 15 & \underline{28.46} & 0.7673 & 0.4006 & 0.2656 & 8.1249 & 50.60 & 0.4586 & 0.5342 & 172.26 \\
        % \cline{2-12}
        & SinSR & & 1 & 28.36 & 0.7515 & 0.3665 & 0.2485 & 6.9907 & 55.33 & 0.4884 & 0.6383 & 170.57 \\
        & CTMSR & & 1 & \textbf{28.65} & \underline{0.7834} & 0.3238 & 0.2358 & \textbf{6.1828} & 59.78 & 0.4861 & 0.6497 & \underline{163.63} \\
        & RSD (\textbf{Ours}) & & 1 & 27.40 & 0.7559 & \underline{0.3042} & \underline{0.2343} & \underline{6.2577} & \underline{62.03} & \underline{0.5625} & \textbf{0.7019} & 167.47  \\
        % & RSD-512 (\textbf{Ours}) & & 1 & 24.23 & 0.6154 & 0.2992 & 0.1940 & 5.1987 & 68.05 & 0.5937 & 0.6967 & {\color{orange} 34.84} \\
         %& \textit{Difference} & - & \textcolor{orange}{-1.86\%} & \textcolor{orange}{-3.52\%} & \textcolor{orange}{-2.49\%} & \textcolor{orange}{-8.22\%} & \textcolor{green!50!black}{+3.58\%} & \textcolor{orange}{-4.05\%} & \textcolor{orange}{-4.64\%} & \textcolor{green!50!black}{+0.80\%} \\

        \hline
        \multirow{6}{*}{RealSR}  
        &  SUPIR & \multirow{2}{*}{\begin{tabular}{c} yes, \\ $>1.7$B params \end{tabular}} & 50 & 23.61 & 0.6606 & 0.3589 & 0.2492 & 5.8877 & 63.21 & 0.5895 & \underline{0.6709} & \underline{128.35} \\
        & OSEDiff & & 1 & 25.15 & 0.7341 & 0.2921 & \textbf{0.2128} & \underline{5.6476} & \textbf{69.09} & \textbf{0.6326} & 0.6693 & \textbf{123.49} \\
        % & {\color{orange} AdcSR} & & {\color{orange} 1} & {\color{orange} 25.47} & {\color{orange} 0.7301} & {\color{orange} 0.2885} & {\color{orange} 0.2128} & {\color{orange} \underline{5.3477}} & {\color{orange} 69.90} & {\color{orange} \underline{0.6353}} & {\color{orange} 0.6730} & {\color{orange} \underline{118.41}} \\
        % & {\color{orange} PiSA-SR} & & {\color{orange} 1} & {\color{orange} 25.50} & {\color{orange} 0.7418} & {\color{orange} \textbf{0.2672}} & {\color{orange} \textbf{0.2044}} & {\color{orange} 5.5046} & {\color{orange} \underline{70.15}} & {\color{orange} \textbf{0.6551}} & {\color{orange} 0.6696} & {\color{orange} 124.09} \\
        % & {\color{orange} TSD-SR} & & {\color{orange} 1} & {\color{orange} 24.81} & {\color{orange} 0.7172} & {\color{orange} \underline{{0.2743}}} & {\color{orange} \underline{0.2105}} & {\color{orange} \textbf{5.1266}} & {\color{orange} \textbf{71.18}} & {\color{orange} 0.6346} & {\color{orange} \textbf{0.7160}} & {\color{orange} \textbf{114.45}} \\
        \cline{2-13}
        & ResShift & \multirow{4}{*}{\begin{tabular}{c}no, \\ $<180$M params \end{tabular}} & 15 & \textbf{26.31} & \underline{0.7421} & 0.3421 & 0.2498 & 7.2365 & 58.43 & 0.5285 & 0.5442 & 141.71 \\
        & SinSR & & 1 & \underline{26.28} & 0.7347 & 0.3188 & 0.2353 & 6.2872 & 60.80 & 0.5385 & 0.6122 & 135.93 \\
        & CTMSR & & 1 & 25.98 & \textbf{0.7546} & \underline{0.2897} & 0.2208 & \textbf{5.5546} & 64.26 & 0.5270 & 0.6318 & 135.35 \\
        & RSD (\textbf{Ours}) & & 1 & 25.61 & 0.7420 & \textbf{0.2675} & \underline{0.2205} & 5.7500 & \underline{66.02} & \underline{0.5930} & \textbf{0.6793} & 138.23							 \\
        % & RSD-512 (\textbf{Ours}) & & 1 & 23.91 & 0.6042 & 0.2857 & 0.1940 & 5.1987 & 68.05 & 0.5937 & 0.6967 & 34.84 \\
       % & \textit{Difference} & - & \textcolor{green!50!black}{+1.83\%} & \textcolor{green!50!black}{+1.08\%} & \textcolor{green!50!black}{+8.42\%} & \textcolor{orange}{-3.62\%} & \textcolor{orange}{-1.81\%} & \textcolor{orange}{-4.44\%} & \textcolor{orange}{-6.26\%} & \textcolor{green!50!black}{+1.49\%} \\

        \hline
        
    \end{tabular}
    }
    }
    % \vspace{-2mm}
    % \vspace{-3mm}
\end{table*}
% \input{sec/osediff_tables_short}
% Following \citep[Table 1 and Table 2]{wang2024sinsr} {\color{orange} and \citep[Table 1]{wu2024onestep}, we incorporate baselines and evaluation setups for real-world and synthetic data from \lcl{SinSR} and \glb{OSEDiff}}. 
% for evaluation setups of \lcl{SinSR}
% \vspace{-1mm}
\noindent \textbf{Compared methods.}
We consider two different experimental setups with different baseline comparisons, which follow \citep[Tables 1 and 2]{wang2024sinsr} and \citep[Table 1]{wu2024onestep}. We compare RSD against \textbf{diffusion} SR models in the main text: models with relatively small architectures (ResShift, SinSR, CTMSR) and recent T2I-based SR models, one-step OSEDiff and multistep SUPIR.  We highlight that closely related models to RSD, such as ResShift, SinSR, and CTMSR, were only compared with \textbf{early T2I-based SR models}, namely LDM \cite{rombach2022highresolution} and StableSR. In addition to OSEDiff and SUPIR, we extend the comparison of diffusion SR methods without T2I prior to the very recent SOTA one-step T2I-based SR methods, namely AdcSR, PiSA-SR, and TSD-SR, on SinSR evaluation datasets. In Appendix \ref{app:full_quantitative_results}, we provide \underline{quantitative results} of other baselines, including GANs for SR, multistep T2I-based SR models, InvSR \cite{yue2025arbitrarysteps}, and CCSR \cite{sun2023ccsr}. 

% LDM, DASR \citep{liang2022efficient}
% For evaluation setups of \glb{OSEDiff}, in Appendix \ref{app:full_quantitative_results} we compare RSD with multistep T2I-based and GAN-based SR methods, InvSR and CCSR.

% (ESRGAN, RealSR-JPEG, Real-ESRGAN, BSRGAN) 
% (Real-ESRGAN, BSRGAN, LDL \citep{liang2022details}, FeMASR \citep{chen2022femasr})

% \vspace{-1mm}
\noindent \textbf{Metrics.}
Each setup employs different evaluation metrics, which we adopt from SinSR and OSEDiff.
% We use the metrics from \lcl{SinSR} (Tables  \ref{tab:benchmarks_full_images}–\ref{tab:imagenet-test}) and \glb{OSEDiff} (Table  \ref{tab:benchmarks_full_images}).
 % Each setup employs different evaluation metrics, which we adopt from \lcl{SinSR} (Table 1 and Table 2) and \glb{OSEDiff} (Table 1).
For the \lcl{SinSR} protocol, we report no-reference CLIPIQA and MUSIQ metrics. For the \glb{OSEDiff} protocol, we report fidelity (PSNR, SSIM), full-reference perceptual metrics (LPIPS, DISTS \citep{ding2020image}), and no-reference metrics (NIQE \citep{zhang2015feature}, MANIQA-PIPAL \citep{yang2022maniqa}, MUSIQ, CLIPIQA).
% For all evaluation setups from \lcl{SinSR}, we compute image-quality no-reference metrics CLIPIQA \citep{wang2023exploring} and MUSIQ \citep{ke2021musiq} following SinSR. In the \glb{OSEDiff} configuration, evaluation is conducted using fidelity metrics (PSNR, SSIM), full-reference perceptual metrics (LPIPS, DISTS \citep{ding2020image}), and no-reference image-quality metrics, (NIQE \citep{zhang2015feature}, MANIQA-PIPAL \citep{yang2022maniqa}, MUSIQ, CLIPIQA). 
PSNR and SSIM are computed on the Y channel in the YCbCr space, which follows SinSR and OSEDiff. We also report the distribution alignment metric (FID \citep{heusel2017gans}) in Tables \ref{tab:imagenet-test}-\ref{tab:benchmarks_crops_short}, since ImageNet-Test and DIV2K-Val have $3$k pairs, while other datasets have $\leq 100$.
% \vspace{-1mm}
% \vspace{-2mm}
\subsection{Experimental results}\label{subsec:experimental_results}
% \vspace{-1mm}
% and visualized in Figure \ref{fig:teaser_summary}
\textbf{Quantitative comparisons}.
The key quantitative results are summarized in \lcl{Table \ref{tab:benchmarks_full_images}}, \lcl{Table \ref{tab:imagenet-test}}, and \glb{Table \ref{tab:benchmarks_crops_short}}. 
We group methods into (i) diffusion SR models with compact architectures (ResShift, SinSR, CTMSR, RSD) and (ii) T2I-based SR models with heavy architectures (SUPIR, OSEDiff, AdcSR, PiSA-SR, TSD-SR). We observe the following:

\begin{figure*}[!ht]
    % \vspace{-2mm}
    % \vspace{0mm}
    \centering
   %  \resizebox{0.98\textwidth}{!}{
   %  % First row of images
   % \includegraphics[width=\linewidth]{images/realset65_results/bike1/rebuttal_main_text_ours_final_image_realset65_bike1_crop_7.pdf}
   %  }
    \resizebox{0.98\textwidth}{!}{
    % First row of images
    \includegraphics[width=\linewidth]{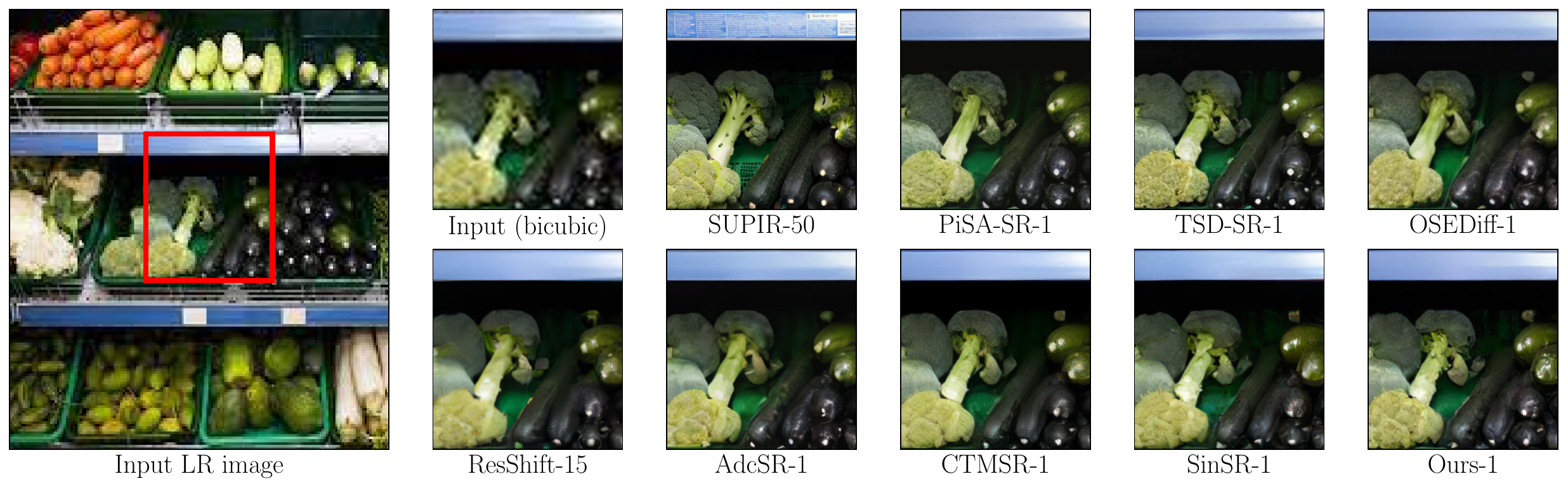}
    }
% \resizebox{0.98\textwidth}{!}{
%     % First row of images
%     \includegraphics[width=\linewidth]{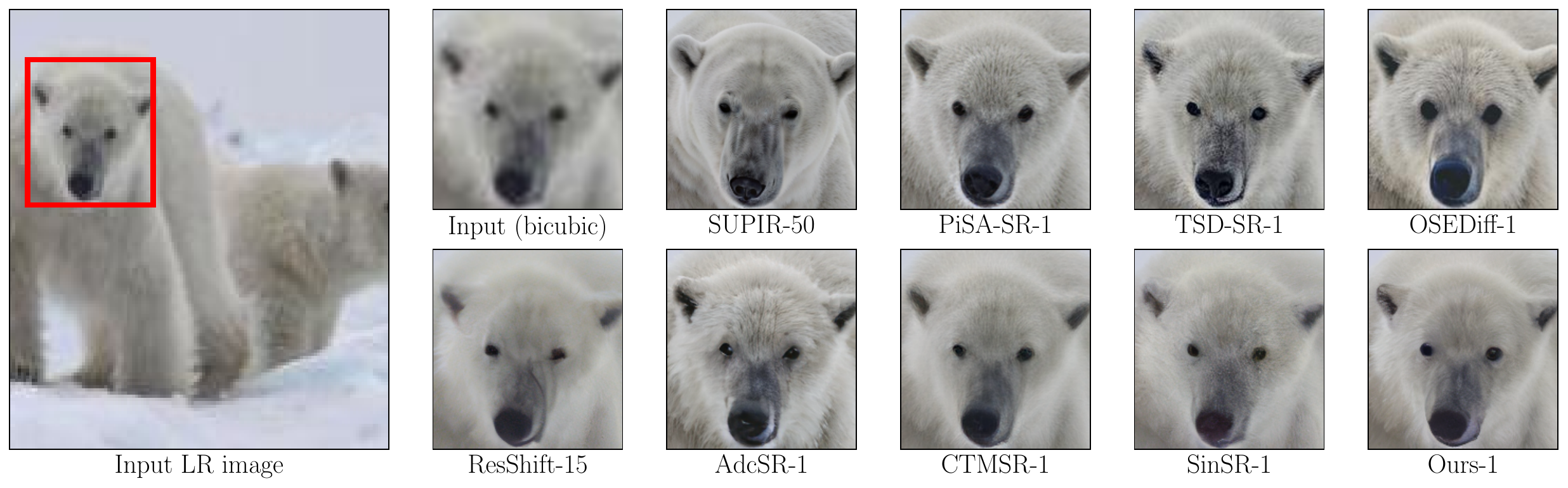}
%     }

    \resizebox{0.98\textwidth}{!}{
    % First row of images
    \includegraphics[width=\linewidth]{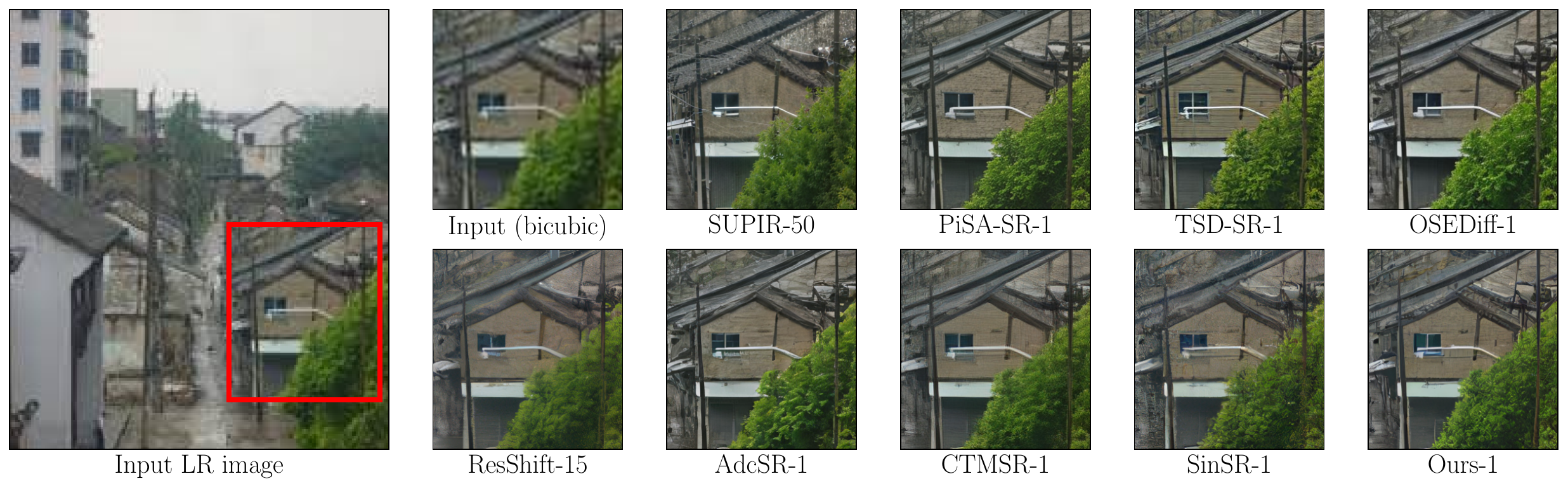}
    }
    % \vspace{-2mm}
    % \vspace{0mm}
    %\captionsetup{font=footnotesize, labelfont=footnotesize}
    \caption{Comparison on RealSet65 \citep{yue2023resshift} for diffusion SR models. Bottom images: ResShift, AdcSR, CTMSR, SinSR, and the proposed RSD. Top images: bicubic LR, SUPIR, PiSA-SR, TSD-SR, and OSEDiff. Please zoom in $\times 5$ times for a better view.}
    \label{fig:comparison}
    % \vspace{0mm} 
\end{figure*}

\begin{table*}[!ht]
    % \vspace{-2mm}
    \centering
    % \vspace{0mm}
    \caption{\footnotesize{Inference complexity (NVIDIA A100, $64 \times 64$ LR, SR factor $\times4$) and training budget. The best values are highlighted in \textbf{bold}.}}. 
    % \vspace{0mm}
    \resizebox{\linewidth}{!}{
    \begin{tabular}{l|ccccc|cccc}
        \hline
        T2I prior & \multicolumn{5}{c|}{Yes} & \multicolumn{4}{c}{No} \\ 
        \hline
        Methods & SUPIR & OSEDiff & AdcSR & PiSA-SR &  TSD-SR & ResShift & SinSR & CTMSR  & RSD (\textbf{Ours}) \\ 
        \hline
        Inference Step (NFE) & 50 & \textbf{1} & \textbf{1} & \textbf{1} & \textbf{1} & 15 & \textbf{1} & \textbf{1} & \textbf{1} \\
        Inference Time (s) & 17.704 & 0.075 & \textbf{0.024} & 0.089 & 0.074 & 0.643 & 0.060 &  0.059 & 0.059 \\ 
        \# Total Param (M) & 4801 & 1775 & 456 &  1290 &  2207 & 174 & 174 & \textbf{172} & 174 \\ 
        Maximum GPU memory (MB) &  52535 & 3651 &  3940 & 4771 & 4611 & 1167 & 570 & 904 & \textbf{539} \\ 
        \hline
        Training time (hours / \# GPU) & 240 / 64 A6000  & 24 / 4 A100 & 124 / 8 A100 & 5.5 / 4 A100 & 96 / 8 V100 & 110 / 1 A100  & 60 / 1 A100 & 58 / 4 A100 &   \textbf{5 / 4 A100}  \\
        \hline
    \end{tabular}
    }
    \label{tab:effectiveness}
    % \vspace{-1mm}
    % \vspace{-3mm}
\end{table*}

% \vspace{0mm}

% In these tables, the compared methods are grouped into two classes: models, which do not use the prior of pre-trained T2I models and have a relatively small architectures (ResShift, SinSR, CTMSR, RSD) and T2I-based models with heavy architectures (SUPIR, OSEDiff, AdcSR, PiSA-SR, TSD-SR). 
% RSD also has competitive fidelity metrics (PSNR and SSIM).
% We hypothesize the gap with SUPIR is due to its multistep nature, rich SDXL prior, and large proprietary dataset with high-quality images, which leads to better details and better preferences by no-reference metrics.
% TSD-SR achieves better no-reference perceptual metrics (CLIPIQA, MUSIQ) compared to RSD, while RSD outperforms in reference-based fidelity (PSNR, SSIM) and perceptual metrics (LPIPS). 
% (LPIPS, CLIPIQA, MUSIQ, DISTS, NIQE, MANIQA) 
% \vspace{-1mm}
\textbf{(1)} RSD outperforms the teacher ResShift and our closest competitor, SinSR, by a large margin on all \textbf{perceptual} metrics (LPIPS, CLIPIQA, MUSIQ) and \textbf{all test datasets} while training on the same data. Moreover, RSD shows comparable or even better results than ResShift distilled with VSD loss, ResShift-VSD (Appendix \ref{app:vsd_only}). CTMSR is the recent one-step diffusion SR method, which also used ImageNet for training and, therefore, can be fairly compared to RSD. RSD achieves better results in \textbf{all real-world} datasets in most perceptual metrics (LPIPS, CLIPIQA, MUSIQ) with a noticeable improvement in MANIQA in Table \ref{tab:benchmarks_crops_short}. 

\textbf{(2)} Compared to T2I-based OSEDiff and SUPIR models on Real-ISR benchmarks, RSD achieves the best value of the latest image-quality CLIPIQA and top-1 or top-2 results in terms of MUSIQ. RSD has a worse CLIPIQA than SUPIR for synthetic datasets but better than OSEDiff. However, SUPIR also produces excessive details, which leads to poor consistency with the LR image, as seen by the PSNR, SSIM, and LPIPS metrics. We highlight that RSD, even with slightly worse MUSIQ, achieves fidelity metrics that are much better than SUPIR and comparable to or better than OSEDiff for most setups while using a much smaller number of parameters and GPU memory, as shown in Table \ref{tab:effectiveness}. Compared to very recent SOTA 1-step diffusion T2I-based SR methods (AdcSR, PiSA-SR, TSD-SR), RSD is capable of achieving competitive perceptual (LPIPS, CLIPIQA) and fidelity (PSNR, SSIM) quality in Tables \ref{tab:benchmarks_full_images}-\ref{tab:imagenet-test}. 

\textbf{(3)} In \glb{Table \ref{tab:benchmarks_crops_short}}, we show that RSD achieves top-2 or top-1 perceptual metrics compared to OSEDiff and all DMs, which were trained on ImageNet. We highlight different training HR resolutions of RSD and OSEDiff - we used HR crops of the size $256 \times 256$ as in the teacher ResShift, while OSEDiff used HR crops of the size $512 \times 512$ for training on LSDIR \citep{li2023lsdir}, which aligns with the crop size in \glb{Table \ref{tab:benchmarks_crops_short}}. Due to space limitations, we provide \underline{quantitative results} for the recent SOTA models of AdcSR, PiSA-SR, and TSD-SR for \glb{Table \ref{tab:benchmarks_crops_short}} and the Real-ISR benchmarks of RealLR200 \citep{wu2024seesr} and RealLQ250 \citep{yuang2024dreamclear} in Appendix \ref{app:full_quantitative_results}. We also discuss the RSD results trained on $512 \times 512$ HR images in Appendix \ref{app:results_512}.

%\vspace{-2mm}
\noindent \textbf{Qualitative comparisons}.
We visually compare RSD with SinSR, OSEDiff, ResShift, SUPIR, CTMSR, AdcSR, PiSA-SR, and TSD-SR on test images from RealSet65 in Figure \ref{fig:comparison}. As illustrated in the top image, SUPIR tends to produce rich details that semantically do not correspond to the LR image (zoom in for excessive broccoli). ResShift, SinSR, and CTMSR produce conservative images, which may struggle with severely blurred details, like the roof of the house in the bottom image. OSEDiff may hallucinate excessive details, as can be seen in the panda's nose in Figure \ref{fig:teaser_summary}. RSD compromises between the good details of OSEDiff and SUPIR and the high fidelity of ResShift and SinSR. The SOTA T2I-based SR models of AdcSR, PiSA-SR, and TSD-SR produce highly realistic results with rich textures that are usually better than  OSEDiff. However, we found that these models can still hallucinate (Appendix \ref{app:sota}). 
% Additional \underline{visual results} for the comparison of RSD with other methods are given in Appendices \ref{app:sota} and \ref{app:more_visual_results}.
% for the bear's nose in Figure \ref{fig:comparison}

%\vspace{-2mm}
% \vspace{0mm}
\noindent \textbf{Complexity comparisons}. We compare the complexity of competing diffusion SR
models in Table \ref{tab:effectiveness}, including NFE, inference time, total number of parameters, and maximum required GPU memory during inference. We also report training time and GPU usage information from the original papers.
% We also provide information about the training time and used GPUs of those methods according to the respective papers.
All methods are tested on an NVIDIA A100 GPU with $256 \times 256$ HR inputs following SinSR \citep[Table 3]{wang2024sinsr} and CTMSR \citep[Table 3]{you2025consistency} complexity evaluation setups. RSD and SinSR use at least $\times5$ less GPU memory and $\times2.5-10$ fewer parameters depending on the T2I baseline, indicating substantially lower compute budgets.
% We observe that RSD and SinSR require at least $\times 5$ less GPU memory and have $\times 10$ fewer parameters than T2I-based models, which highlights the efficiency of these models in terms of computational budget.
We also note the \underline{training efficiency of RSD} compared to SinSR: RSD is a \textbf{simulation-free method}.
SinSR runs the ResShift teacher for training all 15 steps (Eq. 5–6 in \citep{wang2024sinsr}), while RSD avoids it (Algorithm \ref{alg:main}, Appendix \ref{app:algorithm_details}).
% only uses $f^*(x_t,y_0,t)$ to update the generator
% Unlike SinSR, which runs the ResShift teacher model for all $15$ steps (see Eq. 5 and 6 in \citep{wang2024sinsr}), RSD only uses $f^{*}(x_{t}, y_{0}, t)$ at each step to update the generator (see Algorithm \ref{alg:main} in Appendix \ref{app:algorithm_details}).
In Appendix \ref{app:experiment_details}, we discuss that SinSR empirically converges roughly 3 times slower than RSD. Although CTMSR is a distillation-free method, we show in Appendix \ref{app:sota} that its total training time is longer than the total training time of the ResShift teacher and its distillation with RSD. Despite strong perceptual quality, recent one-step T2I-based SR methods (AdcSR, PiSA-SR, TSD-SR) require substantially larger compute budgets than RSD.
% Despite the good perceptual performance of very recent one-step diffusion T2I-based SR methods, such as AdcSR, PiSA-SR and TSD-SR, the computational budget of these models remains much bigger than for RSD.
For example, compared to TSD-SR, RSD training is $\times19$ faster, using $\times2$ fewer GPUs, while TSD-SR uses $\times13$ more parameters and $\times8$ more GPU memory for inference.
% For example, we highlight the training efficiency of our RSD compared to TSD-SR: RSD has $\times 19$ faster training with $\times 2$ times less GPUs compared to TSD-SR. During inference, TSD-SR requires $\times 13$ more parameters and $\times 8$ more GPU memory than RSD. 
More discussion of \underline{performance-efficiency trade-off} for RSD and other SOTA methods is provided in Appendix \ref{app:sota}.

% \vspace{-1mm}
% \vspace{-1mm}
% \vspace{-3mm}
\subsection{Ablation study}\label{sec:ablation_study}
% \vspace{-1mm}
% \vspace{-1mm}
\textbf{Multistep training.}
We ablate multistep training (\wasyparagraph \ref{subsec:multistep}) across timestep configurations.
% We analyze the performance of our method under different timestep configurations in multistep training \wasyparagraph \ref{subsec:multistep}.
As shown in Table \ref{tab:nfe_ablation}, we compare various numbers of timesteps $N$ ranging from 1 to 15 with the maximum number matching that of ResShift; timesteps are evenly placed. We choose $N=4$ for the best perception–distortion trade-off \citep{blau2018perception}.

\begin{table}[!ht]
    %\vspace{-1mm}
    \centering
    \captionof{table}{Impact of multistep training of our RSD on RealSR. The best and second best results are highlighted in \textbf{bold} and \underline{underline}.\label{tab:nfe_ablation}}
    \resizebox{\linewidth}{!}{
    \begin{tabular}{c|ccccc}
        \hline
        $N$ & PSNR$\uparrow$ & LPIPS$\downarrow$ & CLIPIQA$\uparrow$ & MUSIQ$\uparrow$ \\
        \hline
        1  & 24.82 & 0.4052 & 0.7444 & 64.290 \\ 
        2  & 24.77 & 0.3772 & \textbf{0.7523} & 65.760 \\ 
        4  & 24.92 & 0.3552 & \underline{0.7518} & \underline{66.430} \\ 
        8  & \underline{25.63} & \underline{0.3199} & 0.7286 & \textbf{66.445} \\ 
        15  & \textbf{25.91} & \textbf{0.2940} & 0.6857 & 65.689 \\ 
        
        \hline
    \end{tabular}
    }
    % \vspace{-2mm}
\end{table}

\noindent \textbf{Supervised losses.} Table \ref{tab:gt_losses_ablation} examines the impact of incorporating supervised losses, as discussed in \wasyparagraph \ref{subsec:gt_losses}. Our results show that adding these losses significantly enhances quality in PSNR and LPIPS while introducing compromised yet acceptable changes in no-reference metrics (CLIPIQA, MUSIQ).
In all evaluations, we use full-size images with real-world degradations from RealSR.

\begin{table}[!ht]
    % \vspace{-1mm}
    \centering
    \captionof{table}{Effect of incorporating supervised losses on RealSR. The best and second best results are highlighted in \textbf{bold} and \underline{underline}.\label{tab:gt_losses_ablation}}
    % \vspace{-4mm}
    \resizebox{\linewidth}{!}{
        \begin{tabular}{c|cccc} % Fixed-width columns
        \hline
        Method & PSNR$\uparrow$ & LPIPS$\downarrow$ & CLIPIQA$\uparrow$ & MUSIQ$\uparrow$ \\
        \hline
        % RSD (distill only) & 24.92 & 0.3552 & \textbf{0.7518} & \underline{66.430} \\ 
        $\lambda_{1,2} = 0$ & 24.92 & 0.3552 & \textbf{0.7518} & \underline{66.430} \\ 
        $\lambda_{1} \neq 0$ & \textbf{26.01} & \textbf{0.2708} & \underline{0.7089} & 65.178 \\
        $\lambda_{2} \neq 0$ & 24.98 & 0.3064 & 0.6970 & \textbf{67.615} \\
        \textbf{Ours} & \underline{25.91} & \underline{0.2726} & 0.7060 & 65.860 \\
        \hline
    \end{tabular}
    }
    % \vspace{-2mm}
\end{table}

We provide \underline{additional ablation studies} on the number of updates for the fake model per student update and the training stability of RSD in Appendix \ref{app:ablation}.
% , and visual results for Table \ref{tab:gt_losses_ablation}

%\textbf{Multistep Intuition}
% \vspace{-3mm}
%\vspace{-3mm}
\section{Conclusion and Future Work}
\label{sec:conclusion}
% \vspace{-2mm}
%\vspace{-1mm}
In this work, we propose RSD, a novel approach to distill the ResShift model into a student network with a single inference step. 
% Specifically, we formulate the problem of training a one-step generator as such data generation that the ResShift model trained on will coincide with the teacher model. 
% Our model may face challenges with large images due to the low resolution of HR data used for training ResShift teacher, which is much smaller than that used for training T2I models. Exploring training at higher resolutions remains an important direction for future work.
Our model is computationally efficient thanks to its ResShift framework but remains constrained by its teacher capacity issue, as validated in Appendix \ref{app:results_512}. A more advanced teacher, such as a T2I-based model, could improve performance and enable the application of our method at higher resolutions. We discuss the limitations of RSD and failure cases in Appendix \ref{app:limitations}.
% Exploring training at higher resolutions remains an important direction for future work.

% Other limitations are listed in Appendix \ref{app:limitations}. 
\section*{Impact Statement}
Our proposed distillation method for the diffusion-based image SR model, RSD, presents potential societal impacts, both positive and negative. On the positive side, the practical effects of the techniques developed to improve the quality and efficiency of the real-world SR model range from enhancing medical imaging for diagnostic purposes and assisting in disaster response to improving remote sensing and autonomous driving performance. However, there are concerns regarding the generation of fake content. Our model is generative and may facilitate misleading image enhancement or manipulation. Potential risks depend on the deployment context and safeguards. We note that our model was trained using only one dataset, ImageNet \citep{deng2009imagenet}, which is known to be standard in diffusion SR research \citep{yue2023resshift,wang2024sinsr,you2025consistency,gushchin2025inverse}. Thus, we do not expect any high risk of misuse of the trained model as long as the training data do not contain unsafe images. 

\section*{Acknowledgements}
The work was supported by the grant for research centers in the field of AI provided by the Ministry of Economic Development of the Russian Federation in accordance with the agreement 000000C313925P4F0002 and the agreement №139-10-2025-033.
  % This paper presents work whose goal is to advance the field of Machine
  % Learning. There are many potential societal consequences of our work, none
  % which we feel must be specifically highlighted here.

\bibliography{main}
\bibliographystyle{icml2026}

%%%%%%%%%%%%%%%%%%%%%%%%%%%%%%%%%%%%%%%%%%%%%%%%%%%%%%%%%%%%%%%%%%%%%%%%%%%%%%%
%%%%%%%%%%%%%%%%%%%%%%%%%%%%%%%%%%%%%%%%%%%%%%%%%%%%%%%%%%%%%%%%%%%%%%%%%%%%%%%
% APPENDIX
%%%%%%%%%%%%%%%%%%%%%%%%%%%%%%%%%%%%%%%%%%%%%%%%%%%%%%%%%%%%%%%%%%%%%%%%%%%%%%%
%%%%%%%%%%%%%%%%%%%%%%%%%%%%%%%%%%%%%%%%%%%%%%%%%%%%%%%%%%%%%%%%%%%%%%%%%%%%%%%
\appendix
\onecolumn
\section*{Appendix}
%\section*{Supplementary material and its structure}
We organize the structure of the appendix as follows:
\begin{enumerate}[leftmargin=0.3cm, itemindent=0.5cm]
    \item Appendix \ref{app:vsd} discusses the relation of RSD to relevant methods that involve training an auxiliary "fake" model - variational score distillation (VSD \citep{yin2024onestep, wu2024onestep, wang2023prolificdreamer}), score identity distillation (SiD \citep{pmlr-v235-zhou24x}), Flow Generator Matching (FGM \citep{huang2024flow}), and inverse bridge matching distillation (IBMD \citep{gushchin2025inverse}). Appendix \ref{app:vsd_only} includes the derivation of the variational score distillation for ResShift and its comparison with our RSD loss $\mathcal{L}_{\theta}$. Appendix  \ref{app:sid_only} discusses the relation of RSD to SiD and FGM. Appendix \ref{app:ibmd_only} discusses the relation of RSD to IBMD and their quantitative comparison. In Appendix \ref{app:generalization_rsd}, we also discuss the generalization of the RSD method for other diffusion models.
    \item Appendix \ref{app:algorithm_details} details the implementation with the pseudocode of RSD and the notation used in the paper. We also present the pseudocode for ResShift-VSD, introduced in Appendix \ref{app:vsd}
    \item Appendix \ref{app:experiment_details} consists of experimental details for the implementation of RSD and baselines.
    \item Appendix \ref{app:statement_on_llm_usage} describes the details of LLM usage in the paper.
    \item Appendix \ref{app:full_quantitative_results} consists of full quantitative results, including additional baselines and results on full-size DRealSR \cite{wei2020component}, RealLR200 \cite{wu2024seesr}, and RealLQ250 \cite{yuang2024dreamclear}, which have not been shown in the main text due to space limitations.
    \item Appendix \ref{app:sota} provides a comparison of performance and efficiency for RSD and state-of-the-art diffusion SR models: PiSA-SR \cite{sun2025pisasr}, TSD-SR \cite{dong2025tsdsr}, AdcSR \cite{chen2025adversarial}, CTMSR \cite{you2025consistency}, InvSR \cite{yue2025arbitrarysteps}, and CCSR \cite{sun2023ccsr}.
    % , and supervised losses.
    \item Appendix \ref{app:results_512} provides the quantitative comparison between RSD, SinSR, and ResShift when all these models are trained on HR cropped images with a resolution $512 \times 512$ from the LSDIR dataset \citep{li2023lsdir}, which follows the training setup of OSEDiff \citep{wu2024onestep}.
    \item Appendix \ref{app:ablation} provides an additional discussion of ablation studies on hyperparameter $K$ and the training stability of RSD.
    \item Appendix \ref{app:addsr} discusses the qualitative and quantitative comparison between RSD and AddSR \citep{xie2024addsr}.
    \item Appendix \ref{app:appendix_resshift} includes additional details of ResShift theory, which have not been shown in the main text due to space limitations.
    \item Appendix \ref{app:limitations} discusses the limitations of RSD and failure cases.
    \item Appendix \ref{app:proofs} discusses the motivation for the assumption of Equation \eqref{eq:assumption_statement}, presents the proof of Proposition \ref{prop:main-proposition}, and explains the computational issues of the original problem in Equation \eqref{eq:main_loss}.
    % \item Appendix \ref{app:more_visual_results} contains additional visual results for the comparison between RSD and baselines.
\end{enumerate}

\appendix

\section{Relation of RSD to VSD, SiD, FGM and IBMD}
\label{app:vsd}

\subsection{Derivation of VSD objective for ResShift (ResShift-VSD) and comparative analysis with our objective}\label{app:vsd_only}

\begin{figure}[hbt!]
    \centering
    \renewcommand{\arraystretch}{0.2}
    \setlength{\tabcolsep}{1.5pt}
    \begin{tabular}{c:c}
         \includegraphics[width=0.3\textwidth]{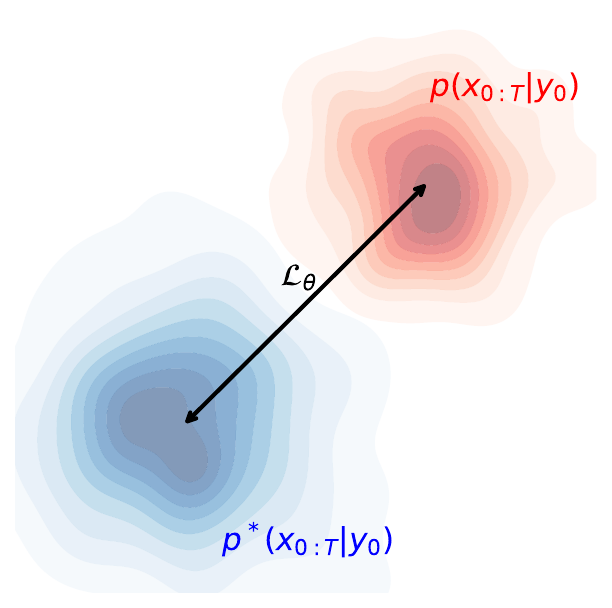} & \includegraphics[width=0.6\textwidth]{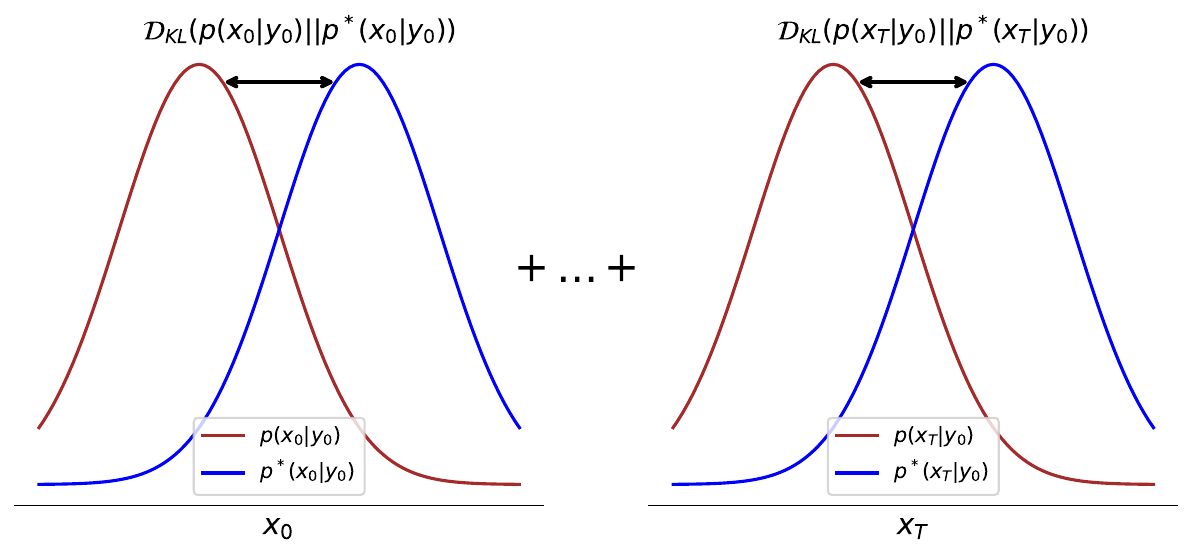} 
         \\
        \adjustbox{valign=c}{$\mathcal{L}_{\theta}$} & \adjustbox{valign=c}{VSD}  
         \\
    \end{tabular}
    \caption{Illustration of the distinct distribution alignment strategies employed by the RSD $\mathcal{L}_{\theta}$ (\textbf{Ours}) and VSD loss functions. We denote by $p^*(x_{0:T}|y_0)$ reverse process of teacher ResShift model and by $p(x_{0:T}|y_0)$ reverse process of ResShift trained on generator $G_{\theta}$ data. The $\mathcal{L}_{\theta}$
  loss enforces alignment of the joint distributions $p^*(x_{0:T}|y_0)$ and $p(x_{0:T}|y_0)$ across \textbf{all timesteps}, whereas the VSD loss aligns the marginal distributions at \textbf{each timestep} $t$ \textbf{simultaneously} between distributions of teacher ResShift and ResShift trained on generator $G_{\theta}$ data. For formal derivations, see Equations \eqref{eq:RSD KL loss} and \eqref{eq:VSD KL loss}.}
    \label{fig:dmd_vs_rsd}
\end{figure}

\noindent In this section, our objective is to:
\begin{enumerate}
    \item Derive the VSD loss in the ResShift framework to compare it with our distillation loss under the same experimental conditions (see \lcl{Table \ref{tab:benchmarks_full_images}} and \glb{Table \ref{tab:imagenet-test}});
    \item Explain the main differences between our approach and the VSD loss. 
\end{enumerate}
To achieve this, we consider a generator $G_{\theta}$ with parameters $\theta$ and seek an update rule for them. We use a fake ResShift model to solve the following problem:
\begin{eqnarray}
\label{eq:mse}
    \argmin_{f} \sum_{t = 1}^{T} w_t \mathbb{E}_{p_{\theta}(\widehat{x}_0, y_0, x_t)} \big[\|f(x_t, y_0, t) - \widehat{x}_0\|_{2}^2 \big],
\end{eqnarray}
Since it is the optimization with MSE function, the solution is given by the conditional expectation:
\begin{eqnarray}
\label{eq:obvious}
    f_{G_{\theta}}(x_t, y_0, t) = \mathbb{E}_{p_{\theta}(\widehat{x}_0 | y_0, x_t)}[\widehat{x}_0]
\end{eqnarray}
% where $\widehat{x}_0 \sim p_{\theta}(\widehat{x}_0|y_0,x_t).$

\noindent \textbf{Notation.} Further we will use the following notation:
\begin{itemize}
    \item $f^*$ -- teacher ResShift.
    \item $x_{t_1:t_2} \eqdef (x_{t_1}, x_{t_1 + 1}, \dots, x_{t_2})$ and $dx_{t_1:t_2} \eqdef \prod\limits_{t=t_1}^{t_2}dx_t$ for any integer $t_1<t_2.$
    \item The joint distribution across all timesteps is defined as follows: 
    \begin{gather}
    p(x_{0:T}|y_0) \eqdef p(x_T|y_0)\prod\limits_{t=1}^{T} p(x_{t-1}|x_t,y_0).
    \label{eq:definition of joint probability}
    \end{gather} The transition probabilities are determined using Equation~\eqref{eq:resshift_posterior} and Equation~\eqref{eq:mu_resshift_parametric}: \begin{gather}
    p(x_{t-1}|x_t,y_0) = \mathcal{N}\left(x_{t-1}|\frac{\eta_{t-1}}{\eta_t}x_t + \frac{\alpha_t}{\eta_t}f_{G_{\theta}}(x_t, y_0, t), \kappa^2 \frac{\eta_{t-1}}{\eta_t}\alpha_t \m I\right)
    \label{eq:transition probability for fake model}
    \end{gather}
    In the same way we define $p^*(x_{0:T}|y_0) \eqdef p^*(x_T|y_0)\prod\limits_{t=1}^{T} p^*(x_{t-1}|x_t,y_0)$, where the transition probabilities are determined using $f^*.$
    \item $p^*(x_t|y_0) \eqdef \int p^*(x_{0:T}|y_0)dx_{0:t-1}dx_{t+1:T}$ and $p(x_t|y_0) \eqdef \int p(x_{0:T}|y_0)dx_{0:t-1} dx_{t+1:T}$ are marginal distributions.
\end{itemize}

\noindent  \textbf{Derivation of the VSD loss for ResShift (ResShift-VSD)}. Initially, the main objective of the VSD loss \citep{yin2024onestep, wu2024onestep, wang2023prolificdreamer} is:
\begin{gather}
   \mathcal{L}_{\text{VSD}} =  \mathbb{E}_{p(y_0)}\Big[\sum\limits_{t=1}^{T} w_t\mathcal{D}_{\text{KL}}\Big(p(x_t| y_0)||p^*(x_t|y_0)\Big)\Big]
   \label{eq:VSD KL loss}
\end{gather}

We can get another expression for this loss using the reparametrization based on Equation \eqref{eq:forward_process_resshift_integrable}:
\begin{gather}
   \mathcal{L}_{\text{VSD}} =  \sum_{t=1}^Tw_t \mathbb{E}_{p(y_0)}\Big[\mathcal{D}_{\text{KL}}\Big(p(x_t| y_0)||p^*(x_t|y_0)\Big)\Big] = \sum_{t=1}^T w_t \mathbb{E}_{p(y_0)p(x_t| y_0)}\log \frac{p(x_t| y_0)}{p^*(x_t|y_0)} = \nonumber \\
    \sum_{t=1}^T w_t \mathbb{E}_{p(y_0)} \expect_{\substack{
        x_t  = (1-\eta_t)\widehat{x}_0 + \eta_t y_0 + \kappa\sqrt{\eta_t} \epsilon' \\
        \widehat{x}_0 = G_{\theta}(y_0, \epsilon) \\
        \epsilon', \epsilon \sim \mathcal{N}(0; \mathbf{I}) 
        }}\log \frac{p(x_t| y_0)}{p^*(x_t|y_0)}
\end{gather}

Initially, this loss is intractable because it requires computing probability densities, which are not available in practice. However, taking the gradient with the chain rule facilitates its computation:
\begin{gather}
   \nabla_{\theta} \mathcal{L}_{\text{VSD}} = \nonumber \\
    - \sum_{t=1}^T w_t \mathbb{E}_{p(y_0)}\expect_{\substack{
        x_t  = (1-\eta_t)\widehat{x}_0 + \eta_t y_0 + \kappa \sqrt{\eta_t} \epsilon' \\
        \widehat{x}_0 = G_{\theta}(y_0, \epsilon) \\
        \epsilon', \epsilon \sim \mathcal{N}(0; \mathbf{I}) 
        }}\left[\left(\nabla_{x_t}\log p^*(x_t|y_0) - \nabla_{x_t} \log 
        p(x_t| y_0)\right)\frac{dx_t}{d\theta} \right] = \nonumber \\
    - \sum_{t=1}^T w_t \mathbb{E}_{p(y_0)}\expect_{\substack{
        x_t  = (1-\eta_t)\widehat{x}_0 + \eta_t y_0 + \kappa \sqrt{\eta_t} \epsilon' \\
        \widehat{x}_0 = G_{\theta}(y_0, \epsilon) \\
        \epsilon', \epsilon \sim \mathcal{N}(0; \mathbf{I}) 
        }}\left[\left(\nabla_{x_t}\log p^*(x_t|y_0) - \nabla_{x_t} \log 
        p(x_t| y_0)\right)\frac{dx_t}{d\widehat{x}_0}\frac{d\widehat{x}_0}{d\theta} \right] = \nonumber \\
    - \sum_{t=1}^T w_t' \mathbb{E}_{p(y_0)}\expect_{\substack{
        x_t  = (1-\eta_t)\widehat{x}_0 + \eta_t y_0 + \kappa \sqrt{\eta_t} \epsilon' \\
        \widehat{x}_0 = G_{\theta}(y_0, \epsilon) \\
        \epsilon', \epsilon \sim \mathcal{N}(0; \mathbf{I}) 
        }}\left[\left(\nabla_{x_t}\log p^*(x_t|y_0) - \nabla_{x_t} \log 
        p(x_t| y_0)\right)\frac{d\widehat{x}_0}{d\theta} \right],
\end{gather}
where $w_t' \eqdef w_t \frac{dx_t}{d\widehat{x}_0} = w_t (1-\eta_t).$

The expression $\nabla_{x_t} \log p(x_t|y_0)$ can be utilized as follows \citep{zheng2024diffusion}: 
\begin{gather}
    \nabla_{x_t} \log p(x_t|y_0) =\frac{ \nabla_{x_t} p(x_t|y_0)}{ p(x_t|y_0)} = \frac{\nabla_{x_t}\int q(x_t|y_0, x_0)p(x_0|y_0) dx_0}{p(x_t|y_0)} =  \nonumber \\
    \frac{\int p(x_0|y_0)\nabla_{x_t}q(x_t|y_0, x_0) dx_0}{p(x_t|y_0)} = \frac{\int p(x_0|y_0) q(x_t|y_0, x_0) \nabla_{x_t} \log q(x_t|y_0, x_0) dx_0}{p(x_t|y_0)}  = \nonumber \\
    \int \frac{ p(x_0|y_0) q(x_t|y_0, x_0)}{p(x_t|y_0)} \nabla_{x_t} \log q(x_t|y_0, x_0) dx_0 = \int p(x_0|x_t, y_0) \nabla_{x_t} \log q(x_t|y_0, x_0) dx_0   \nonumber  \\
    = \expect_{p(x_0|x_t, y_0)}\Big[\nabla_{x_t} \log q(x_t|y_0, x_0)\Big] 
\end{gather}
Since $q(x_t|y_0, x_0)=\mathcal{N}(x_{t} | x_{0} + \eta_{t}e_{0}, \kappa^{2}\eta_{t} \m I)$ (See Equation~\eqref{eq:forward_process_resshift_integrable}), we get:
\begin{equation}
    \nabla_{x_t} \log p(x_t|y_0) = - \expect_{p(x_0|x_t, t, y_0)}\Big[\frac{x_t -\eta_ty_0 -(1-\eta_t)x_0}{\kappa^{2} \eta_t}\Big],
\end{equation}
which leads to:
\begin{equation}
   \nabla_{\theta} \mathcal{L}_{\text{VSD}} = 
    - \sum_{t=1}^T w_t'' \mathbb{E}_{p(y_0)}\expect_{\substack{
        x_t  = (1-\eta_t)\widehat{x}_0 + \eta_t y_0 + \kappa \sqrt{\eta_t} \epsilon' \\
        \widehat{x}_0 = G_{\theta}(y_0, \epsilon) \\
        \epsilon', \epsilon \sim \mathcal{N}(0; \mathbf{I}) 
        }}\left[\left(f^*(x_t, y_0, t) - f_{G_{\theta}}(x_t, y_0, t)\right)\frac{d\widehat{x}_0}{d\theta}\right] 
   \label{eq:VSD loss}
\end{equation}
where $w_t'' \eqdef w_t'\frac{1-\eta_t}{\kappa^{2} \eta_t}.$
As a result, this loss can be implemented to match the gradients with $\nabla_{\theta}\mathcal{L}_{\text{VSD}}$ (see Algorithm \ref{alg:VSD}). We call this model \underline{ResShift-VSD}. 

\noindent \textbf{Reformulation of our $\mathcal{L}_{\theta}$ loss}. We can express our RSD loss function as follows:
\begin{gather}
   \mathcal{L}_{\theta} = \mathbb{E}_{p(y_0)}\Big[\mathcal{D}_{\text{KL}}\Big(p(x_{0:T}|y_0)\,\|\,p^*(x_{0:T}|y_0)\Big)\Big],
   \label{eq:RSD KL loss}
\end{gather}

Recalling that the joint probability distribution can be factorized (see Equation~\eqref{eq:definition of joint probability}), the above loss can be decomposed as:
\begin{gather}
   \mathcal{L}_{\theta} = \mathbb{E}_{p(y_0)}\Big[\mathcal{D}_{\text{KL}}\Big(p(x_{0:T}|y_0)\,\|\,p^*(x_{0:T}|y_0)\Big) \Big]
   = \mathbb{E}_{p(y_0)}\Big[\underbrace{\mathcal{D}_{\text{KL}}\Big(p(x_{T}|y_0)\,\|\,p^*(x_{T}|y_0)\Big)}_{=0 \text{ since } p(x_T|y_0)=p^*(x_T|y_0) \text{ from Equation~\eqref{eq:forward_process_resshift_integrable}}}\Big] + \nonumber \\
   \mathbb{E}_{p(y_0)}\Big[\sum\limits_{t=1}^{T} \mathbb{E}_{p(x_t|y_0)} \mathcal{D}_{\text{KL}}\Big(p(x_{t-1}|x_t, y_0)\,\|\,p^*(x_{t-1}|x_t, y_0)\Big)\Big] 
\end{gather}

Using Equation~\eqref{eq:transition probability for fake model}, the KL divergence inside the expectation reduces to the KL divergence between Gaussian distributions, which can be computed in closed form. Consequently, we obtain the following:
\begin{gather}
   \mathcal{L}_{\theta} =
   \sum\limits_{t=1}^{T} \mathbb{E}_{p(y_0)}\mathbb{E}_{p(x_t|y_0)} \mathcal{D}_{\text{KL}}\Big(p(x_{t-1}|x_t, y_0)\,\|\,p^*(x_{t-1}|x_t, y_0)\Big)  \nonumber \\
   = \sum\limits_{t=1}^{T} \mathbb{E}_{p(y_0)}\mathbb{E}_{p(x_t|y_0)} \underbrace{\frac{1}{2\kappa^2}  \frac{\alpha_t}{\eta_t \eta_{t-1}}}_{\eqdef w_t} \left\| f_{G_\theta}(x_t, y_0, t) - f^*(x_t, y_0, t) \right\|_2^2 \nonumber \\
   = \sum\limits_{t=1}^{T} w_t\mathbb{E}_{p(y_0)}\mathbb{E}_{p(x_t|y_0)} \left\| f_{G_\theta}(x_t, y_0, t) - f^*(x_t, y_0, t) \right\|_2^2 
\end{gather}

Since the distribution $p(x_t|y_0)$ is generally intractable, we instead use the tractable distribution $q(x_t | \widehat{x}_0, y_0)$, which is known to satisfy $p(x_t|y_0) = \mathbb{E}_{\widehat{x}_{0} \sim p_{\theta}(\widehat{x}_{0}|y_{0})}q(x_t | \widehat{x}_0, y_0)$. Thus, we have:
\begin{gather}
   \mathcal{L}_{\theta} = \sum\limits_{t=1}^{T} w_t  \mathbb{E}_{p(y_0)}\mathbb{E}_{p_{\theta}(\widehat{x}_{0}|y_{0})}\mathbb{E}_{q(x_t|\widehat{x}_0, y_0)} \left\| f_{G_\theta}(x_t, y_0, t) - f^*(x_t, y_0, t) \right\|_2^2 
\end{gather}

Noting that the integrand is independent of $\widehat{x}_0$ and we can use $p_\theta(\widehat{x}_0, y_0)$ instead of $p(y_0)$, since $\int p_{\theta}(\widehat{x}_0|y_0)d\widehat{x}_0 = 1$, and therefore we obtain the following:
\begin{gather}
   \mathcal{L}_{\theta} = \sum\limits_{t=1}^{T} w_t \mathbb{E}_{q(x_t|\widehat{x}_0, y_0)\, p_{\theta}(\widehat{x}_0, y_0)} \left\| f_{G_\theta}(x_t, y_0, t) - f^*(x_t, y_0, t) \right\|_2^2 
\end{gather}

Finally, recognizing that the joint distribution $p_{\theta}(\widehat{x}_0, y_0, x_t)$ is defined as
\[
p_{\theta}(\widehat{x}_0, y_0, x_t) \eqdef q(x_t|\widehat{x}_0, y_0)\,p_{\theta}(\widehat{x}_0, y_0),
\]
we arrive at the final form of the RSD loss:
\begin{gather}
   \mathcal{L}_{\theta} = \sum\limits_{t=1}^{T} w_t \mathbb{E}_{p_{\theta}(\widehat{x}_0, y_0, x_t)} \left\| f_{G_\theta}(x_t, y_0, t) - f^*(x_t, y_0, t) \right\|_2^2 
\end{gather}
This derivation demonstrates that the loss function in Equation \eqref{eq:RSD KL loss} reconstructs the initial objective presented in Equation \eqref{eq:main_loss}.

\noindent  \textbf{Conceptual comparison of VSD and our RSD $\mathcal{L}_{\theta}$ losses.} The key difference between VSD and $\mathcal{L}_{\theta}$ losses lies in how they match distributions. For a clearer intuitive explanation, one can see the formulations of losses with $\mathcal{D}_{\text{KL}}$ for VSD (Equation \eqref{eq:VSD KL loss}) and $\mathcal{L}_{\theta}$ (Equation \eqref{eq:RSD KL loss}). The VSD loss aligns the marginal distributions at each timestep $t$ between the teacher's and fake's distributions. In contrast, the $\mathcal{L}_{\theta}$ loss matches the joint distribution in all timesteps. This difference is illustrated in Figure \ref{fig:dmd_vs_rsd}, where the $\mathcal{L}_{\theta}$ loss enforces joint distribution alignment, while the VSD loss aligns the marginal distributions separately and then sums them. 

% The RSD loss is superior to the VSD loss for the SR problem because it aligns the joint distribution across all timesteps, ensuring a more holistic match between teacher and student models. 
Unlike VSD, which aligns marginal distributions at each timestep separately, RSD captures temporal dependencies more effectively. This joint alignment is particularly beneficial for SR tasks, where maintaining consistency and accuracy across all image details and features is crucial for high-quality resolution. The loss of RSD, which considers the entire distribution across multiple timesteps, leads to more precise and stable SR performance, as we validated in Section \ref{subsec:experimental_results}.

\noindent \textbf{Computational analysis of VSD and $\mathcal{L}_{\theta}$ losses}. As was shown in Proposition \ref{prop:main-proposition}, our loss can be evaluated via:
\begin{eqnarray}
    \mathcal{L}_{\theta} \! = - \min_{\phi} \Big\{\sum\limits_{t=1}^{T} w_t \mathbb{E}_{p_{\theta}(\widehat{x}_0, x_{t}, y_0)} \Big[ \| f_{\phi}(x_t, y_0, t)\|_2^2 - \| f^*(x_t, y_0, t)\|_2^2  + \nonumber \\
     2 \langle f^*(x_t, y_0, t) \!- \! f_{\phi}(x_t, y_0, t), \widehat{x}_0  \rangle \Big] \Big\}
\end{eqnarray}
Using Equation \eqref{eq:obvious} we can rewrite it and make reparameterization:
\begin{eqnarray}
    \mathcal{L}_{\theta} \! = -  \sum\limits_{t=1}^{T} w_t \mathbb{E}_{p_{\theta}(\widehat{x}_0, x_{t}, y_0)} \Big[\| f_{G_\theta}(x_t, y_0, t)\|_2^2 - \| f^*(x_t, y_0, t)\|_2^2 + \nonumber \\
    2 \langle f^*(x_t, y_0, t) \!- \! f_{G_\theta}(x_t, y_0, t), \widehat{x}_0  \rangle \Big] = \nonumber
    \\
    -  \sum\limits_{t=1}^{T} w_t \expect_{\substack{
        x_t  = (1-\eta_t)\widehat{x}_0 + \eta_t y_0 + \kappa \sqrt{\eta_t} \epsilon' \\
        \widehat{x}_0 = G_{\theta}(y_0, \epsilon) \\
        \epsilon', \epsilon \sim \mathcal{N}(0; \mathbf{I}) 
        }} \Big[\| f_{G_\theta}(x_t, y_0, t)\|_2^2 - \| f^*(x_t, y_0, t)\|_2^2 + \nonumber \\
        2 \langle f^*(x_t, y_0, t) \!- \! f_{G_\theta}(x_t, y_0, t), \widehat{x}_0  \rangle \Big]
\end{eqnarray}
To compare it with the VSD loss, we can take the gradient of $\mathcal{L}_{\theta}$ loss and get:
\begin{eqnarray}
    \frac{d\mathcal{L}_{\theta}}{d\theta} = -  \sum\limits_{t=1}^{T} w_t \expect_{\substack{
        x_t  = (1-\eta_t)\widehat{x}_0 + \eta_t y_0 + \kappa \sqrt{\eta_t} \epsilon' \\
        \widehat{x}_0 = G_{\theta}(y_0, \epsilon) \\
        \epsilon', \epsilon \sim \mathcal{N}(0; \mathbf{I}) 
        }} \Big[\frac{d\| f_{G_\theta}(x_t, y_0, t)\|_2^2}{d\theta} - \frac{d \| f^*(x_t, y_0, t)\|_2^2}{d\theta} + \nonumber \\
        2 \langle \frac{df^*(x_t, y_0, t)}{d\theta} \!- \! \frac{df_{G_\theta}(x_t, y_0, t)}{d\theta}, \widehat{x}_0  \rangle  + 2 \langle  f^*(x_t, y_0, t) \!- \! f_{G_\theta}(x_t, y_0, t), \frac{d\widehat{x}_0}{d\theta}  \rangle \Big] = \nonumber
    \\
    -  \sum\limits_{t=1}^{T} w_t \expect_{\substack{
        x_t  = (1-\eta_t)\widehat{x}_0 + \eta_t y_0 + \kappa \sqrt{\eta_t} \epsilon' \\
        \widehat{x}_0 = G_{\theta}(y_0, \epsilon) \\
        \epsilon', \epsilon \sim \mathcal{N}(0; \mathbf{I}) 
        }} \Big[ \frac{d\| f_{G_\theta}(x_t, y_0, t)\|_2^2}{d\theta} - \frac{d \| f^*(x_t, y_0, t)\|_2^2}{d\theta} + \nonumber \\
    2 \langle \frac{df^*(x_t, y_0, t)}{d\theta} \!- \! \frac{df_{G_\theta}(x_t, y_0, t)}{d\theta}, \widehat{x}_0  \rangle\Big] \nonumber
    \\
    \underbrace{-  \sum\limits_{t=1}^{T} w_t \expect_{\substack{
        x_t  = (1-\eta_t)\widehat{x}_0 + \eta_t y_0 + \kappa \sqrt{\eta_t} \epsilon' \\
        \widehat{x}_0 = G_{\theta}(y_0, \epsilon) \\
        \epsilon', \epsilon \sim \mathcal{N}(0; \mathbf{I}) 
        }}\Big[ 2 \langle  f^*(x_t, y_0, t) \!- \! f_{G_\theta}(x_t, y_0, t), \frac{d\widehat{x}_0}{d\theta}  \rangle \Big]}_{=2\cdot\nabla_{\theta}\mathcal{L}_{\text{VSD}} \text{ up to weighting term  }  w_t (\text{see Equation} \eqref{eq:VSD loss})} 
\end{eqnarray}
Consequently, the gradients of our RSD $\mathcal{L}_{\theta}$ loss function encompass those of VSD loss, scaled by a constant factor of $2$ and modulated by the time-dependent weighting term $w_t$. These scaling factors do not affect the optimal solution of the loss. However, $\mathcal{L}_{\theta}$ additionally incorporates gradient contributions from both the teacher and fake models. To reduce $\mathcal{L}_{\theta}$ to the standard VSD formulation, the application of a stop-gradient operator is required to suppress the influence of these auxiliary gradient terms. For a detailed implementation, refer to Algorithm~\ref{alg:VSD}.

\subsection{Relation of RSD to SiD and FGM}\label{app:sid_only}
The loss function $\mathcal{L}_{\theta}$ in Equation \eqref{eq:tractable_objective} can be reformulated as follows:
\begin{align}
    & \mathcal{L}_{\theta} \! =  \max_{\phi} \Big\{\sum_{t=1}^T w_t \mathbb{E}_{p_{\theta}(\widehat{x}_0, y_0, x_t)} \Big(  \| f^*(x_t, y_0, t)\|_2^2 -
    \|f_{\phi}(x_t, y_0, t)\|_2^2 + \nonumber \\
     & 2 \langle f_{\phi}(x_t, y_0, t) \!- f^*\!(x_t, y_0, t), \widehat{x}_0  \rangle \Big)\hspace{-1mm}\Big\} 
    \nonumber
    \\
    & = \max_{\phi} \Big\{\sum_{t=1}^T w_t \mathbb{E}_{p_{\theta}(\widehat{x}_0, y_0, x_t)} \Big(  \| f^*(x_t, y_0, t)\|_2^2 -
    \|f_{\phi}(x_t, y_0, t)\|_2^2 + \nonumber \\
    &2 \langle f_{\phi}(x_t, y_0, t) \!- f^*\!(x_t, y_0, t), \widehat{x}_0 \rangle \textcolor{red}{\pm 2 \langle f^*(x_t, y_0, t), f_{\phi}(x_t, y_0, t) \rangle \pm \|f_{\phi}(x_t, y_0, t)\|_2^2}  \Big)\hspace{-1mm}\Big\} \nonumber \\
    & = \max_{\phi} \Big\{\sum_{t=1}^T w_t \mathbb{E}_{p_{\theta}(\widehat{x}_0, y_0, x_t)} \Big(  \| f^*(x_t, y_0, t) - f_{\phi}(x_t, y_0, t)\|_2^2 + \nonumber \\
    &2 \langle f^*(x_t, y_0, t) - f_{\phi}(x_t, y_0, t), f_{\phi}(x_t, y_0, t) \rangle + 2 \langle f_{\phi}(x_t, y_0, t) \!- f^*\!(x_t, y_0, t), \widehat{x}_0 \rangle \Big)\hspace{-1mm}\Big\} \nonumber \\
    & = \max_{\phi} \Big\{\sum_{t=1}^T w_t \mathbb{E}_{p_{\theta}(\widehat{x}_0, y_0, x_t)} \Big(  \| f^*(x_t, y_0, t) - f_{\phi}(x_t, y_0, t)\|_2^2 + \nonumber \\
    & 2 \langle f^*(x_t, y_0, t) - f_{\phi}(x_t, y_0, t), f_{\phi}(x_t, y_0, t) - \widehat{x}_0 \rangle \Big)\hspace{-1mm}\Big\}
\end{align}

One can see that our objective can be reformulated in a manner similar to the formulations used in SiD \citep[Equation 23 with $\alpha=0.5$]{pmlr-v235-zhou24x} and FGM \citep[Equations 4.11-4.12]{huang2024flow} with up to time weighting $w_t$. However, in both SiD (where $\alpha=1.0, 1.2$ were used in the experiments) and FGM, the authors either omitted the quadratic term or assigned it a negative coefficient in the image experiments due to numerical instabilities. In contrast to these approaches, we retain the complete original loss formulation as prescribed by theory, without discarding or modifying any of its components.

Furthermore, it is important to emphasize that SiD and FGM were primarily developed for image generation tasks, whereas our proposed RSD framework is specifically tailored for image restoration, with a focus on reconstructing high-resolution images from their low-resolution counterparts. To this end, we adopt a dedicated ResShift architecture that integrates both VAE and U-Net components, along with a diffusion process specifically designed for the super-resolution task. Additionally, we incorporate supervised loss terms tailored to the super-resolution objective (see Section \ref{subsec:putting}). These task-specific design choices are in contrast to SiD and FGM, which lack such adaptations for image restoration scenarios.

\subsection{Relation of RSD to IBMD}\label{app:ibmd_only}

In this section, we compare qualitatively and quantitatively the RSD and IBMD \cite{gushchin2025inverse} methods for the Real-ISR problem. Our goal is to support the practical contribution of RSD superiority for this problem, which was claimed in Section \ref{sec:introduction}.

\textbf{Conceptual comparison}. The loss function $\mathcal{L}_{\theta}$ in Equation \eqref{eq:tractable_objective} can be equivalently reformulated as follows:
\begin{align}
    & \mathcal{L}_{\theta} \! =  \max_{\phi} \Big\{\sum_{t=1}^T w_t \mathbb{E}_{p_{\theta}(\widehat{x}_0, y_0, x_t)} \Big(  \| f^*(x_t, y_0, t)\|_2^2 -
    \|f_{\phi}(x_t, y_0, t)\|_2^2 + \nonumber 
    \\
    & 2 \langle f_{\phi}(x_t, y_0, t) \!- f^*\!(x_t, y_0, t), \widehat{x}_0  \rangle \Big)\hspace{-1mm}\Big\} 
    \nonumber
    \\
    & = \max_{\phi} \Big\{\sum_{t=1}^T w_t \mathbb{E}_{p_{\theta}(\widehat{x}_0, y_0, x_t)} \Big(  \| f^*(x_t, y_0, t)\|_2^2 -
    \|f_{\phi}(x_t, y_0, t)\|_2^2 + \nonumber \\
    & 2 \langle f_{\phi}(x_t, y_0, t), \widehat{x}_0 \rangle  - 2 \langle f^*\!(x_t, y_0, t), \widehat{x}_0 \rangle \textcolor{red}{\pm \|\widehat{x}_0\|_2^2}  \Big)\hspace{-1mm}\Big\} \nonumber \\
    & = \max_{\phi} \Big\{\sum_{t=1}^T w_t \mathbb{E}_{p_{\theta}(\widehat{x}_0, y_0, x_t)} \Big(  \| f^*(x_t, y_0, t) - \widehat{x}_0\|_2^2  - \| f_{\phi}(x_t, y_0, t) - \widehat{x}_0\|_2^2 \Big)\hspace{-1mm}\Big\}
\end{align}

It should be noted that our RSD loss, denoted by $\mathcal{L}_{\theta}$, can be interpreted as a \underline{discrete} variant of the inverse bridge matching distillation (IBMD) loss \citep[Equation 10]{gushchin2025inverse}, originally proposed for conditional Bridge Matching models. From a theoretical perspective, one of our main contributions is the development of a discretized form of the IBMD-conditional loss, which can offer practical benefits for complex tasks such as the Real-ISR problem.

Although the IBMD framework has been applied to a broad range of problems, including image restoration, its experimental setup relied on relatively simplistic degradation processes, such as bicubic and pool. In contrast, we have tailored our objective specifically for image restoration by integrating it into the ResShift paradigm, incorporating additional supervised losses explicitly designed for real-world super-resolution (see Section \ref{subsec:putting}), and utilizing a more challenging and realistic degradation model based on Real-ESRGAN. We also note that ResShift and RSD use VAE and a diffusion process in the latent space, while IBMD operates in the pixel space, which makes RSD more efficient and scalable for handling images of varying resolutions due to the reduced computational complexity and memory requirements in the latent domain. Another relevant difference between IBMD and ResShift implementations for the SR problem is the larger architecture of IBMD. IBMD for super-resolution uses the ADM architecture \citep{dhariwal2021diffusion} following the I2SB \citep{pmlr-v202-liu23ai} with 552M parameters, while RSD follows the ResShift architecture with 174M parameters. 

% We justify the practical importance of RSD framework for Real-ISR problem compared to general IBMD method in the following 

\textbf{Quantitative comparison}. Thus, in addition to our theoretical contribution, we extend the implementation of the IBMD loss to more severe and practically relevant degradation settings. These task-specific modifications differentiate our approach from the original IBMD formulation, which does not account for such adaptations in the context of real-world image restoration scenarios. To support this claim quantitatively and qualitatively, we conducted the following numerical experiments to show that both the teacher model of I2SB and the distilled student model of IBMD do not provide sufficient perceptual quality in Real-ISR problems compared to ResShift and RSD, respectively.

\textbf{Step 1: training the I2SB teacher using Real-ESRGAN degradations}. For a fair comparison with ResShift, we trained an I2SB model on ImageNet using Real-ESRGAN degradations following the training setup of ResShift and RSD detailed in Section \ref{subsec:experimental_setup}. The model was trained using the same hyperparameters as the original I2SB model trained on bicubic degradations with $4000$ iterations. We used the official I2SB implementation published in the respective GitHub repository, which is provided by the I2SB authors:
\begin{center}
        \url{https://github.com/NVlabs/I2SB}
    \end{center}

% We asked the authors of IBMD to provide the training and evaluation code of the IBMD method, which originally distilled I2SB for bicubic degradations.
\textbf{Step 2: distillation of I2SB with IBMD}. We used the official IBMD implementation published in the respective GitHub repository, which is provided by the IBMD authors:
\begin{center}
        \url{https://github.com/ngushchin/IBMD}
    \end{center} 
We adapt the provided implementation with the replacement of Real-ESRGAN degradations instead of original bicubic degradations and use the same hyperparameters, which were used for the training of IBMD model for bicubic degradations (first line in Table 7 of IBMD). We distill the trained I2SB teacher with Real-ESRGAN degradations using IBMD method into a one-step student model with $1500$ gradient updates for the student model, which we found enough for the convergence on the ImageNet-Test dataset.

For a fair comparison with ResShift and RSD, we evaluated the trained teacher I2SB with NFE = $15$ and $1$ and the trained IBMD student with NFE = $1$, which follows the inference NFE of ResShift and RSD, respectively. We report on the results of their evaluation on the ImageNet-Test dataset, which follows the RSD evaluation setup reported in Table \ref{tab:imagenet-test}, and compare them with ResShift, RSD with supervised losses (RSD (\textbf{Ours})), and RSD with only distillation loss (RSD (\textbf{Ours}, distill only)) in Table \ref{tab:imagenet-test-ibmd}. We also extend the complexity comparison in our Table \ref{tab:effectiveness} of RSD with IBMD by providing the training time of both methods in Table \ref{tab:effectiveness_ibmd}.

\textbf{Comparison between teachers, I2SB and ResShift}. ResShift achieves better results on the ImageNet dataset with complex Real-ESRGAN degradations compared to I2SB \textbf{in all evaluation metrics} (PSNR, SSIM, LPIPS, MUSIQ, CLIPIQA) with a great improvement in perceptual metrics (LPIPS, MUSIQ, CLIPIQA). Our results in Table \ref{tab:imagenet-test-ibmd} are consistent with the analysis for comparison between ResShift and I2SB on simpler bicubic degradations, which is given in Appendix B.2 of ResShift. The same conclusion is quantitatively validated in Table 5 and visually supported in Figure 8 of ResShift, respectively. The results of Table 5 in ResShift also show a significant improvement in terms of perceptual quality for ResShift compared to I2SB for bicubic degradations with the same NFE = 15. We explain these results by the specific design of ResShift, which applies the diffusion process in discrete time in the latent space of VAE and uses the non-uniform geometric noise schedule (Section 2 in ResShift).

\textbf{Comparison between students, IBMD and RSD}. For a fair comparison, we compare our RSD model without supervised losses and the IBMD model, because IBMD originally is a data-free distillation method (see contribution 3 in IBMD). The results show that RSD is better in \textbf{all evaluation metrics even without supervised losses} (PSNR, SSIM, LPIPS, MUSIQ, CLIPIQA) with a significant improvement in perceptual metrics (LPIPS, MUSIQ, CLIPIQA). The addition of supervised losses in RSD even more increases the margin between all evaluation metrics. In practice, we also found that the IBMD model requires a significant computational budget, which is in line with the complexity reported in Table 9 of IBMD. We trained the IBMD model on 8 A100 for 23 hours, which is $> 4$ times more than the training time of RSD. IBMD also has $> 3$ times bigger the number of parameters and $> 8$ times bigger the required GPU memory for inference.

\textbf{Summary}. For a quantitative comparison, the RSD model without supervised losses outperforms the IBMD model in all evaluation metrics. For a computational comparison, the RSD model has a much faster distillation training time, requires fewer parameters, GPU memory, and has a faster generation time during inference compared to IBMD. Thus, task-specific features of RSD used for Real-ISR problems are essential for high perceptual quality compared to the diffusion distillation method of IBMD, which is developed for general image-to-image translation problems. It supports our claim in practical contributions in Section \ref{sec:introduction}. We also visually observed that HR predictions for both I2SB and its IBMD distillation models struggle with severe blur artifacts, which explains low perceptual metrics (LPIPS, CLIPIQA, MUSIQ) and support higher fidelity metrics of PSNR and SSIM for I2SB. Due to the limitations of the file size of the submission, we do not provide visual results for Table \ref{tab:imagenet-test-ibmd} since Figure 8 of ResShift already shows the superiority of ResShift compared to I2SB for the SR problem.

% \vspace{0mm}
\begin{table*}[!ht]
% \captionsetup{font=footnotesize, labelfont=footnotesize}
    \centering
    %\vspace{-8mm} 
    \renewcommand{\arraystretch}{1.2}
    \setlength{\tabcolsep}{6pt}
    % \vspace{0mm}
    \caption{\footnotesize{Comparison on ImageNet-Test between I2SB \citep{pmlr-v202-liu23ai}, ResShift \citep{yue2023resshift}, and their 1-step distillation versions, IBMD \citep{gushchin2025inverse} and RSD, respectively. The best and second best results are highlighted in \textbf{bold} and \underline{underline}.}}
    \label{tab:imagenet-test-ibmd}
    % \vspace{-3mm}
    % \resizebox{\linewidth}{!}{
    % \lcl{
    \begin{tabular}{l|c|c|cccccc}
        \hline
        Methods & Distillation model & NFE & PSNR$\uparrow$ & SSIM$\uparrow$ & LPIPS$\downarrow$ & CLIPIQA$\uparrow$ & MUSIQ$\uparrow$ \\ \hline
        I2SB \cite{liu2023i2sb} & No & 15 & 24.80 & 0.663 & 0.302 & 0.444 & 49.584 \\
        I2SB \cite{liu2023i2sb} & No & 1 & \textbf{25.52} & \textbf{0.690} & 0.412 & 0.405 & 34.439 \\
        ResShift \cite{yue2023resshift} & No & 15 & \underline{25.01} & \underline{0.677} & 0.231 & 0.592 & 53.660 \\
        \hline
        IBMD \cite{gushchin2025inverse} & Yes & 1 & 23.91 & 0.619 & 0.284 &	0.505 & 54.667 \\
        RSD (\textbf{Ours}, distill only) & Yes & 1 & 23.97 & 0.643 & \underline{0.217} & \underline{0.660} & \underline{57.831} \\
        RSD (\textbf{Ours}) & Yes & 1 & 24.31 & 0.657 & \textbf{0.193}  & \textbf{0.681} & \textbf{58.947} \\
        \hline
    \end{tabular}
    % }
    % }
    % \vspace{-3mm} 
\end{table*}

\begin{table*}[!ht]
    % \vspace{-2mm}
    \centering
    % \vspace{-2mm}
    \caption{\footnotesize{Training and inference complexity between RSD and IBMD \citep{gushchin2025inverse}. All methods are tested with an LR image of size $64 \times 64$ for SR factor $\times 4$, and the inference is done on an NVIDIA A100 GPU. The best values are highlighted in \textbf{bold}.}}
    % \vspace{-2mm}
    \begin{tabular}{l|cc}
        \hline
        Methods & RSD (\textbf{Ours}) & IBMD  \\ 
        \hline
        Inference Step (NFE) &  \textbf{1} & \textbf{1} \\
        Inference Time (s) &  \textbf{0.059} & 0.077 \\ 
        \# Total Param (M) & \textbf{174} & 553 \\ 
        Maximum GPU memory (MB) & \textbf{539} & 4676 \\ 
        \hline
        { Training time (hours / \# GPU)} & \textbf{5 / 4 A100} & 23 / 8 A100  \\
        \hline
    \end{tabular}
    \label{tab:effectiveness_ibmd}
    % \vspace{-4mm}
    % \vspace{-3mm}
\end{table*}

\subsection{Generalization of RSD to other methods.}
\label{app:generalization_rsd}

It should be noted that the proof of Proposition~\ref{prop:main-proposition} (see Appendix~\ref{app:proofs}), as well as the formulation of our loss function, does not rely on a specific form of processes used in the ResShift model. The only difference from other approaches is how they sample the joint distribution $p_{\theta}(\widehat{x}_0, y_0, x_t)$ during the training procedure. Usually, $p_{\theta}(\widehat{x}_0, y_0, x_t)$ is written in the following way:
\begin{gather}
    p_{\theta}(\widehat{x}_0, y_0, x_t) = q(x_t \mid \widehat{x}_0, y_0) p_{\theta}(\widehat{x}_0 \mid y_0) p(y_0),
\end{gather}
which show that the only thing that is different for each method is the used distribution $q$. Thus, to adopt our approach to other processes, one only needs to change the distribution $q.$ For example, to train I2SB or LDM models using the RSD formulation, one can use their discrete formulations for both models \citep[Equation 11]{liu2023i2sb} in I2SB or \citep[Equation 4]{rombach2022highresolution} in LDM and plug them into $\mathcal{L}_{\theta}.$ 
% As one can see, no extra architectural or training changes would be needed.
\clearpage
\section{Algorithms of RSD, ResShift-RSD and Used Notation}
\label{app:algorithm_details}

\noindent The pseudocode for our RSD training algorithm is presented in Algorithm~\ref{alg:main}.
\par\vspace{-1mm}
\begin{algorithm}[H]
    % \scriptsize
    \footnotesize
    \caption{\label{alg:main}Residual Shifting Distillation (RSD).}
    
    \KwIn{}
    \nonl Training dataset $p_{\text{data}}(x_0, y_0)$\;
    \nonl Pretrained ResShift Teacher model $f^*$; frozen encoder and decoder of VAE: $\operatorname{Enc}, \operatorname{Dec}$\;
    \nonl Number of fake ResShift $(f_\phi)$ training iterations $K$, $f_{\phi}^{\text{encoder}}$ - encoder part of fake ResShift model, $N$ - amount of evenly spaced timesteps for multistep training\;
    \BlankLine
    \KwOut{}
    \nonl A trained generator $G_\theta$\;
    \BlankLine
    
    \SetKwProg{samplelatent}{func}{}{}
    \samplelatent{$\operatorname{SampleEverything}()$}{
    \nl Sample $(x_0, y_0) \sim p_{\text{data}}(x_0, y_0)$ \\
    \nonl $z_y \gets \operatorname{Enc}(\operatorname{upsample}(y_0))$; \quad $z_0 \gets \operatorname{Enc}(x_0)$ \\
    \nonl Sample $ t_n \sim \mathcal{U}\{t_1, \dots, t_N\}, \;z_{t_n} \sim q(z_{t_n}|z_0, z_y), \;\epsilon \sim \mathcal{N}(0, \m I)$  \tcp{Eq. \eqref{eq:forward_process_resshift_integrable}}  
    \nonl $\widehat{z}_0^{t_n} \gets G_\theta(z_{t_n}, y_0, t_n, \epsilon )$ \\
    \nonl Sample $t \sim \mathcal{U}\{1, \dots, T\},\;\; z_t \sim q(z_t | \widehat{z}_0^{t_n}, z_y)$ \tcp{Eq. \eqref{eq:forward_process_resshift_integrable}} 
    \nl \KwRet  $(x_0, y_0, z_0, z_y, t_n, z_{t_n}, \widehat{z}_0^{t_n}, t, z_t)$
    }{}
    % $p_{\text{latent}}$

    \text{~}
    
    \tcp{Initialize generator from pretrained model}\tcp{Initialize fake ResShift from pretrained model and GAN discriminator head randomly}
    $G_\theta \gets \operatorname{copyWeightsAndUnfreeze}(f^*)$\;
    $f_{\phi} \gets \operatorname{copyWeightsAndUnfreezeAndAddNoiseChannels}(f^*)$ \tcp{See Appendix~\ref{app:experiment_details}}
    $\operatorname{D}_{\psi} \gets \operatorname{randomInitOfDiscriminatorHead}()$
    
    \text{~}
    
    \While{train}{
        \tcp{Train fake ResShift model}
        
        \For{$k \gets 1$ \KwTo $K$}{
            $(x_0, y_0, z_0, z_y, t_n, z_{t_n}, \widehat{z}_0^{t_n}, t, z_t) \gets \operatorname{SampleEverything()}$ \tcp{Generate training data}
            
            $\mathcal{L}_{\text{fake}} \gets  w_t \| f_\phi (z_t, y_0, t) - \widehat{z}_0^{t_n} \|_2^2$ 
 \tcp{Eq. \eqref{eq:fake-resshift-obj}} 
            
            $\mathcal{L}_{\text{GAN}} \gets \operatorname{calcGANLossD}(\operatorname{D}_{\psi}(f_{\phi}^{\text{encoder}}(\widehat{z}_0^{t_n}, y_0, 0)), \operatorname{D}_{\psi}(f_{\phi}^{\text{encoder}}(z_0, y_0, 0)))$
 \tcp{Eq. \eqref{eq:gan loss}}
            
            $\mathcal{L}_\phi^{\text{total}} \gets \mathcal{L}_{\text{fake}} + \lambda_2\mathcal{L}_{\text{GAN}}$ 
 \tcp{Eq. \eqref{eq:final loss}}

            Update $\phi$ by using $\frac{\partial \mathcal{L}_\phi^{\text{total}}}{\partial \phi}$

            Update $\psi$ by using $\frac{\partial \mathcal{L}_{\text{GAN}}}{\partial \psi}$
            
            % $\phi \gets \operatorname{update}(\phi, \frac{\partial \mathcal{L}_\phi^{\text{total}}}{\partial \phi}), \;\psi \gets \operatorname{update}(\psi, \frac{\partial \mathcal{L}_{\text{GAN}}}{\partial \psi})$
        }
        \tcp{Train generator model}
        
        $(x_0, y_0, z_0, z_y, t_n, z_{t_n}, \widehat{z}_0^{t_n}, t, z_t) \gets \operatorname{SampleEverything()}$ \tcp{Generate training data}

        \text{~}
    
        $\mathcal{L}_\theta \gets \operatorname{calcThetaLoss}(f^* (z_t, y_0, t), f_{\phi}(z_t, y_0, t), \widehat{z}_0^{t_n})$ 
 \tcp{Compute $\mathcal{L}_{\theta}$ loss with Eq. \eqref{eq:tractable_objective}}

        \text{~}
        
        Sample $z_T \sim  \mathcal{N}(z_{T} | z_y, \kappa^{2} \m I)$; \quad $\widehat{z}_0 \gets G_\theta(z_{T}, y_0, T, \epsilon)$ \tcp{Eq. \eqref{eq:forward_process_resshift_integrable}}

        $\mathcal{L}_{\text{LPIPS}} \gets \operatorname{LPIPS}(x_0, \operatorname{Dec}(\widehat{z}_0))$ \tcp{Compute $\mathcal{L}_{\text{LPIPS}}$ loss}

        \text{~}
        
        \tcp{Compute generator $\mathcal{L}_{\text{GAN}}$ loss}
        
        $\mathcal{L}_{\text{GAN}} \gets \operatorname{calcGANLossG}(\operatorname{D}_{\psi}(f_{\phi}^{\text{encoder}}(\widehat{z}_0^{t_n}, y_0, 0)))$
 \tcp{Eq. \eqref{eq:gan loss}}
        
        \text{~}
        
        $\mathcal{L}_\theta^{\text{total}} \gets \mathcal{L}_{\theta} + \lambda_1\mathcal{L}_{\mathrm{LPIPS}} + \lambda_2\mathcal{L}_{\text{GAN}}$ 
 \tcp{Eq. \eqref{eq:final loss}}

        Update $\theta$ by using $\frac{\partial \mathcal{L}_\theta^{\text{total}}}{\partial \theta}$
        
        % $\theta \gets \operatorname{update}(\theta, \frac{\partial \mathcal{L}_\theta^{\text{total}}}{\partial \theta})$
    }
\end{algorithm}

\newpage
The pseudocode for the baseline ResShift-VSD training algorithm is presented in Algorithm \ref{alg:VSD}, while the foundational theoretical framework is detailed in Appendix \ref{app:vsd_only}. To ensure a fair comparison with the distillation loss in OSEDiff \citep{wu2024onestep}, specifically the VSD loss, under an identical experimental setup (i.e., ResShift), we adapted it to the ResShift framework using the same implementation details. In Table \ref{tab:notation} we provide a detailed explanation of the notation used in Algorithms \ref{alg:main} and \ref{alg:VSD}.

\begin{algorithm}
    % \scriptsize
    \footnotesize
    \caption{\label{alg:VSD}ResShift-VSD.}
    
    \KwIn{}
    \nonl Training dataset $p_{\text{data}}(x_0, y_0)$\;
    \nonl Pretrained ResShift Teacher model $f^*$; frozen encoder and decoder of VAE: $\operatorname{Enc} , \operatorname{Dec}$\;
    \nonl Number of fake ResShift $(f_\phi)$ training iterations $K$\;
    \BlankLine
    \KwOut{}
    \nonl A trained generator $G_\theta$\;
    \BlankLine
    
    \SetKwProg{samplelatent}{func}{}{}
    \samplelatent{$\operatorname{SampleEverything}()$}{
    \nonl Sample $(x_0, y_0) \sim p_{\text{data}}(x_0, y_0)$\;
    \nonl $z_y \gets \operatorname{Enc}(\operatorname{upsample}(y_0))$\\
    \nonl Sample $z_T \sim \mathcal{N}(z_y, \kappa^2 \eta_T \m I)$ \tcp{Eq. \eqref{eq:forward_process_resshift_integrable}}
    \nonl $\widehat{z}_0 \gets G_\theta(z_T, y_0, T)$\\
    \nonl Sample $t \sim \mathcal{U}\{1, \dots, T\},\;\; z_t \sim q(z_t | \widehat{z}_0, z_y)$ \tcp{Eq. \eqref{eq:forward_process_resshift_integrable}} 
    \nonl \KwRet  $(y_0, t, z_t, \widehat{z}_0)$ \\
    }{}
    % $p_{\text{latent}}$

    \text{~}
    
    \tcp{Initialize generator from pretrained model}\tcp{Initialize fake ResShift from pretrained model}
    $G_\theta \gets \operatorname{copyWeightsAndUnfreeze}(f^*)$\;
    $f_{\phi} \gets \operatorname{copyWeightsAndUnfreeze}(f^*)$\;
    
    \text{~}
    
    \While{train}{
        \tcp{Train fake ResShift model}
        
        \For{$k \gets 1$ \KwTo $K$}{
            $(y_0, t, z_t, \widehat{z}_0) \gets \operatorname{SampleEverything()}$ \tcp{Generate training data}
            
            $\mathcal{L}_{\text{fake}} \gets  w_t \| f_\phi (z_t, y_0, t) - \widehat{z}_0 \|_2^2$ 
 \tcp{Eq. \eqref{eq:fake-resshift-obj}}

            Update $\phi$ by using $\frac{\partial \mathcal{L}_{\text{fake}}}{\partial \phi}$
            
            % $\phi \gets \operatorname{update}(\phi, \frac{\partial \mathcal{L}_{\text{fake}}}{\partial \phi})$
        }
        \text{~}

        \tcp{Train generator model}
        
        $(y_0, t, z_t, \widehat{z}_0) \gets \operatorname{SampleEverything()}$ \tcp{Generate training data}
        $\mathcal{L}_\theta \gets \operatorname{calcThetaLoss}(\operatorname{stopgrad(}f^* (z_t, y_0, t)\operatorname{)}, \operatorname{stopgrad(}f_{\phi}(z_t, y_0, t)\operatorname{)}, \widehat{z}_0)$ 
 \tcp{Eq. \eqref{eq:tractable_objective}}

        Update $\theta$ by using $\frac{\partial \mathcal{L}_\theta}{\partial \theta}$
        
        % $\theta \gets \operatorname{update}(\theta, \frac{\partial \mathcal{L}_\theta}{\partial \theta})$
    }
\end{algorithm}

{
\begin{table}[h!]
\centering
\caption{Notation used in our paper. Pixel-space refers to the image domain, while latent-space refers to the internal representation domain.\label{tab:notation}}

\begin{tabular}{clc}
\hline
Symbol & Description & Space \\
\hline
$x_0$ & Original high-resolution image & Pixel-space \\
$\hat{x}_0$ & Reconstructed high-resolution image from $\hat{z}_0$ & Pixel-space \\
$y_0$ & Low-resolution input image & Pixel-space \\
\hline
$z_0$ & Latent representation of $x_0$ & Latent-space \\
$z_y$ & Latent representation of $y_0$ & Latent-space \\
$z_{t_n}$ & Noised latent sampled from $z_0$ & Latent-space \\
$z_T$ & Noised latent sampled from $\mathcal{N}(z_T|z_y, \kappa^2 I)$ & Latent-space \\
$\hat{z}_0$ & Denoised latent output of generator $G_\theta(z_T, y_0, T, \epsilon)$ & Latent-space \\
$z_t$ & Noised latent sampled from $q(z_t | \hat{z}_0^{t_n}, z_y)$ & Latent-space \\
$\hat{z}_0^{t_n}$ & Generator output from $G_\theta(z_{t_n}, y_0, t_n, \epsilon)$ & Latent-space \\
\hline
$f^*(z_t, y_0, t)$ & Frozen teacher network output & Latent-space \\
$f_\phi(z_t, y_0, t)$ & Student network output (trained) & Latent-space \\
\hline
$\epsilon$ & Noise variable sampled from $\mathcal{N}(0, I)$ & Latent-space \\
\hline
\end{tabular}
\end{table}
}

% \onecolumn
\clearpage
\section{Experimental Details}
\label{app:experiment_details}

\noindent \textbf{Noise condition}. By default, fake ResShift and generator models are initialized with teacher weights. Furthermore, for noise conditioning, as described in \wasyparagraph \ref{subsec:residual_shifting_distillation}, we implement an additional convolutional channel to expand the generator's first convolutional layer to accept noise as an additional input. The noise is concatenated with the encoded low-resolution image and is processed by a separate zero-initialized convolutional layer.

\noindent \textbf{Training hyperparameters}.  
We use the same hyperparameters as SinSR for training, including batch size, EMA rate, and optimizer type. To achieve smoother convergence, we replace the learning rate scheduler with a constant learning rate of $5\times10^{-5}$, which corresponds to the base learning rate of SinSR. Additionally, we adjust the AdamW \citep{loshchilov2019decoupled} optimizer’s $\beta$ parameters to $[0.9, 0.95]$ to further stabilize training. To ensure controlled adaptation between the generator and the fake ResShift models, we update the generator's weights once for every $K = 5$ updates of the fake model, following the strategy of DMD2 \citep{yin2024improved}. The influence of the hyperparameter $K$ on the training stability of RSD and its results is validated in Section \ref{sec:ablation_study}. Furthermore, we adopt the loss normalization technique proposed in SiD \citep{pmlr-v235-zhou24x} to improve the stability of the training.
In the final loss function (Equation \ref{eq:final loss}), we set $\lambda_1 = 2$ and $\lambda_2 = 3 \cdot 10^{-3}$ following OSEDiff \citep{wu2024onestep} and DMD2, respectively. 

\noindent \textbf{Training time}. The complete RSD training process, performed on 4 NVIDIA A100 GPUs, takes approximately $5$ hours. During this time, the student model undergoes around $3000$ gradient update iterations, while the fake model completes $15000$ iterations. In practice, we found that SinSR \citep{wang2024sinsr} requires around $60$ hours on a single NVIDIA A100 GPU for $30000$ iterations ($2.57$ days in Table 7 of SinSR \citep{wang2024sinsr}) and SinSR converges roughly 3 times slower than RSD. We explain this difference by \textbf{simulation-free property} of RSD, which SinSR does not have. We recall that SinSR is a knowledge distillation method, which runs a full teacher ResShift model for all $T = 15$ steps during training according to Equations 5 and 6 in SinSR \citep{wang2024sinsr}:
\begin{equation}
    x_{t - 1} = k_{t}f^{*}(x_{t}, y_{0}, t) + m_{t}x_{t} + j_{t}y_{0}, \quad t \in \{T, T -1, \ldots, 2,1\}, \label{eq:sinsr_simulation}
\end{equation}
\begin{equation}
    F(x_{T}, y_{0}) = x_{0}, \quad x_{T} = y_{0} + \kappa \sqrt{\eta_{T}}\epsilon, \quad \epsilon \sim \mathcal{N}(0, \mathbf{I}),
\end{equation}
\begin{equation}
    \mathcal{L}_{distill, SinSR} = L_{MSE}(f_{\theta}(x_{T}, y_{0}, T), F(x_{T}, y_{0})),
\end{equation}
where $f_{\theta}(x_{T}, y_{0}, T)$ is the student network in SinSR predicting the HR image in only one step, $F(x_{T}, y_{0})$ represents the deterministic sampling with the ResShift teacher model using Equation \eqref{eq:sinsr_simulation}, and $f^{*}(x_{t}, y_{0}, t)$ is the teacher model in ResShift. During training, RSD does not require full teacher simulation as SinSR does in Equation \eqref{eq:sinsr_simulation}. However, in the RSD training, additional $K = 5$ updates for the fake model are required, while SinSR does not have any fake model. Thus, RSD achieves an acceleration of around $\times 3$ for  training compared to SinSR. 

\noindent \textbf{Codebase}. Our method is implemented based on the original SinSR \href{https://github.com/wyf0912/SinSR}{repository} \citep{wang2024sinsr}, which serves as the primary source of code for our experiments. We build our method on this framework to implement our training Algorithm \ref{alg:main}, which is given in Appendix \ref{app:algorithm_details}.

\noindent \textbf{Teacher checkpoint}. Following SinSR \href{https://github.com/wyf0912/SinSR/blob/f1735490e980162435391162ddd18c0642327c5b/configs/SinSR.yaml#L4}{repository} \citep{wang2024sinsr}, we also distill the same ResShift checkpoint \texttt{resshift\_realsrx4\_s15\_v1.pth}, which was trained with $300$k iterations.

\noindent
\textbf{Datasets and baselines}. 
Table \ref{tab:sources_datasets} lists details on the datasets used for training and testing, including their sources and download links. Table \ref{tab:datasets_licenses} provides the associated licenses for the used datasets. Table \ref{tab:baselines} lists the models used for training and quality comparison and includes links to access them.

\noindent
\textbf{Evaluation of metrics for SR models}. For calculating SR metrics, we use the PyTorch Toolbox for Image Quality Assessment and the \texttt{pyiqa} package \citep{pyiqa}. We also used the image quality assessment \href{https://github.com/cswry/OSEDiff/blob/main/test\_metrics.py}{script} provided in the OSEDiff GitHub repository.

\begin{table*}[ht]
    \centering
    \captionof{table}{The used datasets and their sources}
     \label{tab:sources_datasets}
    \begin{tabular}{|c|c|c|}
        \hline
        Name & URL & Citation \\
        \hline
        RealSR-V3 & \href{https://github.com/csjcai/RealSR}{GitHub Link} & \citep{cai2019toward} \\ %BSD
        RealSet65 & \href{https://github.com/zsyOAOA/ResShift/tree/journal/testdata/RealSet65}{GitHub Link} & \citep{yue2023resshift} \\
        DRealSR & \href{https://github.com/xiezw5/Component-Divide-and-Conquer-for-Real-World-Image-Super-Resolution}{GitHub Link} & \citep{wei2020component} \\
        ImageNet & \href{https://www.image-net.org}{Website Link} & \citep{deng2009imagenet} \\
        ImageNet-Test & \href{https://drive.google.com/file/d/1NhmpON2dB2LjManfX6uIj8Pj_Jx6N-6l/view}{Google Drive Link} & \citep{yue2023resshift} \\
        DIV2K-Val-512 & \href{https://huggingface.co/datasets/Iceclear/StableSR-TestSets/tree/main}{Hugging Face Link} & 
        %https://github.com/openai/guided-diffusion
        \citep{agustsson2017ntire,wang2024exploiting} \\
        DRealSR-512 & \href{https://huggingface.co/datasets/Iceclear/StableSR-TestSets/tree/main}{Hugging Face Link} & \citep{wang2024exploiting,wei2020component} \\
        %https://github.com/alexzhou907/DDBM
        RealSR-512 & \href{https://huggingface.co/datasets/Iceclear/StableSR-TestSets/tree/main}{Hugging Face Link} & \citep{wang2024exploiting,cai2019toward} \\
        %https://github.com/thu-ml/DiffusionBridge
         RealLR200 & \href{https://drive.google.com/drive/folders/1XwgDTuKjh29AyKrI1fs2eRq3wZyEZa3n?usp=drive_link}{Google Drive Link} & \citep{wu2024seesr} \\
         RealLQ250 & \href{https://drive.google.com/file/d/16uWuJOyGMw5fbXHGcl6GOmxYJb_Szrqe/view}{Google Drive Link} &  \citep{yuang2024dreamclear} \\
        %https://github.com/thu-ml/DiffusionBridge
        \hline
    \end{tabular}
    %\vspace{-5mm}
    % \caption{The used datasets and their licenses.}
    % \label{tab:datasets}

\end{table*}

\begin{table*}[ht]
    \centering
    \captionof{table}{The used datasets and their licenses}\label{tab:datasets_licenses}
    \begin{tabular}{|c|c|}
        \hline
        Name &  License \\
        \hline
        RealSR-V3 & NTU S-Lab License 1.0 \\
        DRealSR & Unknown \\
        ImageNet & \href{http://image-net.org/}{Custom (research, non-commercial)} \\
        ImageNet-Test & NTU S-Lab License \\
        DIV2K-Val-512 & NTU S-Lab License \\
        DRealSR-512 & NTU S-Lab License \\
        %https://github.com/alexzhou907/DDBM
        RealSR-512  & NTU S-Lab License \\
        RealLR200  &  Apache 2.0 License \\
        RealLQ250  & Apache 2.0 License \\
        %https://github.com/thu-ml/DiffusionBridge
        \hline
    \end{tabular}
    %\vspace{-5mm}
    % \caption{The used datasets and their licenses.}

\end{table*}

\begin{table*}[ht]
    \centering
    \label{tab:license}
    \captionof{table}{Baselines used for comparison. In each case, we used original code from GitHub repositories and model weights.}
    \begin{tabular}{|c|c|c|c|}
        \hline
        Name & URL & Citation & License \\
        \hline
        Real-ESRGAN & \href{https://github.com/xinntao/Real-ESRGAN}{GitHub Link} & \citep{wang2021real} & BSD 3-Clause License \\
        BSRGAN & \href{https://github.com/cszn/BSRGAN}{GitHub Link} & \citep{zhang2021designing} & Apache-2.0 license \\
        SwinIR & \href{https://github.com/JingyunLiang/SwinIR}{GitHub Link} & \citep{liang2021swinir} & Apache-2.0 license \\

        ResShift & \href{https://github.com/zsyOAOA/ResShift}{GitHub Link} & \citep{yue2023resshift} & NTU S-Lab License 1.0 \\
        SinSR & \href{https://github.com/wyf0912/SinSR/tree/main}{GitHub Link} & \citep{wang2024sinsr} & CC BY-NC-SA 4.0 \\
        SUPIR & \href{https://github.com/Fanghua-Yu/SUPIR?tab=readme-ov-file}{GitHub Link} & \citep{yu2024scaling} & SUPIR Software License \\
        OSEDiff & \href{https://github.com/cswry/OSEDiff}{GitHub Link} & \citep{wu2024onestep} & Apache License 2.0 \\   
        AdcSR & \href{https://github.com/Guaishou74851/AdcSR}{GitHub Link} & \citep{chen2025adversarial} & Apache License 2.0 \\ 
        PiSA-SR &  \href{https://github.com/csslc/PiSA-SR}{GitHub Link} & \citep{sun2025pisasr} & Apache License 2.0 \\ 
         TSD-SR & \href{https://github.com/Microtreei/TSD-SR}{GitHub Link} & \citep{dong2025tsdsr} & Apache License 2.0 \\ 
        CTMSR & \href{https://github.com/LabShuHangGU/CTMSR}{GitHub Link} & \citep{you2025consistency} &  MIT License \\ 
         CCSR &  \href{https://github.com/csslc/CCSR}{GitHub Link} & \citep{sun2023ccsr} & Apache License 2.0 \\ 
        InvSR & \href{https://github.com/zsyOAOA/InvSR}{GitHub Link} & \citep{yue2025arbitrarysteps} & NTU S-Lab License 1.0 \\ 
        \hline
    \end{tabular}
    % \caption{Baselines used for comparison. In each case, we used original code from Github repositories and model weights as it is.}
    \label{tab:baselines}
\end{table*}

% \begin{table*}[ht]
%    \centering
%    \begin{tabular}{l|ccccc}
%        \hline
%        Methods & ResShift \citep{yue2023resshift} & SinSR \citep{wang2024sinsr} & SUPIR \citep{yu2024scaling} & OSEDiff \citep{wu2024onestep} & \textbf{Ours}\\ 
%        \hline
%        Inference Step & 15 & 1 & 50 & 1 & 1 \\
%        Inference Time (s) & 0.71 & 0.13 & 17.60 & 0.11 & 0.13 \\ 
%        MACs (G) & 5491 & 2649 & 464724 & 2265 & 2649 \\ 
%        \# Total Param (M) & 119 & 119 & 4801 & 1775 & 119 \\ 
%        Maximum GPU memory (MB) & 2140 & 1250 &  52530 & 4080 & 1130 \\ 
        
%        \hline
%    \end{tabular}
%    \caption{Complexity comparison among different methods. All methods are tested with an HR image of size $128 \times 128$ for scale factor $\times 4$, and the inference time is measured on an NVIDIA A100 GPU.}
%    \label{tab:effectiveness}
% \end{table*}
\section{Statement on LLM Usage}
\label{app:statement_on_llm_usage}
The authors used the large language model (LLM) only to improve the writing and grammar of the text. All the results from the LLM were checked by the authors.
\clearpage
\section{Additional Quantitative Results}
\label{app:full_quantitative_results}
We present an additional set of quantitative results, including more baselines and evaluations on full-size DRealSR \citep{wei2020component}, RealLR200 \citep{wu2024seesr}, and RealLQ250 \citep{yuang2024dreamclear}, which were not included in the main text due to space limitations:
\begin{itemize}[leftmargin=0.3cm, itemindent=0.5cm]
    \item Table \ref{tab:drealsr} provides results on full-size images from the DRealSR dataset \citep{wei2020component}.
     \item Table \ref{tab:reallr_reallq_benchmars} provides non-reference results on full-size images from the RealLR200 \citep{wu2024seesr} and RealLQ250 \citep{yuang2024dreamclear} datasets.
    \item \lcl{Table \ref{tab:benchmarks_full_images_extended}} presents an extended version of \lcl{Table \ref{tab:benchmarks_full_images}} on the RealSR \citep{cai2019toward} and RealSet65 \citep{yue2023resshift} datasets, with additional baselines.
    \item \lcl{Table \ref{tab:imagenet-test_full}} presents an extended version of \lcl{Table \ref{tab:imagenet-test}} on the ImageNet-Test dataset \citep{yue2023resshift} with additional baselines.
    \item \glb{Table \ref{tab:benchmarks_crops_short_full}} presents an extended version of \glb{Table \ref{tab:benchmarks_crops_short}} on crops from DIV2K \citep{agustsson2017ntire}, RealSR, and DRealSR used in StableSR \citep{wang2024exploiting} with additional baselines.
\end{itemize}

\noindent \textbf{Table \ref{tab:drealsr}}. We evaluated the following models for Table \ref{tab:drealsr} and followed their official implementations listed in Table \ref{tab:baselines}:
\begin{enumerate}[leftmargin=0.3cm, itemindent=0.5cm]
    \item \textbf{Diffusion-based SR models}. We ran pre-trained models of ResShift \citep{yue2023resshift}, SinSR \citep{wang2024sinsr}, OSEDiff \citep{wu2024onestep}, and SUPIR \citep{yu2024scaling} as representative members of diffusion-based SR models. We used the following checkpoints from the respective official repositories (Table \ref{tab:baselines}): \texttt{resshift\_realsrx4\_s15\_v1.pth}, \texttt{SinSR\_v2.pth}, \texttt{osediff.pkl}, and \texttt{SUPIR-v0Q.ckpt}. Due to the high demands for GPU memory for the SUPIR model, we ran it with tiled VAE using the flag \texttt{--use\_tile\_vae}. For FluxSR \cite{li2025one}, we used the results provided in their \href{https://drive.google.com/drive/folders/1olqumLOpazfSF4TGTFplO6mF0xKIWdBI}{Google Drive Link} and borrowed the results from their Tables 1 and 2.
    \item \textbf{State-of-the-art diffusion-based one-step SR models}. In addition to the ResShift, SinSR, OSEDiff, and SUPIR models, we also ran pre-trained, recent state-of-the-art one-step diffusion SR models, including TSD-SR \citep{dong2025tsdsr}, PiSA-SR \citep{sun2025pisasr}, CTMSR \citep{you2025consistency}, CCSR \citep{sun2023ccsr}, and InvSR \citep{yue2025arbitrarysteps}. We used the following checkpoints from the respective repositories listed in Table \ref{tab:baselines}: 1) TSD-SR - LoRA weights from the folder \texttt{checkpoint/tsdsr-mse}, embedding weights from the folder \texttt{dataset/default}, and the teacher SD3-medium model from the \href{https://huggingface.co/stabilityai/stable-diffusion-3-medium-diffusers}{Hugging Face Link}; 2) PiSA-SR - \texttt{pisa\_sr.pkl}; 3) CTMSR - \texttt{CTMSR.pth}; 4) InvSR - \texttt{noise\_predictor\_sd\_turbo\_v5\_diftune.pth}; 5) CCSR - to follow the CCSR GitHub repository, we used ControlNet weights from the \href{https://drive.google.com/drive/folders/1aHwgodKwKYZJBKs0QlFzanSjMDhrNyRA}{Google Drive Link}, VAE weights from the \href{https://drive.google.com/drive/folders/1yHfMV81Md6db4StHTP5MC-eSeLFeBKm8}{Google Drive Link}, and pre-trained ControlNet weights from the \href{https://drive.google.com/drive/folders/1LTtBRuObITOJwbW-sTDnHtp8xIUZFDHh}{Google Drive Link}, and Dino models from the \href{https://drive.google.com/drive/folders/1PcuZGUTJlltdPz2yk2ZIa4GCtb1yk_y6}{Google Drive Link}, respectively.
    \item \textbf{Non-diffusion SR models}. We ran pre-trained GAN-based SR models of Real-ESRGAN \citep{wang2021real} and BSRGAN \citep{zhang2021designing} with the checkpoint names \texttt{RealESRGAN\_x4plus.pth} and \texttt{BSRGAN.pth}, which are provided in the respective GitHub repositories listed in Table \ref{tab:baselines}. We ran the pre-trained SwinIR model \citep{liang2021swinir} with the checkpoint name \texttt{003\_realSR\_BSRGAN\_DFOWMFC\_s64w8\_SwinIR-L\_x4\_GAN.pth} as the representative model from transformer-based SR models using the respective GitHub repository listed in Table \ref{tab:baselines}.
\end{enumerate}
We compute the same set of metrics as in \glb{Table \ref{tab:benchmarks_crops_short}}: PSNR, SSIM, LPIPS, CLIPIQA, MUSIQ, DISTS, NIQE, and MANIQA-PIPAL. 

\noindent \textbf{Table \ref{tab:reallr_reallq_benchmars}}. We evaluated RSD and the following diffusion models on real-world benchmarks, namely RealLR200 \citep{wu2024seesr} and RealLQ250 \citep{yuang2024dreamclear}, using no-reference perceptual metrics (CLIPIQA, MUSIQ, NIQE, MANIQA), which follow the evaluation protocol of SeeSR \citep[Table 2]{wu2024onestep} and DreamClear \citep[Table 1]{yuang2024dreamclear}:
\begin{enumerate}[leftmargin=0.3cm, itemindent=0.5cm]
    \item \textbf{Diffusion-based SR models without T2I models}. We evaluated methods that were trained on the ImageNet data, namely ResShift, SinSR, CTMSR, and RSD.
    \item \textbf{T2I-based diffusion SR models}. We evaluated SUPIR, OSEDiff, AdcSR, PiSA-SR, TSD-SR, InvSR, and CCSR. For AdcSR \citep{chen2025adversarial}, we used the weights from the checkpoint \texttt{net\_params\_200.pkl} from the respective GitHub repository.  
\end{enumerate}

\begin{table*}[ht]
    \renewcommand{\arraystretch}{1.2}
    \setlength{\tabcolsep}{3pt}
    \centering
    \captionof{table}{Quantitative results of models on full size images from DRealSR \citep{wei2020component}.  The best and second best results are highlighted in \textbf{bold} and \underline{underline}.}
    \resizebox{\linewidth}{!}{
    \begin{tabular}{l|c|ccccccccccc}
        \hline
        Methods & Model class & NFE & PSNR$\uparrow$ & SSIM$\uparrow$ & LPIPS$\downarrow$ & CLIPIQA$\uparrow$ & MUSIQ$\uparrow$ & DISTS$\downarrow$ & NIQE$\downarrow$ & MANIQA$\uparrow$ \\ \hline
        BSRGAN \citep{zhang2021designing} & \multirow{2}{*}{\begin{tabular}{c} GANs \end{tabular}} & 1  & \underline{28.34} & 0.8206 & 0.2929 & 0.5704 & 35.500 & 0.1636 & 4.6811 & 0.4682 \\ 
        Real-ESRGAN \citep{wang2021real} & & 1 & 27.91 & \underline{0.8249} & \underline{0.2818} & 0.5180 & 35.255 & \underline{0.1464} & 4.7142 & 0.4756 \\ 
        \hline
        SwinIR \citep{liang2021swinir} & Transformer & 1 & 28.31 & \textbf{0.8272} & \textbf{0.2741} & 0.5072 & 35.826 & \textbf{0.1387} & 4.6665 & 0.4617 \\ 
        \hline
        SUPIR \citep{yu2024scaling} & \multirow{7}{*}{\begin{tabular}{c} Diffusion model, \\
        used T2I prior \end{tabular}} & 50 & 25.73 & 0.7224 & 0.3906 & 0.5862 & 36.089 & 0.1944 & 4.4685 & 0.5720 \\ 
        OSEDiff \citep{wu2024onestep} & & 1 & 26.67 & 0.7922 & 0.3123 & 0.7264 & \underline{37.761} & 0.1617 & 4.1768 & \underline{0.5883} \\ 
       PiSA-SR \citep{sun2025pisasr} & & 1 & 27.43 & 0.8119 & 0.2844 & 0.6878 & 35.060 & 0.1537 & 4.4783 & 0.5615 \\ 
        TSD-SR \citep{dong2025tsdsr} & & 1 & 26.53 & 0.7637 & 0.3084 & \textbf{0.7517} & 37.395 & 0.1567 & \textbf{3.6624} & 0.5549 \\
        InvSR \citep{yue2025arbitrarysteps} & & 1 & 26.06 & 0.7455 & 0.3578 & \underline{0.7485} & 33.878 & 0.1838 & \underline{3.7279} & \textbf{0.5928} \\ 
         CCSR \citep{sun2023ccsr} & & 1 & 27.71 & 0.8022 & 0.3208 & 0.7104 & 35.716 & 0.1816 & 4.3081 & 0.5720 \\ 
         FluxSR \citep{li2025one} & & 1 & 25.92 & 0.7592 & 0.3618 & 0.7347 & 37.287 & 0.1928 & 4.6947 & 0.5566 \\ 
        %ResShift-VSD (Appendix \ref{app:vsd}) & 25.23 & 0.5898 & 0.5890 & 0.746 & 35.6009 & 0.2405 & 5.6967 & 0.4793 \\
        \hline
        ResShift \citep{yue2023resshift} & \multirow{4}{*}{\begin{tabular}{c} Diffusion model, \\
        no T2I prior \end{tabular}}  & 15 & \textbf{28.76} & 0.7863 & 0.4310 & 0.5838 & 32.042 & 0.2314 & 6.6335 & 0.4297 \\ 
         CTMSR \citep{you2025consistency} & & 1 & 28.28 & 0.8017 &  0.3355 & 0.6821 & 33.206 & 0.1946 & 4.7795 & 0.4702 \\ 
        SinSR \citep{wang2024sinsr} & & 1  & 27.32 &   0.7233 & 0.4452 & 0.7223 & 32.800 & 0.2368 & 5.5748 & 0.4757 \\ 
        RSD (\textbf{Ours}) & & 1 & 27.66 & 0.7864 & 0.3105 & 0.7398 & \textbf{38.340} & 0.1868 & 4.6098 & 0.5314 \\ \hline
    \end{tabular}
    }
    %\caption{Quantitative results of models on full size images from DRealSR \citep{wei2020component}.  The best and second best results are highlighted in \textbf{bold} and \underline{underline}.}
    \label{tab:drealsr}
\end{table*}

\begin{table*}[!ht]
% \vspace{-4mm} 
% \captionsetup{font=footnotesize, labelfont=footnotesize}
    \centering
    \captionof{table}{Quantitative results of diffusion models on RealLR200 \citep{wu2024seesr} and RealLQ250 \citep{yuang2024dreamclear} datasets. The best and second best results are highlighted in \textbf{bold} and \underline{underline}.}
    \renewcommand{\arraystretch}{1.2}
    \setlength{\tabcolsep}{6pt}
    \resizebox{\linewidth}{!}{
    %\lcl{ % Light blue background for the entire table environment
    
    \begin{tabular}{l|c|c|cccc|cccc}
        \hline
        \multirow{3}{*}{Methods} & \multirow{3}{*}{T2I prior}  & \multirow{3}{*}{NFE} & \multicolumn{8}{c}{Datasets}\\
        \cline{4-11}
        & & & \multicolumn{4}{c|}{\textit{RealLR200}} & \multicolumn{4}{c}{\textit{RealLQ250}} \\
        \cline{4-11}
        & & &  CLIPIQA$\uparrow$ & MUSIQ$\uparrow$ & NIQE$\downarrow$ & MANIQA$\uparrow$ &  CLIPIQA$\uparrow$ & MUSIQ$\uparrow$ & NIQE$\downarrow$ & MANIQA$\uparrow$ \\
        \hline
        SUPIR \citep{yu2024scaling} & \multirow{8}{*}{\begin{tabular}{c} yes, \\ $>450$M params \end{tabular}} & 50 & 0.6188 & 64.79 & 4.1862 & 0.6120 & 0.5746 &	65.72 &	\textbf{3.6607} & 0.5969 \\
        OSEDiff \citep{wu2024onestep} & & 1 & 0.6728 & 69.45& 4.0506 & 0.6153 & 0.6724 &	69.56 & 3.9682 & 0.5889  \\
        AdcSR \citep{chen2025adversarial} & & 1 & 0.7047 & 70.35 & \underline{3.8792} & 0.6174  & 0.6889 & 69.98 &	3.7181 & 0.5944 \\
        PiSA-SR \citep{sun2025pisasr} & & 1 & 0.7039 & 70.90 & 	3.9594 & \underline{0.6419} & 0.7054 & 71.25 &	3.9162 & \textbf{0.6190} \\
        TSD-SR \citep{dong2025tsdsr} & & 1 & \textbf{0.7335} & \textbf{72.06} & \textbf{3.8352} & 0.6248 & \underline{0.7368} & \textbf{73.22} & \underline{3.6996} & \underline{0.6037} \\
        InvSR \citep{yue2025arbitrarysteps} &
        & 1 & 0.6774 & 68.15 & 4.0378 & \textbf{0.6461}
        & 0.6499 & 64.77 &	4.6505 & 0.5810 \\
        CCSR \citep{sun2023ccsr} & & 1 & 0.6937 & 70.49 & 4.3108 & 0.6319 & 0.6850 & 70.80 & 4.4760 & 0.6021 \\
        FluxSR \citep{li2025one} & & 1 & 0.7101 & \underline{71.60} & 5.1905 & 0.6117 & \textbf{0.7374} & \underline{72.65} & 5.3973 & 0.5901 \\
        \hline
        ResShift \citep{yue2023resshift} & \multirow{4}{*}{\begin{tabular}{c} no, \\ $<180$M params \end{tabular}} & 15  & 0.6368 & 61.80 &	5.7016 & 0.5436 & 0.6348 & 61.99 & 5.7622 & 0.5364 \\
        SinSR \citep{wang2024sinsr} & & 1 & 0.7089 & 64.90 & 5.3329 & 0.5561 & 0.7142 &	65.29 &	5.4630 & 0.5294 \\
        CTMSR \citep{you2025consistency} & & 1 & 0.6754 & 67.63 & 4.2943 & 0.5426 & 0.6701 & 68.07 &	4.5831 & 0.5130 \\
        RSD (\textbf{Ours}) & & 1 & \underline{0.7151} & 68.66 & 4.7074 & 0.5949 & 0.7252 & 69.63 &	4.5531 & 0.5826 \\
        \hline
    \end{tabular}
    }

    % }
    %\vspace{-2mm} 
    %\caption{\lcl{\footnotesize{Extended quantitative results of models on two real-world datasets. The best and second best results are highlighted in \textbf{bold} and \underline{underline}.}}}
    \label{tab:reallr_reallq_benchmars}
%\vspace{-2mm} 
\end{table*}

\noindent \lcl{\textbf{Table \ref{tab:benchmarks_full_images_extended}}}. We report an extended version of \lcl{Table \ref{tab:benchmarks_full_images}} with additional baselines used in the ResShift and SinSR papers:
\begin{enumerate}[leftmargin=0.3cm, itemindent=0.5cm]
    \item \textbf{Non-diffusion SR models}. We evaluated Real-ESRGAN \citep{wang2021real} and BSRGAN \citep{zhang2021designing} on RealSR and RealSet65.  We also evaluated SwinIR on RealSR and RealSet65.
    \item \textbf{State-of-the-art diffusion-based one-step SR models}. We also evaluated InvSR and CCSR using the same pre-trained models as for Table \ref{tab:drealsr}. For D$^{3}$SR \cite{li2025unleashing}, we borrow the results from their Tables 1 and 2.
\end{enumerate}

\noindent \lcl{\textbf{Table \ref{tab:imagenet-test_full}}}. We report an extended version of \lcl{Table \ref{tab:imagenet-test}} with additional baselines used in the ResShift and SinSR papers:
\begin{enumerate}[leftmargin=0.3cm, itemindent=0.5cm]
    \item \textbf{Diffusion-based SR models}. We borrow the results of Table 2 from SinSR for LDM-15 and LDM-30 \citep{rombach2022highresolution} and SinSR \citep{wang2024sinsr}. We borrow the results of Table 3 from \citep{yue2023resshift} for ResShift. In addition to TSD-SR, PiSA-SR, CTMSR, and AdcSR, we also evaluated InvSR and CCSR using the same pre-trained models as for Table \ref{tab:drealsr}.
    \item \textbf{Non-diffusion SR models}. We borrow the results of Table 2 from SinSR for ESRGAN \citep{wang2019esrgan}, RealSR-JPEG \citep{ji2020realworld}, Real-ESRGAN \citep{wang2021real}, and BSRGAN \citep{zhang2021designing}. We also borrow the results of Table 2 from SinSR for DASR \citep{liang2022efficient} and SwinIR \citep{liang2021swinir}.
    %  \item \textbf{DASR and SwinIR}.
    % \item \textbf{State-of-the-art diffusion-based one-step SR models}. In addition to TSD-SR, PiSA-SR, CTMSR, and AdcSR, we also evaluated InvSR and CCSR using the same pre-trained models as for Table \ref{tab:drealsr}.
\end{enumerate}

\noindent \glb{\textbf{Table \ref{tab:benchmarks_crops_short_full}}}. We report an extended version of \glb{Table \ref{tab:benchmarks_crops_short}} with additional baselines used in \citep[Table 1]{wu2024onestep}.
% \begin{enumerate}[leftmargin=0.3cm, itemindent=0.5cm]
%     \item \textbf{Diffusion-based SR models}. We borrow the results of Table 1 from OSEDiff for StableSR \citep{wang2024exploiting}, DiffBIR \citep{lin2024diffbir}, SeeSR \citep{wu2024seesr}, PASD \citep{yang2024pixelaware}, ResShift \citep{yue2023resshift}, and SinSR \citep{wang2024sinsr}. In addition to TSD-SR, PiSA-SR, CTMSR, and AdcSR, we also evaluated InvSR and CCSR using the same pre-trained models as for Table \ref{tab:drealsr}.
%     \item \textbf{GAN-based SR models}. We borrow the results of Table 1 from OSEDiff for Real-ESRGAN \citep{wang2021real}, BSRGAN \citep{zhang2021designing}, LDL \citep{liang2022details}, and FeMASR \citep{chen2022femasr}.
%     % \item \textbf{State-of-the-art diffusion-based one-step SR models}. In addition to TSD-SR, PiSA-SR, CTMSR, and AdcSR, we also evaluated InvSR and CCSR using the same pre-trained models as for Table \ref{tab:drealsr}.
% \end{enumerate}
\begin{table*}[!ht]
% \vspace{-4mm} 
% \captionsetup{font=footnotesize, labelfont=footnotesize}
    \centering
    \captionof{table}{Extended quantitative results of models on two real-world datasets, RealSR \cite{cai2019toward} and RealSet65 \cite{yue2023resshift}. The best and second best results are highlighted in \textbf{bold} and \underline{underline}.}
    \renewcommand{\arraystretch}{1.2}
    \setlength{\tabcolsep}{6pt}
    \resizebox{\linewidth}{!}{
    \lcl{ % Light blue background for the entire table environment
    \begin{tabular}{l|c|c|ccccc|cc}
        \hline
        \multirow{3}{*}{Methods} & \multirow{3}{*}{Model class} & \multirow{3}{*}{NFE} & \multicolumn{7}{c}{Datasets}\\
        \cline{4-10}
        & & & \multicolumn{5}{c|}{\textit{RealSR}} & \multicolumn{2}{c}{\textit{RealSet65}} \\
        \cline{4-10}
       &  &  & PSNR$\uparrow$ & SSIM$\uparrow$ & LPIPS$\downarrow$ & CLIPIQA$\uparrow$ & MUSIQ$\uparrow$ & CLIPIQA$\uparrow$ & MUSIQ$\uparrow$ \\
        \hline
         BSRGAN~\citep{zhang2021designing} & \multirow{2}{*}{GANs}  & 1 & \textbf{26.51} & \underline{0.775} & \underline{0.269} & 0.5439 & 63.586 & 0.6163 & 65.582 \\
        Real-ESRGAN~\citep{wang2021real} & & 1  & 25.85 & 0.773 & 0.273 & 0.4898 & 59.678 & 0.5995 & 63.220 \\
        \hline
        SwinIR~\citep{liang2021swinir} & Transformer & 1  & 26.43 & \textbf{0.786} & \textbf{0.251} & 0.4654 & 59.636 & 0.5782 & 63.822 \\
        \hline
        SUPIR \citep{yu2024scaling} & \multirow{9}{*}{\begin{tabular}{c} Diffusion model, \\ used T2I prior \end{tabular}} & 50 & 24.38 & 0.698 & 0.331 & 0.5449 & 63.676 & 0.6133 & 66.460 \\
        OSEDiff \citep{wu2024onestep} & & 1 & 25.25 & 0.737 & 0.299 & 0.6772 & 67.602 & 0.6836 & 68.853 \\
        AdcSR \citep{chen2025adversarial} & & 1 & 25.63 & 0.735 & 0.300 & 0.7033 & 67.550 & 0.7044 & 69.185 \\
        PiSA-SR \citep{sun2025pisasr} & & 1 & 25.59 & 0.750 & 0.271 & 0.6678 & 67.993 & 0.7062 & 70.208 \\
        TSD-SR \citep{dong2025tsdsr} & & 1 & 24.88 & 0.723 & 0.281 & 0.7336 & \textbf{69.871} & 0.7263 & \textbf{70.958} \\
        InvSR \citep{yue2025arbitrarysteps} & & 1 &
        24.73 & 0.731 & 0.275 & 0.6798 & 66.403 & 0.6990 & 67.770 \\
        CCSR \citep{sun2023ccsr} & & 1 &
        25.99 & 0.752 & 0.287 & 0.6656 & 67.991 & 
        0.7150 & 70.731 \\
        FluxSR \citep{li2025one} & & 1 &
        24.83 & 0.718 & 0.320 & 0.6490 & \underline{68.950} & 
        - & \underline{70.750} \\
        D$^{3}$SR \citep{li2025unleashing} & & 1 &
        24.11 & 0.715 & 0.296 & 0.5647 & 68.230 & 
        0.5481 & 70.250 \\
        \hline
        ResShift \citep{yue2023resshift} & \multirow{7}{*}{\begin{tabular}{c} Diffusion model, \\ no T2I prior \end{tabular}} & 15 & \underline{26.49} & 0.754 & 0.360 & 0.5958 & 59.873 & 0.6537 & 61.330 \\
        CTMSR \citep{you2025consistency} & & 1 & 26.18 & 0.765 & 0.294 & 0.6449 & 64.796 & 0.6893 & 67.173 \\
        SinSR (distill only) \citep{wang2024sinsr} & & 1 & 26.14 & 0.732 & 0.357 & 0.6119 & 57.118 & 0.6822 & 61.267 \\
        SinSR \citep{wang2024sinsr} & & 1 & 25.83 & 0.717 & 0.365 &  0.6887 & 61.582 & 0.7150 & 62.169 \\
        ResShift-VSD (Appendix \ref{app:vsd}) &  & 1 & 23.96 & 0.616 & 0.466 & \underline{0.7479} & 63.298 & \textbf{0.7606} & 66.701  \\
        RSD (\textbf{Ours}, distill only) & & 1 & 24.92 & 0.696 & 0.355 & \textbf{0.7518} & 66.430 & \underline{0.7534} & 68.383 \\
        RSD (\textbf{Ours}) & & 1 & 25.91 & 0.754 & 0.273 & 0.7060 & 65.860 & 0.7267 & 69.172 \\
        \hline
    \end{tabular}
    }
    }
    %\vspace{-2mm} 
    %\caption{\lcl{\footnotesize{Extended quantitative results of models on two real-world datasets. The best and second best results are highlighted in \textbf{bold} and \underline{underline}.}}}
    \label{tab:benchmarks_full_images_extended}
%\vspace{-2mm} 
\end{table*}

\begin{table*}[!ht]
% \captionsetup{font=footnotesize, labelfont=footnotesize}
    \centering
    \captionof{table}{Extended quantitative results of models on ImageNet-Test \citep{yue2023resshift}. The best and second best results are highlighted in \textbf{bold} and \underline{underline}.}
    % \vspace{-2mm}
    \renewcommand{\arraystretch}{1.2}
    \setlength{\tabcolsep}{6pt}
    \resizebox{0.99\linewidth}{!}{
    \lcl{
    \begin{tabular}{l|c|c|ccccc}
        \hline
        Methods & Model class & NFE & PSNR$\uparrow$ & SSIM$\uparrow$ & LPIPS$\downarrow$ & CLIPIQA$\uparrow$ & MUSIQ$\uparrow$ \\ \hline
        ESRGAN~\citep{wang2019esrgan} & \multirow{4}{*}{GANs}  & 1    & 20.67 & 0.448 & 0.485 & 0.451 & 43.615  \\
        Real-ESRGAN~\citep{wang2021real} & & 1  & 24.04 & 0.665 & 0.254 & 0.523 & 52.538 \\
        RealSR-JPEG~\citep{ji2020realworld} & & 1    & 23.11 & 0.591 & 0.326 & 0.537 & 46.981
        \\
        BSRGAN~\citep{zhang2021designing} & & 1 & 24.42 & 0.659 & 0.259 & {0.581} & 54.697  \\
        \hline
        SwinIR~\citep{liang2021swinir} & Transformer & 1    & 23.99 & 0.667 & {0.238} & 0.564 & {53.790} \\
        \hline 
        DASR~\citep{liang2022efficient} & Mixture of experts & 1   & 24.75 & \underline{0.675} & 0.250 & 0.536 & 48.337  \\
        %FeMaSR~\citep{chen2022real}     & 22.49 & 0.589 & 0.259 & 0.714 & 55.052 & 0.1835 & 5.9783 & 0.6307 \\
        %\Xhline{0.4pt}
                \hline
        LDM~\citep{rombach2022highresolution} & \multirow{9}{*}{\begin{tabular}{c} Diffusion model, \\ used T2I prior \end{tabular}} & 30  & 24.49 & 0.651 & 0.248 & 0.572 & 50.895  \\
        LDM~\citep{rombach2022highresolution} & & 15  & \underline{24.89} & 0.670 & 0.269 & 0.512 & 46.419  \\
        
        SUPIR \citep{yu2024scaling} & & 50 & 22.56 & 0.574 & 0.302 & \textbf{0.786} & 60.487 \\
        OSEDiff \citep{wu2024onestep} & & 1 & 23.02 & 0.619 & 0.253 & 0.677 & 60.755
        \\
       AdcSR \citep{chen2025adversarial} & & 1 & 22.99 & 0.615 & 0.252 & \underline{0.711} & \underline{63.218}  \\
        PiSA-SR \citep{sun2025pisasr} & & 1 & 24.29 & 0.670 & 0.213 & 0.629 & 62.137 \\
        TSD-SR \citep{dong2025tsdsr} & & 1 & 23.58 & 0.645 & \underline{0.197} & 0.673 & \textbf{65.299} \\
       InvSR \citep{yue2025arbitrarysteps} & & 1 & 21.31 & 0.604 & 0.293 & 0.641 & 54.870 \\ CCSR \citep{sun2023ccsr} & & 1 & 24.79 & \textbf{0.677} & 0.238 & 0.602 & 61.789 \\
        \hline
        ResShift \citep{yue2023resshift} & \multirow{7}{*}{\begin{tabular}{c} Diffusion model, \\ no T2I prior \end{tabular}} & 15 & \textbf{25.01} & \textbf{0.677} & 0.231 & 0.592 & 53.660 \\
        CTMSR \citep{you2025consistency} & & 1 & 24.73 & 0.666 & \underline{0.197} & 0.691 & 60.142 \\
        SinSR (distill only) \citep{wang2024sinsr} & & 1 & 24.69 & 0.664 & 0.222 & 0.607 & 53.316 \\
        SinSR \citep{wang2024sinsr} & & 1 & 24.56 & 0.657 & 0.221 & 0.611 & 53.357 \\ 
        
        ResShift-VSD (Appendix \ref{app:vsd}) & & 1 & 23.69 & 0.624 & 0.230 & 0.665 & 58.630 \\
        RSD (\textbf{Ours}, distill only) & & 1 & 23.97 & 0.643 & 0.217 & 0.660 & 57.831 \\
        RSD (\textbf{Ours}) & & 1 & 24.31 & 0.657 & \textbf{0.193}  & 0.681 & 58.947 \\
        \hline
    \end{tabular}
    }
    }
    %\vspace{-2mm} 
    % \caption{\lcl{\footnotesize{Extended quantitative results of models on ImageNet-Test \citep{yue2023resshift}. The best and second best results are highlighted in \textbf{bold} and \underline{underline}.}}}
    \label{tab:imagenet-test_full}
    %\vspace{-5mm} 
\end{table*}

\begin{table*}[ht]
% \vspace{-3mm}
% \captionsetup{font=footnotesize, labelfont=footnotesize}
\centering
    \captionof{table}{Extended quantitative results of models on crops from StableSR \citep{wang2024exploiting}. The best and second best results are highlighted in \textbf{bold} and \underline{underline}.}
    % \vspace{-2mm}
    \renewcommand{\arraystretch}{1.2}
    \setlength{\tabcolsep}{3pt}
    % \resizebox{0.7\linewidth}{!}{
    \resizebox{!}{0.46\textheight}{
    \glb{\begin{tabular}{c|c|c|c|ccccccccc}
        \hline
        Datasets & Methods & Model class & NFE & PSNR$\uparrow$ & SSIM$\uparrow$ & LPIPS$\downarrow$ & DISTS$\downarrow$ & NIQE$\downarrow$ & MUSIQ$\uparrow$ & MANIQA$\uparrow$ & CLIPIQA$\uparrow$ & FID$\downarrow$ \\
        \hline
        \multirow{20}{*}{DIV2K-Val}
        & BSRGAN \citep{zhang2021designing} & \multirow{4}{*}{GANs} & 1   & 24.58 & 0.6269 & 0.3351 & 0.2275  & 4.7518 & 61.20  & 0.5071  & 0.5247 & 44.23  \\
        & Real-ESRGAN \citep{wang2021real} & & 1 & 24.29 & \textbf{0.6371} & 0.3112 & 0.2141  & 4.6786 & 61.06  & 0.5501  & 0.5277  & 37.64 \\
        & LDL \citep{liang2022details} & & 1 & 23.83 & \underline{0.6344} & 0.3256 & 0.2227 & 4.8554 & 60.04  & 0.5350  & 0.5180 & 42.29  \\
        & FeMASR \citep{chen2022femasr} & & 1     & 23.06 & 0.5887 & 0.3126 & 0.2057 &  4.7410 & 60.83  & 0.5074  & 0.5997  & 35.87   \\
        
        \cline{2-13}
        
        & StableSR \citep{wang2024exploiting} & \multirow{12}{*}{\begin{tabular}{c} Diffusion model, \\ used T2I prior \end{tabular}} & 200 & 23.26 & 0.5726 & 0.3113 & 0.2048 & 4.7581 & 65.92  & 0.6192  & 0.6771 & \textbf{24.44} \\
        & DiffBIR \citep{lin2024diffbir} & & 50 & 23.64 & 0.5647 & 0.3524 & 0.2128 & 4.7042 & 65.81  & 0.6210  & 0.6704 & 30.72 \\
        & SeeSR \citep{wu2024seesr} & & 50   & 23.68 & 0.6043 & 0.3194 & 0.1968 & 4.8102 & 68.67 & 0.6240  & 0.6936 & 25.90 \\
        & PASD \citep{yang2024pixelaware} & & 20 & 23.14 & 0.5505 & 0.3571 & 0.2207 & 4.3617 & 68.95  & \textbf{0.6483} & 0.6788 & 29.20  \\

        & SUPIR \citep{yu2024scaling} & & 50 & 22.13 & 0.5280 & 0.3923 & 0.2314 & 5.6758 & 63.82 & 0.5933 & \underline{0.7147} & 31.46 \\

        & OSEDiff \citep{wu2024onestep} & & 1 & 23.72 & 0.6108 & 0.2941 & 0.1976 & 4.7097 & 67.97 & 0.6148 & 0.6683 & 26.32 \\
        & AdcSR \citep{chen2025adversarial} & & 1 & 23.74 & 0.6017 &  0.2853 & 0.1899 & 4.3579 & 68.00 & 0.6073 & 0.6764 & 25.52 \\
        & PiSA-SR \citep{sun2025pisasr} & & 1 & 23.87 & 0.6058 & \underline{0.2823} & 0.1934 & 4.5565 & \underline{69.68} & 0.6375 & 0.6928 & \underline{25.09} \\
        & TSD-SR \citep{dong2025tsdsr} & & 1 & 23.02 & 0.5808 & \textbf{0.2673} & \underline{0.1821} & \underline{4.3244} & \textbf{71.69} & 0.6192 & \textbf{0.7416} &  29.16 \\
         & InvSR \citep{yue2025arbitrarysteps} & & 1 &  23.10 & 0.5985 & 0.3045 & 0.1985 & 4.7056 & 68.43 & \underline{0.6385} & 0.7117 & 28.45 \\
         & CCSR \citep{sun2023ccsr} & & 1 & 24.30 & 0.6283 & 0.2979 & 0.2020 & 5.3367 & 69.52 & 0.6145 & 0.6752 & 30.86 \\
         & D$^{3}$SR \citep{li2025unleashing} & & 1 & 22.05 & 0.6031 & 0.3556 & \textbf{0.1500} & \textbf{3.2950} & 68.51 & 0.5795 & 0.5370 & - \\
        \cline{2-13}

        & ResShift \citep{yue2023resshift} & \multirow{4}{*}{\begin{tabular}{c} Diffusion model, \\ no T2I prior \end{tabular}} & 15 & \underline{24.65} & 0.6181 & 0.3349 & 0.2213 &  6.8212 & 61.09 & 0.5454 & 0.6071 & 36.11 \\
        
        & SinSR \citep{wang2024sinsr} & & 1 & 24.41 & 0.6018 & 0.3240 & 0.2066 & 6.0159 & 62.82 & 0.5386 & 0.6471 & 35.57 \\

        & CTMSR \citep{you2025consistency} & & 1 & \textbf{24.88} & 0.6265 & 0.3026 & 0.2040 & 5.1146 & 65.62 & 0.5165 & 0.6601 & 34.15 \\
        
        & RSD (\textbf{Ours}) & & 1 & 23.91 & 0.6042 & 0.2857 & 0.1940 & 5.1987 & 68.05 & 0.5937 & 0.6967 & 34.84  \\
        %& \textit{Difference} & - & \textcolor{green!50!black}{+0.80\%} & \textcolor{red}{-1.08\%} & \textcolor{green!50!black}{+2.86\%} & \textcolor{green!50!black}{+1.82\%} & \textcolor{red}{-10.38\%} & \textcolor{green!50!black}{+0.12\%} & \textcolor{red}{-3.43\%} & \textcolor{green!50!black}{+4.25\%} \\   
        
        \hline
        \multirow{19}{*}{DRealSR} 
        & BSRGAN \citep{zhang2021designing} & \multirow{4}{*}{GANs} & 1    & \textbf{28.75} & 0.8031 & 0.2883 & 0.2142 & 6.5192 & 57.14  & 0.4878  & 0.4915 & 155.63  \\
        & Real-ESRGAN \citep{wang2021real} &  & 1 & 28.64 & \underline{0.8053} & \underline{0.2847} & \textbf{0.2089} & 6.6928 & 54.18  & 0.4907  & 0.4422  & 147.62 \\
        & LDL \citep{liang2022details} &  & 1       & 28.21 & \textbf{0.8126} & \textbf{0.2815} & \underline{0.2132} & 7.1298 & 53.85  & 0.4914  & 0.4310 & 155.53  \\
        & FeMASR \citep{chen2022femasr} &  & 1    & 26.90 & 0.7572 & 0.3169 & 0.2235 & 5.9073 & 53.74  & 0.4420  & 0.5464  &  157.78 \\

        \cline{2-13}
        
        & StableSR \citep{wang2024exploiting} & \multirow{11}{*}{\begin{tabular}{c} Diffusion model, \\ used T2I prior \end{tabular}} & 200 & 28.03 & 0.7536 & 0.3284 & 0.2269 & 6.5239 & 58.51  & 0.5601  & 0.6356  & 148.98 \\
        & DiffBIR \citep{lin2024diffbir} & & 50 & 26.71 & 0.6571 & 0.4557 & 0.2748 & 6.3124 & 61.07 & 0.5930  & 0.6395  & 166.79 \\
        & SeeSR \citep{wu2024seesr} & & 50 & 28.17 & 0.7691 & 0.3189 & 0.2315 & 6.3967 & 64.93 & 0.6042 & 0.6804 & 147.39 \\
        & PASD \citep{yang2024pixelaware} & & 20 & 27.36 & 0.7073 & 0.3760 & 0.2531 & \textbf{5.5474} & 64.87 & \underline{0.6169} & 0.6808 & 156.13 \\ 
                
        & SUPIR \citep{yu2024scaling} & & 50 & 24.93 & 0.6360 & 0.4263 & 0.2823 & 7.4336 & 59.39 & 0.5537 & 0.6799 & 164.86  \\
        & OSEDiff \citep{wu2024onestep} & & 1 & 27.92 & 0.7835 & 0.2968 & 0.2165 & 6.4902 & 64.65 & 0.5899 & 0.6963 & 135.30 \\
        & AdcSR \citep{chen2025adversarial} & & 1 & 28.10 & 0.7726 & 0.3046 & 0.2200 & 6.4467 & 66.27 & 0.5916 & 0.7049 & \underline{134.05} \\
        & PiSA-SR \citep{sun2025pisasr} & & 1 & 28.32 & 0.7804 & 0.2960 & 0.2169 & 6.1766 & 66.11 & 0.6161 & 0.6968 & \textbf{130.61} \\
        & TSD-SR \citep{dong2025tsdsr} & & 1 & 27.77 & 0.7559 & 0.2967 & 0.2136 & 5.9131 & \textbf{66.62} & 0.5874 & \textbf{0.7343} & 134.98 \\
        & InvSR \citep{yue2025arbitrarysteps} & & 1 & 25.79 & 0.7176 & 0.3471 &  0.2381 & \underline{5.8627} &
        64.92 & \textbf{0.6212} & \underline{0.7185} & 166.51 \\
        &  CCSR \citep{sun2023ccsr} & & 1 & 28.24 & 0.7818 & 0.3201 & 0.2327 & 6.7901 &
        \underline{66.28} & 0.6056 & 0.6632 & 157.23 \\
        \cline{2-13}
        & ResShift \citep{yue2023resshift} & \multirow{4}{*}{\begin{tabular}{c} Diffusion model, \\ no T2I prior \end{tabular}} & 15 & 28.46 & 0.7673 & 0.4006 & 0.2656 & 8.1249 & 50.60 & 0.4586 & 0.5342 &  172.26 \\
        & SinSR \citep{wang2024sinsr} & & 1 & 28.36 & 0.7515 & 0.3665 & 0.2485 & 6.9907 & 55.33 & 0.4884 & 0.6383 & 170.57 \\
        & CTMSR \citep{you2025consistency} & & 1 & \underline{28.65} & \underline{0.7834} & 0.3238 & 0.2358 & 6.1828 &  59.78 & 0.4861 & 0.6497 & 163.63 \\
        & RSD (\textbf{Ours}) & & 1 & 27.40 & 0.7559 & 0.3042 & 0.2343 & 6.2577 & 62.03 & 0.5625 & 0.7019 & 167.47 \\
         %& \textit{Difference} & - & \textcolor{red}{-1.86\%} & \textcolor{red}{-3.52\%} & \textcolor{red}{-2.49\%} & \textcolor{red}{-8.22\%} & \textcolor{green!50!black}{+3.58\%} & \textcolor{red}{-4.05\%} & \textcolor{red}{-4.64\%} & \textcolor{green!50!black}{+0.80\%} \\

        \hline
        \multirow{19}{*}{RealSR}
        & BSRGAN \citep{zhang2021designing} & \multirow{4}{*}{GANs} & 1 & \textbf{26.39} & \textbf{0.7654} & \textbf{0.2670} & 0.2121 & 5.6567 & 63.21  & 0.5399  & 0.5001 & 141.28 \\
        & Real-ESRGAN \citep{wang2021real} & & 1 & 25.69 & \underline{0.7616} & 0.2727 & 0.2063 & 5.8295 & 60.18  & 0.5487  & 0.4449 & 135.18  \\
        & LDL \citep{liang2022details} & & 1 & 25.28 & 0.7567 & 0.2766 & 0.2121 & 6.0024 & 60.82  & 0.5485  & 0.4477 & 142.71  \\
        & FeMASR \citep{chen2022femasr} & & 1 & 25.07 & 0.7358 & 0.2942 & 0.2288 & 5.7885 & 58.95  & 0.4865  & 0.5270 & 141.05 \\

        \cline{2-13}
        
        &  StableSR \citep{wang2024exploiting} & \multirow{11}{*}{\begin{tabular}{c} Diffusion model, \\ used T2I prior \end{tabular}} & 200 & 24.70 & 0.7085 & 0.3018 & 0.2288 & 5.9122 & 65.78  & 0.6221  & 0.6178 & 128.51  \\
        & DiffBIR \citep{lin2024diffbir} & & 50    & 24.75 & 0.6567 & 0.3636 & 0.2312 & 5.5346 & 64.98  & 0.6246  & 0.6463 & 128.99  \\
        & SeeSR \citep{wu2024seesr} & & 50      & 25.18 & 0.7216 & 0.3009 & 0.2223 & 5.4081 & 69.77  & 0.6442  & 0.6612 & 125.55  \\
        & PASD \citep{yang2024pixelaware} & & 20 & 25.21 & 0.6798 & 0.3380 & 0.2260 & 5.4137 & 68.75  & 0.6487 & 0.6620 & 124.29 \\   
        
        &  SUPIR \citep{yu2024scaling} & & 50 & 23.61 & 0.6606 & 0.3589 & 0.2492 & 5.8877 & 63.21 & 0.5895 & 0.6709 & 128.35 \\
        & OSEDiff \citep{wu2024onestep} & & 1 & 25.15 & 0.7341 & 0.2921 & 0.2128 & 5.6476 & 69.09 & 0.6326 & 0.6693 & 123.49 \\
        & AdcSR \citep{chen2025adversarial} & & 1 & 25.47 & 0.7301 & 0.2885 & 0.2128 & \underline{5.3477} & 69.90 & 0.6353 & 0.6730 & \underline{118.41} \\
        & PiSA-SR \citep{sun2025pisasr} & & 1 & 25.50 & 0.7418 & \underline{0.2672} & \textbf{0.2044} & 5.5046 & \underline{70.15} & \underline{0.6551} & 0.6696 & 124.09 \\
        & TSD-SR \citep{dong2025tsdsr} & & 1 & 24.81 & 0.7172 & 0.2743 & 0.2105 & \textbf{5.1266} & \textbf{71.18} & 0.6346 & \textbf{0.7160} & \textbf{114.45} \\
        & InvSR \citep{yue2025arbitrarysteps} & & 1 & 24.30 & 0.7145 & 0.2775 & \underline{0.2060} & 5.7168 & 67.31 & \textbf{0.6572} & 0.6734 & 129.52 \\
        & CCSR \citep{sun2023ccsr} & & 1 & 25.92 & 0.7485 & 0.2799 & 0.2122 & 5.7324 & 69.18 & 0.6398 & 0.6336 & 122.98 \\
        \cline{2-13}
        & ResShift \citep{yue2023resshift} & \multirow{4}{*}{\begin{tabular}{c} Diffusion model, \\ no T2I prior \end{tabular}} & 15 & \underline{26.31} & 0.7421 & 0.3421 & 0.2498 & 7.2365 & 58.43 & 0.5285 & 0.5442 & 141.71 \\
        & SinSR \citep{wang2024sinsr} & & 1 & 26.28 & 0.7347 & 0.3188 & 0.2353 & 6.2872 & 60.80 & 0.5385 & 0.6122 & 135.93 \\
        & CTMSR \citep{you2025consistency} & & 1 & 25.98 & 0.7546 & 0.2897 & 0.2208 & 5.5546 & 64.26 & 0.5270 & 0.6318 & 135.35 \\
        & RSD (\textbf{Ours}) & & 1 & 25.61 & 0.7420 & 0.2675 & 0.2205 & 5.7500 & 66.02 & 0.5930 & \underline{0.6793} & 138.23 \\
       % & \textit{Difference} & - & \textcolor{green!50!black}{+1.83\%} & \textcolor{green!50!black}{+1.08\%} & \textcolor{green!50!black}{+8.42\%} & \textcolor{red}{-3.62\%} & \textcolor{red}{-1.81\%} & \textcolor{red}{-4.44\%} & \textcolor{red}{-6.26\%} & \textcolor{green!50!black}{+1.49\%} \\

        \hline
        
    \end{tabular}
    }
    }
    % \vspace{-2mm}
    % \caption{\glb{\footnotesize{Extended quantitative results of models on crops from \citep{wang2024exploiting}. The best and second best results are highlighted in \textbf{bold} and \underline{underline}.}}}
    \label{tab:benchmarks_crops_short_full}

\end{table*}

% \noindent \glb{\textbf{Table \ref{tab:benchmarks_crops_short_full}}}. We report an extended version of \glb{Table \ref{tab:benchmarks_crops_short}} with additional baselines used in the OSEDiff paper:
% \begin{enumerate}[leftmargin=0.3cm, itemindent=0.5cm]
%     \item \textbf{Diffusion-based SR models}. We borrow the results of Table 1 from OSEDiff for StableSR \citep{wang2024exploiting}, DiffBIR \citep{lin2024diffbir}, SeeSR \citep{wu2024seesr}, PASD \citep{yang2024pixelaware}, ResShift \citep{yue2023resshift}, and SinSR \citep{wang2024sinsr}. In addition to TSD-SR, PiSA-SR, CTMSR, and AdcSR, we also evaluated InvSR and CCSR using the same pre-trained models as for Table \ref{tab:drealsr}.
%     \item \textbf{GAN-based SR models}. We borrow the results of Table 1 from OSEDiff for Real-ESRGAN \citep{wang2021real}, BSRGAN \citep{zhang2021designing}, LDL \citep{liang2022details}, and FeMASR \citep{chen2022femasr}.
%     % \item \textbf{State-of-the-art diffusion-based one-step SR models}. In addition to TSD-SR, PiSA-SR, CTMSR, and AdcSR, we also evaluated InvSR and CCSR using the same pre-trained models as for Table \ref{tab:drealsr}.
% \end{enumerate}

 %\newpage
% These results demonstrate that our RSD model achieves performance comparable to the state-of-the-art diffusion SR models across a broad range of metrics and methods under the requirements of a restricted computational budget. We discuss a comparison between RSD and recent one-step diffusion SR models, namely CTMSR, TSD-SR, PiSA-SR, AdcSR, InvSR, and CCSR, in detail in Appendix \ref{app:sota}.

\clearpage
\section{Performance-Efficiency Trade-Off for RSD and Recent State-of-the-Art One-Step Diffusion SR Methods}\label{app:sota}

In this section, we discuss the comparison between RSD and very recent SOTA one-step diffusion SR methods: CTMSR \citep{you2025consistency} and T2I-based SR models, including PiSA-SR \citep{sun2025pisasr}, TSD-SR \citep{dong2025tsdsr}, AdcSR \citep{chen2025adversarial}, InvSR \citep{yue2025arbitrarysteps}, and CCSR \citep{sun2023ccsr}. We support the comparison with visual results of these models in Figure \ref{fig:comparison_reallr200} for the RealLR200 dataset \citep{wu2024seesr} and Figure \ref{fig:comparison_reallq250} for the RealLQ250 dataset \citep{yuang2024dreamclear}, respectively. 

\subsection{Comparison with CTMSR}

\textbf{CTMSR method}. CTMSR proposed a distillation-free method for one-step diffusion SR, which is based on consistency training \citep{pmlr-v202-song23a,song2024improved}. Their training scheme is split into two stages.

\noindent \textbf{Stage 1}. In the first stage, they formulate the ResShift forward stochastic diffusion process in Equation \eqref{eq:forward_process_resshift} as the deterministic trajectory of PF-ODE \citep{song2021scorebased}; see Equations 8 and 9 in the CTMSR paper.  They trained the respective consistency model with $500$k iterations using the proposed PF-ODE trajectories and the consistency loss $\mathcal{L}_{\text{CT}}$ according to Equation 10 in the CTMSR paper. 

\noindent \textbf{Stage 2}. After the first stage, CTMSR additionally optimizes the model with the proposed Distribution Trajectory Matching objective. Its idea is to minimize the Distribution Trajectory Distance ($\mathcal{L}_{\text{DTD}}$ in Equations 15 and 16 in the CTMSR paper) between the end points of the real PF-ODE trajectory, which starts from the real HR images, and the fake PF-ODE trajectory, which starts from the predicted fake HR images; see Equations 11, 12, 13, 14 in the CTMSR paper. Computation of the gradient $\nabla_{\theta}\mathcal{L}_{\text{DTD}}$ with respect to the original CTMSR parameters $\theta$ requires calculating the U-Net Jacobian term. Inspired by SDS \citep{poole2023dreamfusion} and VSD \citep{wang2023prolificdreamer}, CTMSR omits this U-Net Jacobian term. The authors optimized the CTMSR model using the gradients $\nabla_{\theta}\mathcal{L}_{\text{CT}} + \nabla_{\theta}\mathcal{L}_{\text{DTD}}$ with additional $2$k iterations.

CTMSR uses the same training scheme on the ImageNet dataset with Real-ESRGAN degradations, which is detailed in Section \ref{subsec:experimental_setup}, as ResShift, SinSR and RSD. Thus, CTMSR can be fairly comparable with these models. 

\textbf{Perceptual-fidelity comparison}. According to the quantitative results in Tables \ref{tab:benchmarks_full_images}, \ref{tab:benchmarks_crops_short}, \ref{tab:drealsr}, and \ref{tab:reallr_reallq_benchmars}, RSD has a stable improvement over CTMSR for \textbf{all real-world datasets (RealSR, RealSet65, RealSR and DRealSR $512 \times 512$ crops, DRealSR, RealLR200, RealLQ250)} in \textbf{most perceptual metrics (LPIPS, DISTS, CLIPIQA, MUSIQ, MANIQA)}. We observe the most notable gaps between the perceptual quality of CTMSR and RSD on the following datasets:
\begin{enumerate}
    \item RealSR and DRealSR $512 \times 512$ crops in Table \ref{tab:benchmarks_crops_short} - improvement in MANIQA in $0.0660$ and $0.0764$, respectively, and in CLIPIQA in $0.0475$ and $0.0522$, respectively.
    \item Full-size DRealSR in Table \ref{tab:drealsr} - improvement in MUSIQ in $5.134$ and MANIQA in $0.0612$, respectively.
    \item RealLR200 and RealLQ250 in Table \ref{tab:reallr_reallq_benchmars} - improvement in MANIQA in $0.0523$ and $0.0696$, respectively, and in CLIPIQA in $0.0397$ and $0.0551$, respectively.
\end{enumerate} 
These results highlight the strong competitive perceptual performance of RSD among one-step diffusion SR models using the similar UNet architecture with Swin Transformer blocks \citep{liu2021swin} - ResShift, SinSR and CTMSR. However, CTMSR sometimes has better NIQE values and also achieves slightly better CLIPIQA and MUSIQ on the synthetic ImageNet-Test dataset from Table \ref{tab:imagenet-test}. We note that NIQE \citep{mittal2013image} has been shown to have a worse correlation with human preference compared to recent IQA measures, including MUSIQ, MANIQA, and CLIPIQA, as evident in \citep[Tables 1 and 5]{Wang_Chan_Loy_2023}, \citep[Table 1]{ke2021musiq}, \citep[Table 3]{yang2022maniqa}. CTMSR also achieves better fidelity measures (PSNR and SSIM) compared to RSD, which are close to the results of SinSR. Unfortunately, this results in the blur problem of CTMSR, as we discuss below.
%  and bear 

%  and trees (middle image)
\textbf{Qualitative comparison}. We provide a visual comparison of RSD with CTMSR, as well as with SinSR, in Figure \ref{fig:comparison_reallr200} for the RealLR200 dataset \citep{wu2024seesr} and Figure \ref{fig:comparison_reallq250} for the RealLQ250 dataset \citep{yuang2024dreamclear}, respectively. In Section \ref{subsec:experimental_results}, we observed in Figure \ref{fig:comparison} for RealSet65 \cite{yue2023resshift} that CTMSR has blurry artifacts; see the roof of the house in Figure \ref{fig:comparison}. We also observe this result on other images from RealSet65 \cite{yue2023resshift}; see the bear in Figure \ref{fig:comparison_fail}. Figure \ref{fig:comparison_reallr200} shows that RSD has richer textures than CTMSR for the man (top image). For the image of the bird (bottom image), RSD is the only diffusion model without T2I prior, which provides some details of its eye. Similarly, in Figure \ref{fig:comparison_reallq250} we observe blur in the CTMSR images for the rose (top image) and the monkey character (bottom image). We hypothesize that the blur effect of CTMSR is inherited from its consistency training framework, which is based on deterministic sampling from ODE.
% , the deer (middle image), 

\textbf{Complexity comparison}. The CTMSR can be fairly compared to the RSD in computational complexity due to the similar architecture following ResShift. For inference complexity, we evaluated the pre-trained CTMSR model using its official implementation listed in Table \ref{tab:baselines} on the same setup as RSD in Table \ref{tab:effectiveness}. According to Table \ref{tab:effectiveness}, CTMSR has a similar number of parameters to ResShift, SinSR, and RSD (172 million for CTMSR and 174 million for RSD), and a similar inference time per LR image with a resolution $64 \times 64$. However, we also found that CTMSR requires $>1.5$ GPU memory during inference compared to RSD. As the distillation-free method, CTMSR asserts that ResShift and its distilled version, SinSR, are limited in two aspects: 
\begin{enumerate}
    \item \textbf{Considerable training costs}. Training SinSR requires the training of the teacher ResShift model and the student SinSR model, while CTMSR is able to train a one-step diffusion SR model without an additional distillation stage.
    \item \textbf{Limitations of the teacher model}. The performance of the student model is limited by the performance of the teacher model.
\end{enumerate} 
In Appendices \ref{app:limitations} and \ref{app:results_512}, we agree with the second statement. To verify the first statement, we trained CTMSR with $500$k iterations for the first stage and an additional $2$k iterations for the second stage using their official training code on recommended GPUs (4 NVIDIA A100 GPUs). Following the pre-trained ResShift model, which we used for distillation with RSD and SinSR, we also trained the ResShift model with $300$k iterations using its official training code on the same 4 A100 GPUs to compare the CTMSR training time with the total training time of ResShift and RSD. The results are given in Table \ref{tab:effectiveness_ctmsr}. Surprisingly, we found that the total training time for ResShift and its distillation with our RSD requires \textbf{less training time} than the training time for the distillation-free CTMSR method using the same resources. The training efficiency of the RSD model is supported by its simulation-free property, which is detailed in Appendix \ref{app:experiment_details}. Compared to the training of the original teacher ResShift model, its distillation with our RSD method requires only $\approx 15\%$ the training time of ResShift, which leads to faster convergence than the distillation-free CTMSR, even when we account for the training time of ResShift. As noted in Appendix \ref{app:ablation}, the training time of our RSD can be further halved using $K = 1$ without sacrificing quality. These results highlight the strong computational efficiency of RSD compared to CTMSR in both training and inference. 

\begin{figure*}[!ht]
    \centering
    \resizebox{0.95\textwidth}{!}{
    % First row of images
    \begin{subfigure}{0.99\textwidth}
        \includegraphics[width=\linewidth]{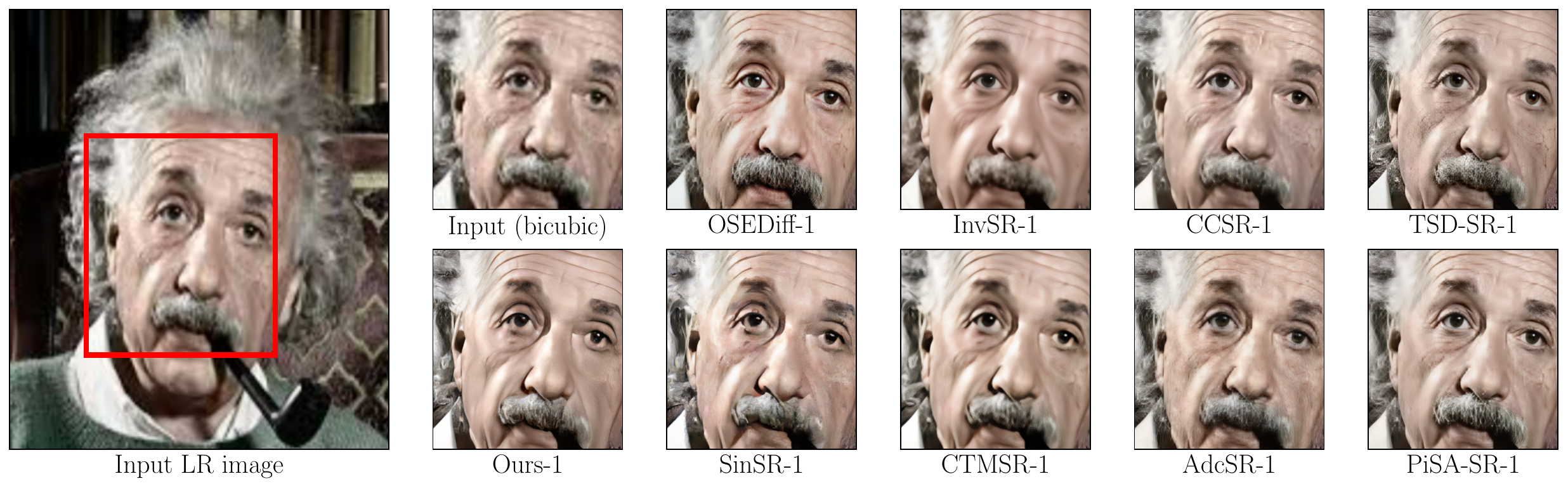}
        %\caption*{(a) LR input}
    \end{subfigure}
    }
    % \resizebox{0.95\textwidth}{!}{
    % \begin{subfigure}{0.99\textwidth}
    %     \includegraphics[width=\linewidth]{images/reallr200/rebuttal_reallr200_ours_final_image_reallr200_168_crop_11.pdf}
    %     %\caption*{(a) LR input}
    % \end{subfigure}
    % }
    \resizebox{0.95\textwidth}{!}{
    \begin{subfigure}{0.99\textwidth}
        \includegraphics[width=\linewidth]{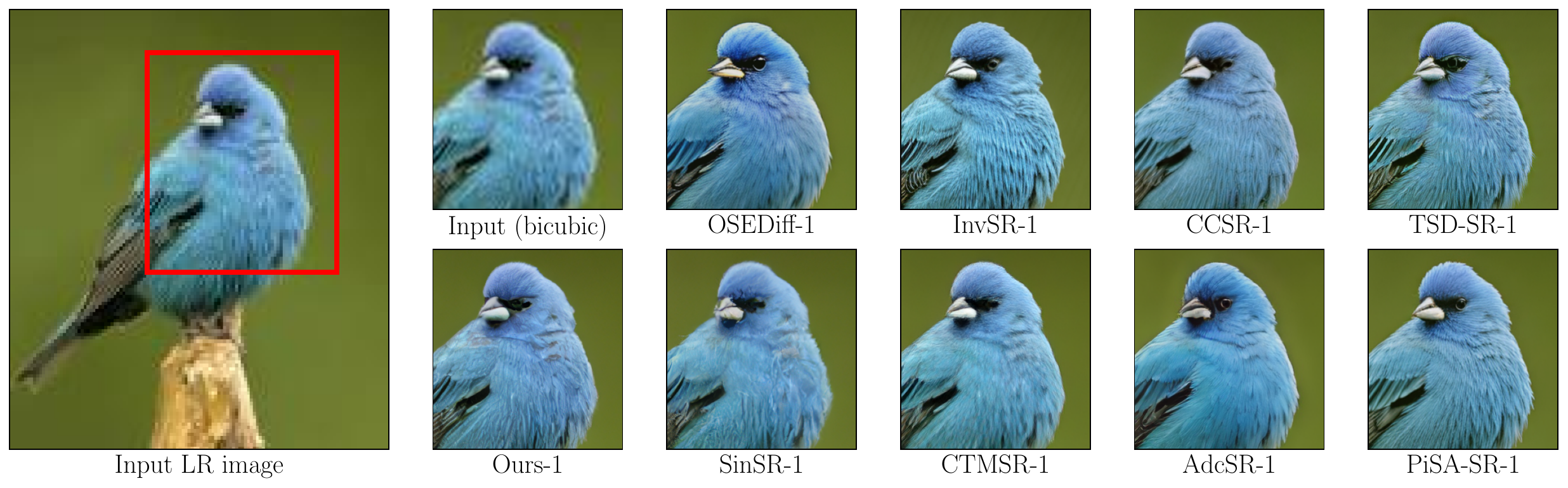}
        %\caption*{(a) LR input}
    \end{subfigure}
    }
    \caption{Visual results of recent one-step diffusion SR models (RSD, SinSR, CTMSR, OSEDiff, AdcSR, CCSR, InvSR, PiSA-SR, TSD-SR) on full-size images from RealLR200 \citep{wu2024seesr}. Please zoom in for a better view.}
    \label{fig:comparison_reallr200}
\end{figure*}

\begin{table*}[!ht]
    % \vspace{-2mm}
    \centering
    % \vspace{-2mm}
    \caption{\footnotesize{Training and inference complexity for RSD, ResShift \citep{yue2023resshift} and CTMSR \citep{you2025consistency}. All models are trained on the same 4 NVIDIA A100 GPUs using the respective official training code and tested with an LR image of size $64 \times 64$ for SR factor $\times 4$. The inference is done on an NVIDIA A100 GPU. The best values are highlighted in \textbf{bold}.}}
    % \vspace{-2mm}
    % \resizebox{\linewidth}{!}{
    \begin{tabular}{l|cccc}
        \hline
        Methods & ResShift & RSD (\textbf{Ours}) & RSD with ResShift training & CTMSR  \\ 
        \hline
        Inference Step (NFE) &  15 & \textbf{1} & \textbf{1} & \textbf{1} \\
        Inference Time (s) &  0.643 & \textbf{0.059} & \textbf{0.059} & \textbf{0.059} \\ 
        \# Total Param (M) & 174 & 174 & 174 & \textbf{172} \\ 
        Maximum GPU memory (MB) & 1167 & \textbf{539} & \textbf{539}  & 904 \\ 
        \hline
        { Training time (hours)} & 36 & 5 & 41 & 58  \\
        \hline
    \end{tabular}
    % }
    \label{tab:effectiveness_ctmsr}
    % \vspace{-4mm}
    % \vspace{-3mm}
\end{table*}

\subsection{Comparison with PiSA-SR, TSD-SR, AdcSR, InvSR and CCSR}

PiSA-SR \citep{sun2025pisasr}, TSD-SR \citep{dong2025tsdsr}, AdcSR \citep{chen2025adversarial}, InvSR \citep{yue2025arbitrarysteps}, and CCSR \citep{sun2023ccsr} are recent SOTA T2I-based SR models that attempt to resolve the limitations of the OSEDiff model in different aspects.

\noindent \textbf{Adjustable perception-distortion trade-off and slow training convergence}. OSEDiff does not provide perception-distortion control without re-training, while the training requires 24 hours on 4 NVIDIA A100 GPUs, according to Section 4.1 in the OSEDiff paper. PiSA-SR proposed a decoupled training approach to train pixel and semantic level LoRA modules \citep{hu2022lora}, which allows for adjusting the perception-distortion trade-off by different pixel-semantic guidance scales \citep{ho2021classifierfree} during inference without the need for re-training. PiSA-SR uses $\ell_{2}$ loss for the training of the LoRA module of pixel-level regression and CSD loss \citep{ho2021classifierfree} combined with the LPIPS loss for the training of the LoRA module of the semantic level. The CSD loss is computed using the pre-trained Stable Diffusion 2.1-base model and does not require an additional fake model used in OSEDiff, leading to faster training \citep{ma2025scaledreamer} according to Figure 9 in the PiSA-SR paper. 

 \noindent \textbf{Limitations of the VSD objective}. As shown in the TSD-SR paper, the VSD objective used in OSEDiff has two limitations. The first limitation is that the guidance of the teacher is unreliable in scenarios where the initial SR outputs are suboptimal, as visualized in Figure 3 of TSD-SR. To solve this problem, TSD-SR proposed Target Score Matching, which aligns the predictions made by the teacher model on both synthetic and HR latents. The second limitation is that the matching of the score functions predicted by the teacher model and the LoRA model is inconsistent across different timesteps, which is shown in Figure 5 of TSD-SR. To address this issue, TSD-SR proposed the Distribution-Aware Sampling Module, which accumulates optimization gradients for earlier timestep samples in a single iteration, enabling the backpropagation of more gradients focused on detail optimization. TSD-SR initialized all training models from the Stable Diffusion 3 model \citep{esser2024scaling}. 

 \noindent \textbf{Large computational costs of T2I-based SR models}. As observed in AdcSR, the complexity of OSEDiff in terms of parameter number and inference time can still be too high for real deployments, especially on resource-limited edge devices. To reduce the complexity of OSEDiff while maintaining its high perceptual quality, AdcSR proposed an adversarial diffusion compression framework to OSEDiff. The idea of the framework is to train a smaller network after removing unnecessary OSEDiff modules and pruning the remaining modules. The training of AdcSR consists of two stages: 1) pretraining channel-pruned VAE decoder; 2) use of knowledge distillation for OSEDiff with adversarial loss to train a smaller student model. 

 \noindent \textbf{Extension to multistep models}. The OSEDiff approach is developed only for the one-step diffusion model, which limits its generation capacity and flexibility for varying perception-distortion requirements. InvSR and CCSR proposed different approaches to enable multistep diffusion models without retraining. InvSR introduces a trainable noise prediction network and reformulates the SR problem as diffusion inversion \citep{chihaioui2024blind}. The noise predictor is trained to estimate the noise maps for multiple pre-selected steps via time embedding. To enable arbitrary-step inversion for the inference, InvSR uses the noise map prediction for the initialization the reverse sampling process, where the starting timestep can be freely chosen during inference, resulting in perception-distortion trade-off. CCSR achieves multistep diffusion modeling with another idea. Inspired by the StableSR \citep{wang2024exploiting} perception-distortion analysis depending on the diffusion reverse time step in Figure 2 of the CCSR, it proposed to disentangle the SR process into structure generation and detail enhancement by GAN and DM, respectively. 

 \begin{figure*}[!ht]
    \centering
    \resizebox{0.95\textwidth}{!}{
    \begin{subfigure}{0.99\textwidth}
        \includegraphics[width=\linewidth]{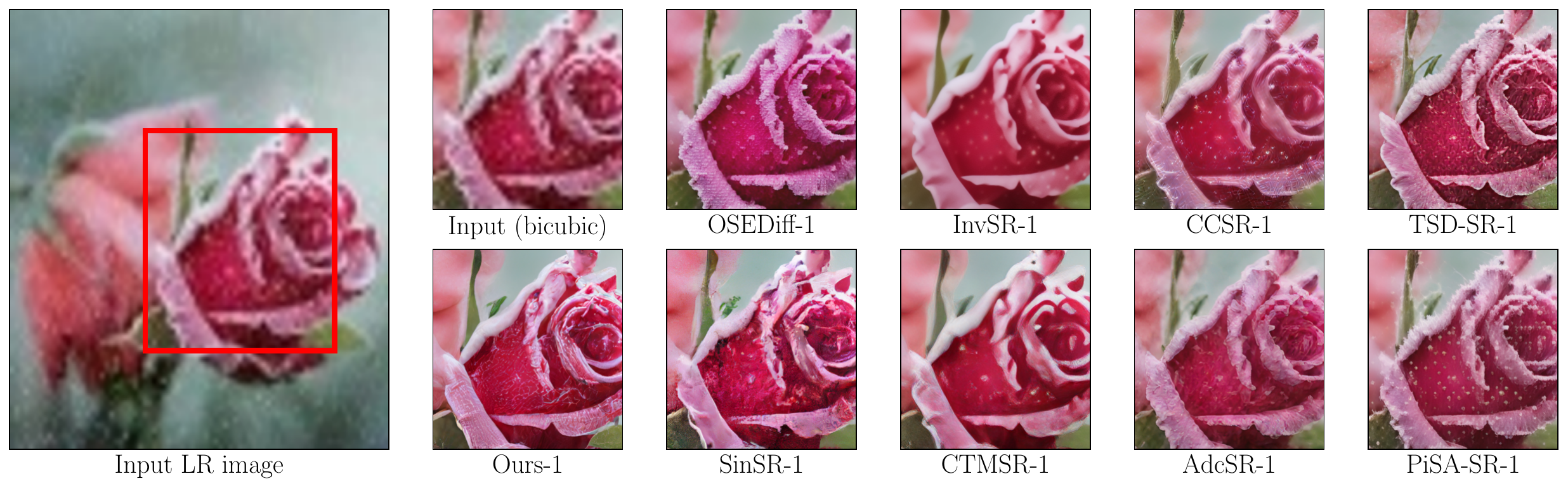}
        %\caption*{(a) LR input}
    \end{subfigure}
    }
    % \resizebox{0.95\textwidth}{!}{
    % \begin{subfigure}{0.99\textwidth}
    %     \includegraphics[width=\linewidth]{images/reallq250/rebuttal_reallq250_ours_final_image_reallq250_055_crop_9.pdf}
    %     %\caption*{(a) LR input}
    % \end{subfigure}
    % }
     \resizebox{0.95\textwidth}{!}{
    % First row of images
    \begin{subfigure}{0.99\textwidth}
        \includegraphics[width=\linewidth]{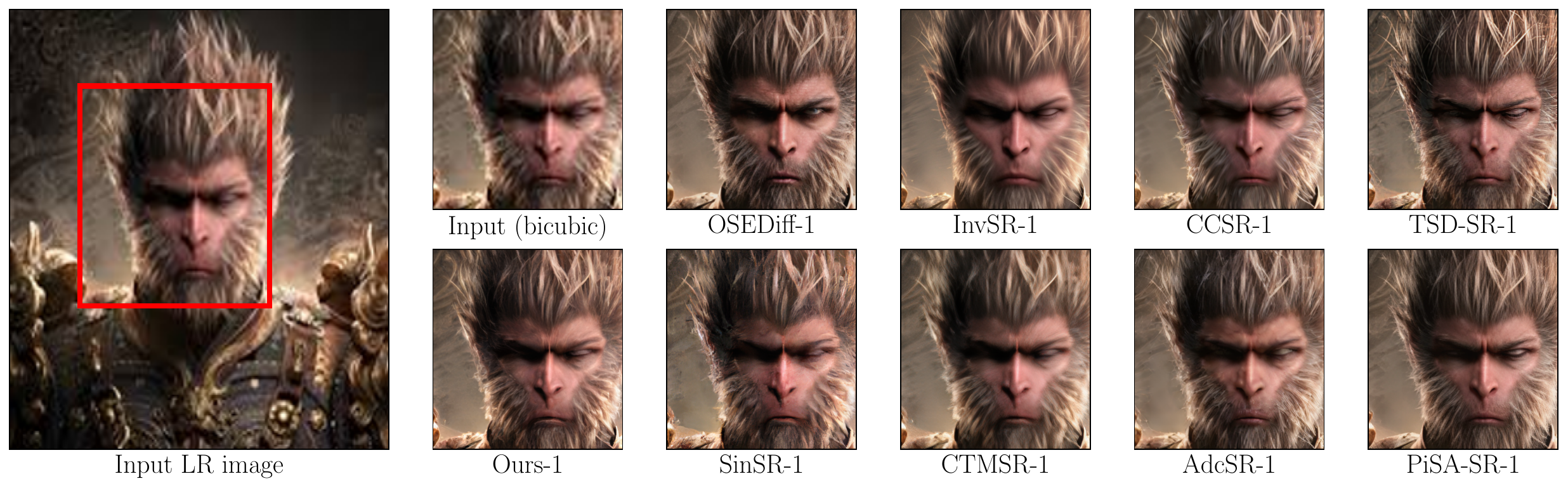}
        %\caption*{(a) LR input}
    \end{subfigure}
    }
    \caption{Visual results of recent one-step diffusion SR models (RSD, SinSR, CTMSR, OSEDiff, AdcSR, CCSR, InvSR, PiSA-SR, TSD-SR) on full-size images from RealLQ250 \citep{yuang2024dreamclear}. Please zoom in for a better view.}
    \label{fig:comparison_reallq250}
\end{figure*}

\begin{figure*}[!ht]
    % \vspace{0mm}
    \centering
\resizebox{0.98\textwidth}{!}{
    % First row of images
    \includegraphics[width=\linewidth]{images/realset65_results/bears/rebuttal_main_text_ours_final_image_realset65_bears_crop_17.pdf}
    }
    % \vspace{0mm}
    %\captionsetup{font=footnotesize, labelfont=footnotesize}
    \caption{Additional comparison on RealSet65 \citep{yue2023resshift} for diffusion SR models. Bottom images: ResShift, AdcSR, CTMSR, SinSR, and the proposed RSD. Top images: bicubic LR, SUPIR, PiSA-SR, TSD-SR, and OSEDiff. Please zoom in $\times 5$ times for a better view.}
    \label{fig:comparison_fail}
    % \vspace{0mm} 
\end{figure*}

\textbf{Quantitative comparison}. In Tables \ref{tab:drealsr}, \ref{tab:reallr_reallq_benchmars}, \ref{tab:benchmarks_full_images_extended}, \ref{tab:imagenet-test_full}, \ref{tab:benchmarks_crops_short_full},  we observe that our RSD combines the high fidelity of relatively small models (ResShift, SinSR, CTMSR) and the good perceptual quality of T2I-based SR models (TSD-SR, PiSA-SR, InvSR, CCSR, AdcSR). TSD-SR, PiSA-SR, InvSR, AdcSR and CCSR further develop T2I-based SR models to improve perceptual and fidelity quality compared to OSEDiff. Compared to these methods, RSD achieves mostly better fidelity consistency with HR images, which is evident by PSNR and SSIM metrics, with yet competitive perceptual metrics (LPIPS, CLIPIQA, MUSIQ).

\textbf{Qualitative comparison}. We provide a visual comparison of one-step T2I-based SR models with RSD in Figure \ref{fig:comparison_reallr200} for the RealLR200 dataset \citep{wu2024seesr} and Figure \ref{fig:comparison_reallq250} for the RealLQ250 dataset \citep{yuang2024dreamclear}, respectively. Among these models, the model with the smallest number of parameters, AdcSR, has sharper textures and a better level of detail on most images. However, as the distillation model for OSEDiff, AdcSR can hallucinate for the same images, where OSEDiff hallucinates, see the bear in Figure \ref{fig:comparison_fail} with an unnatural blue nose for OSEDiff and unnatural fur for AdcSR. Although other SOTA one-step T2I-based SR models, such as PiSA-SR and TSD-SR, also generally have better perceptual quality than RSD, we observe for the rose in Figure \ref{fig:comparison_reallq250} that failure cases with blurry effects or unrelated hallucination details occur even for them. 

\textbf{Complexity comparison}. Despite the good perceptual performance of the TSD-SR, PiSA-SR, AdcSR, CCSR, and InvSR models, these T2I-based models require more computational costs compared to the other one-step T2I-based SR model, OSEDiff, and much more computational costs compared to RSD. In Table \ref{tab:effectiveness}, we highlight that the T2I-based one-step SR models of PiSA-SR, AdcSR, and TSD-SR require much more GPU memory for inference and a much longer training time compared to RSD. This analysis supports our claim that RSD aims to compromise between fidelity, perceptual quality, and computational efficiency.  

% \clearpage
% \clearpage
\section{Results of RSD Trained on Bigger Resolution}\label{app:results_512}
To compare the performance of the RSD, SinSR, and ResShift models for training on high resolution images, we followed the training setup of \glb{OSEDiff}, which was trained on $512 \times 512$ HR images randomly cropped from LSDIR \citep{li2023lsdir} with LR images generated via the Real-ESRGAN degradations with the $\times 4$ SR factor. Since the original ResShift model was trained only on $256 \times 256$ HR images, we first trained the ResShift model on $512 \times 512$ HR random crops from LSDIR \citep{li2023lsdir} for 300k iterations using the source training \href{https://github.com/zsyOAOA/ResShift}{ResShift code} and then distilled it with the RSD and SinSR methods. For a fair comparison, we used the same hyperparameters for RSD and SinSR, which were used for their training on  $256 \times 256$ HR images in \lcl{Table \ref{tab:benchmarks_full_images}}, \lcl{Table \ref{tab:imagenet-test}}, \glb{Table \ref{tab:benchmarks_crops_short}}.

We evaluate the trained ResShift, SinSR, and RSD models on full-size images from RealSR \citep{cai2019toward} (the left part of \lcl{Table \ref{tab:benchmarks_full_images}}) and provide quantitative results in Table \ref{tab:results_512}, where we also show the results of Real-ESRGAN, BSRGAN, SUPIR, and OSEDiff from Table \ref{tab:benchmarks_full_images_extended}. We provide visual results of those models in Figure \ref{fig:comparison_realsr_512}. 

\begin{table*}[!ht]
% \captionsetup{font=footnotesize, labelfont=footnotesize}
    \centering
    %\vspace{-8mm} 
    \renewcommand{\arraystretch}{1.2}
    \setlength{\tabcolsep}{6pt}
    % \vspace{0mm}
    \caption{\footnotesize{Results on full-size images from RealSR \citep{cai2019toward}. ResShift, SinSR and RSD were trained on $512 \times 512$ HR random crops from LSDIR \citep{li2023lsdir}. The best and second best results are highlighted in \textbf{bold} and \underline{underline}.}}
    \label{tab:results_512}
    % \vspace{-3mm}
    % \resizebox{\linewidth}{!}{
    \lcl{
    \begin{tabular}{l|c|ccccc}
        \hline
        Methods & NFE & PSNR$\uparrow$ & SSIM$\uparrow$ & LPIPS$\downarrow$ & CLIPIQA$\uparrow$ & MUSIQ$\uparrow$ \\ \hline
        Real-ESRGAN \citep{wang2021real} & 1 & 25.85 & 0.773 & 0.273 & 0.4898 & 59.678 \\
        BSRGAN \citep{zhang2021designing} & 1 & 26.51 & 0.775 & 0.269 & 0.5439 & 63.586 \\
        \hline
        SUPIR \citep{yu2024scaling} & 50 & 24.38 & 0.698 & 0.331 & 0.5449 & 63.679 \\
        OSEDiff \citep{wu2024onestep} & 1 & 25.25 & 0.737 & 0.299 & \textbf{0.6772} & \textbf{67.602} \\
        \hline
        ResShift \citep{yue2023resshift} & 15 & \textbf{27.53} & \textbf{0.790} & 0.277 & 0.4988 & 58.034 \\
        \hline
        SinSR \citep{wang2024sinsr} & 1 & \underline{27.27} & \underline{0.780} & \underline{0.268} & 0.5503 & 59.478 \\ 
        RSD (\textbf{Ours}) & 1 & 26.89 & 0.773 & \textbf{0.260}  & \underline{0.6103} & \underline{64.987} \\
        \hline
    \end{tabular}
    }
   %  }
\end{table*}

\textbf{Comparison with SinSR}. We observe that the RSD achieves better perceptual results, especially in CLIPIQA and MUSIQ, with competitive PSNR and SSIM compared to SinSR. Although ResShift and SinSR have better fidelity metrics, the gap between the visual quality of these models compared with the RSD can be observed in image details, which are sharper for the RSD (compare the jackets in the top of Figure \ref{fig:comparison_realsr_512}).

\textbf{Comparison with OSEDiff}. Compared to OSEDiff, RSD trained on the same images from the LSDIR dataset \citep{li2023lsdir} using HR crops of the same resolution $512 \times 512$ achieves better reference metrics (PSNR, SSIM, LPIPS) but worse no-reference metrics (CLIPIQA, MUSIQ). This is evident in Figure \ref{fig:comparison_realsr_512}, where both OSEDiff and SUPIR models provide details that are not relevant to the LR image (see non-existing inscriptions in the bottom of Figure \ref{fig:comparison_realsr_512}). We highlight that since we did not finetune the hyperparameters of the teacher ResShift model, the quality of RSD is limited by the quality of the ResShift model. The main goal of this study is to show that \textbf{RSD achieves better perceptual results compared with SinSR, even when trained on images with higher resolutions}. Improving the perceptual image quality of the RSD when trained on images with higher resolutions to make it closer to the visual quality of T2I-based SR models is promising future work. 

\textbf{The performance of the RSD model closely mirrors that of its teacher model}. Notably, the RSD model trained at a resolution of $256 \times 256$ demonstrates better performance on no-reference image quality metrics, such as CLIPIQA and MUSIQ (see Table~\ref{tab:benchmarks_full_images}). In contrast, the RSD model trained at $512 \times 512$ resolution achieves better results on reference-based metrics, including PSNR, SSIM, and LPIPS (see Table~\ref{tab:results_512}). 
We hypothesize that the observed decline in no-reference metrics, alongside the improvement in reference-based metrics at higher resolutions, is primarily attributed to the behavior of the teacher model, ResShift. Specifically, the ResShift model trained on $256 \times 256$ images yields higher scores on no-reference perceptual quality metrics, whereas the model trained on $512 \times 512$ images performs better on reference-based metrics. The RSD model exhibits the same pattern, which explains the observed trade-off between the two types of evaluation metrics across resolutions.

\begin{figure*}[!ht]
    \centering
    \resizebox{0.95\textwidth}{!}{
    \begin{subfigure}{0.99\textwidth}
        \includegraphics[width=\linewidth]{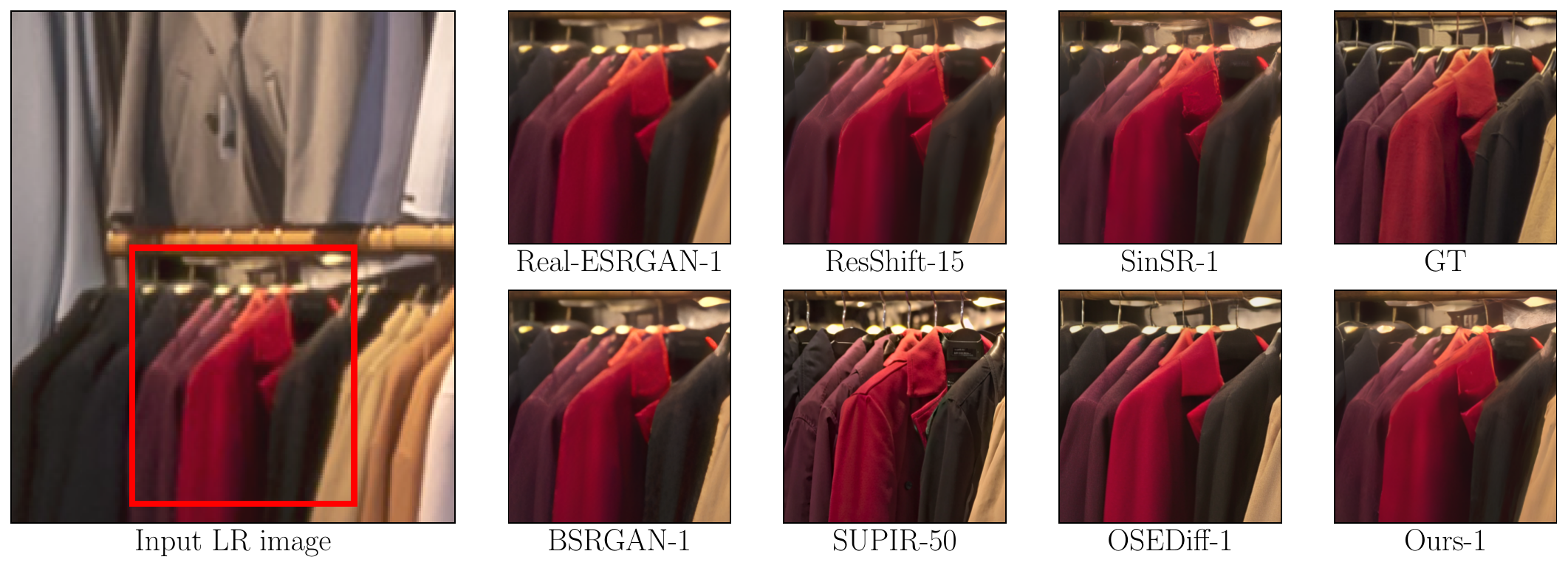}
        %\caption*{(a) LR input}
    \end{subfigure}
    }
    % \resizebox{0.95\textwidth}{!}{
    % \begin{subfigure}{0.99\textwidth}
    %     \includegraphics[width=\linewidth]{images/realsr_results/appendix_ours_final_image_realsr_Canon_015_crop_9.pdf}
    %     %\caption*{(a) LR input}
    % \end{subfigure}
    % }
    \resizebox{0.95\textwidth}{!}{
    % First row of images
    \begin{subfigure}{0.99\textwidth}
        \includegraphics[width=\linewidth]{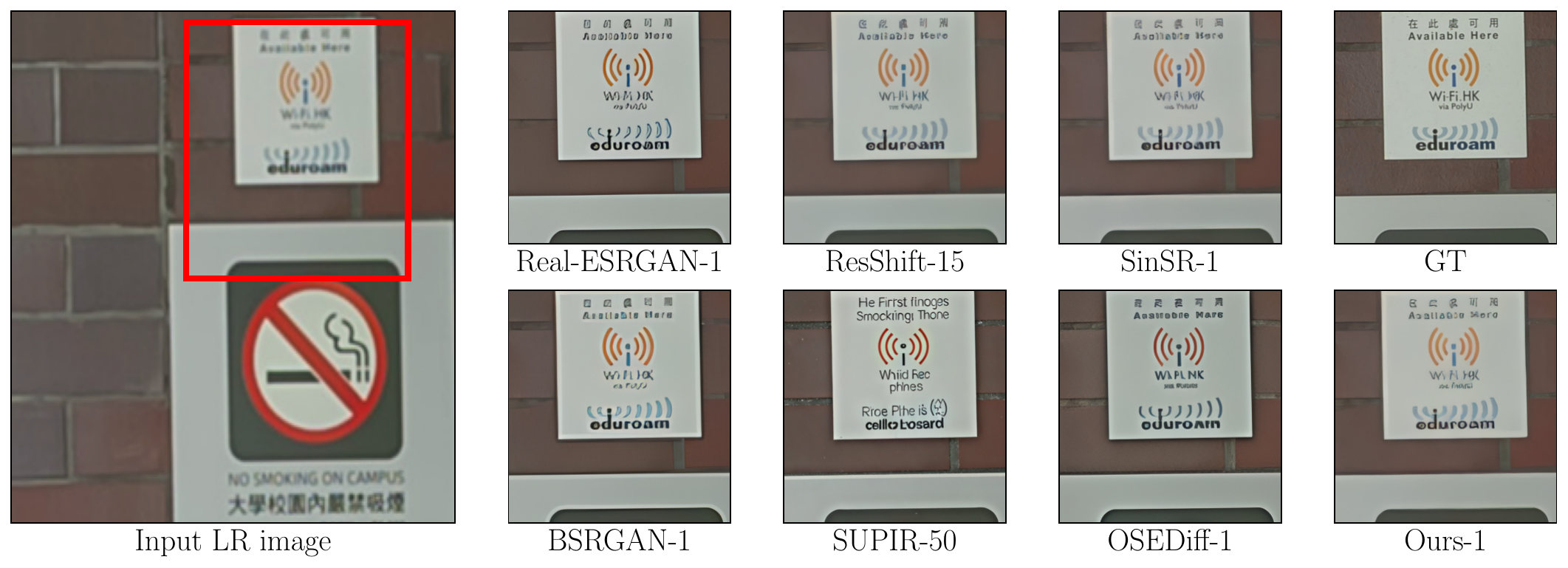}
        %\caption*{(a) LR input}
    \end{subfigure}
    }
    \caption{Visual results of RSD, ResShift, and SinSR models trained on $512 \times 512$ HR images from LSDIR dataset \citep{li2023lsdir} and other baselines (Real-ESRGAN, BSRGAN, SUPIR, OSEDiff) on full-size images from RealSR \citep{yue2023resshift}. Please zoom in for a better view.}
    \label{fig:comparison_realsr_512}
\end{figure*}
\section{Additional Ablation Studies}\label{app:ablation}

\noindent\textbf{Ablation on the hyperparameter \texorpdfstring{$K$}{K}.} To assess the sensitivity of RSD to the hyperparameter $K$ in Algorithm \ref{alg:main} for the number of fake ResShift updates per student update, we performed experiments with the final RSD configuration (Tables \ref{tab:benchmarks_full_images}, \ref{tab:imagenet-test}, and \ref{tab:benchmarks_crops_short}), along with additional supervised losses for $K \in \{1,3,5,10\}$. We evaluated the trained models on both the full-size RealSR dataset (Table~\ref{tab:k-ablation-realsr}, as in Section~4.3) and the ImageNet-Test dataset (Table~\ref{tab:k-ablation-imagenet}, following Table~\ref{tab:imagenet-test}). Across both datasets, all choices of $K$ yield very similar performance: $K=1$ slightly improves or matches the metrics of $K=5$, while roughly halving the training time due to fewer fake-model updates. These results indicate that, in the presence of additional ground-truth losses, RSD is largely insensitive to the exact choice of $K$ in this range, so $K$ can be used to optimize computation at the cost of only minor performance changes. We used $K = 5$ to follow the DMD2 strategy for the number of updates of the fake model per student update \citep{yin2024improved}; see Figure 9 in Appendix C of DMD2 for the analysis of the impact of $K$ on training stability for image generation problems. Our results show that RSD training with $K = 1$ and supervised losses can also be beneficial for Real-ISR problems while not compromising the good performance of $K = 5$.

\textbf{Training stability of RSD}. To isolate the effect of $K$ on optimization stability, we further repeated the ablation in a \emph{distill only} setup without supervised losses ((\textbf{Ours}, distill only) in Tables \ref{tab:benchmarks_full_images} and \ref{tab:imagenet-test}). Figure~\ref{fig:k-ablation-convergence} shows the convergence of PSNR and CLIPIQA on the ImageNet-Test dataset for $K \in \{1,3,5,10\}$ in this case. For $K=1$, the training dynamics become highly unstable, with large oscillations and clearly degraded final metrics, whereas configurations with $K \geq 3$ converge smoothly to similar quality levels. This suggests that supervised losses play a stabilizing role when using small $K$, and that $K=5$ remains a robust choice in more challenging or purely distillation-based settings, while $K=1$ is a viable and more efficient alternative in the supervised Real-ISR configuration used in the main experiments. This behavior was also reported in Figure 9 of DMD2. 

% \begin{table*}[t]
\begin{table*}[!ht]
% \vspace{0mm}
\centering
\begin{minipage}[t]{0.49\linewidth}
    \centering
    \captionof{table}{Ablation on the hyperparameter $K$ on the RealSR validation set. All runs use the final RSD configuration with supervised losses.}
    % \vspace{-2mm}
    \renewcommand{\arraystretch}{1.2}
    \setlength{\tabcolsep}{2pt}
    {\footnotesize
    \begin{tabular}{c|ccccc}
        \hline
        $K$ & PSNR$\uparrow$ & SSIM$\uparrow$ & LPIPS$\downarrow$ & CLIPIQA$\uparrow$ & MUSIQ$\uparrow$ \\
        \hline
        1  & 25.99 & 0.756 & 0.2713 & 0.7159 & 66.247 \\
        3  & 25.86 & 0.752 & 0.2701 & 0.7093 & 66.140 \\
        5  & 25.91 & 0.754 & 0.2726 & 0.7060 & 65.860 \\
        10 & 26.27 & 0.749 & 0.2732 & 0.7135 & 65.233 \\
        \hline
    \end{tabular}
    }
    \label{tab:k-ablation-realsr}
\end{minipage}
\hfill
\begin{minipage}[t]{0.49\linewidth}
    \centering
    \captionof{table}{Ablation on the hyperparameter $K$  on the ImageNet-Test dataset. All runs use the final RSD configuration with supervised losses.}
    % \vspace{-2mm}
    \renewcommand{\arraystretch}{1.2}
    \setlength{\tabcolsep}{2pt}
    {\footnotesize
    \begin{tabular}{c|ccccc}
        \hline
        $K$ & PSNR$\uparrow$ & SSIM$\uparrow$ & LPIPS$\downarrow$ & CLIPIQA$\uparrow$ & MUSIQ$\uparrow$ \\
        \hline
        1  & 24.28 & 0.657 & 0.196 & 0.697 & 59.499 \\
        3  & 24.11 & 0.644 & 0.191 & 0.675 & 59.535 \\
        5  & 24.31 & 0.657 & 0.193 & 0.681 & 58.947 \\
        10 & 24.01 & 0.639 & 0.193 & 0.671 & 59.110 \\
        \hline
    \end{tabular}
    }
    \label{tab:k-ablation-imagenet}
\end{minipage}
% \vspace{0mm}
\end{table*}

\begin{figure}[!ht]
\centering
\includegraphics[width=\linewidth]{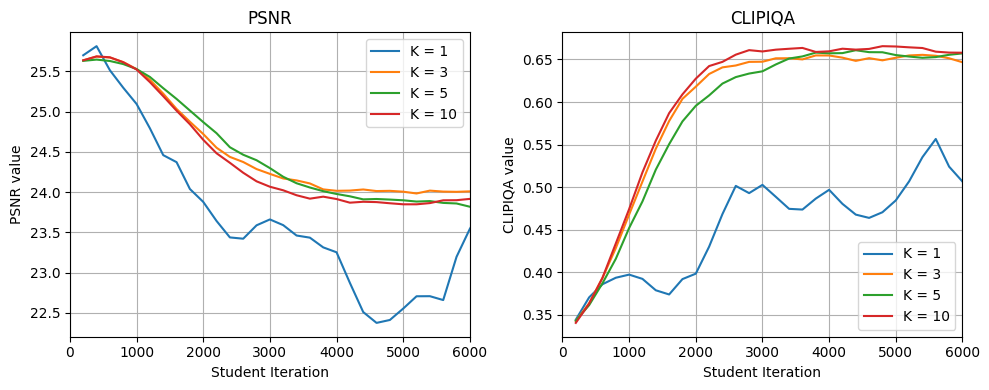}
\caption{Convergence of PSNR and CLIPIQA on the ImageNet-Test dataset for $K \in \{1,3,5,10\}$ when training \emph{distill only} RSD. For $K=1$, the optimization becomes unstable and fails to reach the quality of configurations with $K \geq 3$.}
% \vspace{0mm}
\label{fig:k-ablation-convergence}
\end{figure}

\section{Comparison with AddSR}\label{app:addsr}

It may seem that our primary distillation loss $\mathcal{L}_{\theta}$ \eqref{eq:main_loss} is similar to the SDS loss adopted in ADD \citep{sauer2024adversarial} $\mathcal{L}_{distill}$ and AddSR \citep{xie2024addsr} (Appendix A, \citep{sauer2024adversarial} and $\mathcal{L}_{ta-dis}$, Equation 1, \citep{xie2024addsr}). However, we show that this similarity is only on the surface.

\textbf{Conceptual difference in objective functions}. In our work, we propose the RSD loss \eqref{eq:RSD KL loss}, which is fundamentally different from SDS in both its formulation and practical implications. RSD introduces an auxiliary model to more accurately estimate the score function of the model distribution. This allows for a tighter and lower-variance approximation of the true KL gradient. Moreover, instead of treating each timestep independently as in SDS and VSD (i.e., using marginal KL divergences over $x_{t}$), our RSD loss is formulated over \textbf{the entire trajectory} $x_{0:T}$, leading to a more holistic distillation of the teacher's reverse process.

To facilitate a clear comparison, we summarize the key differences between the loss objectives used in SDS and our proposed RSD in Table \ref{table:sds_rsd_comparison}. We denote by $p^{*}$ the reverse process of the teacher ResShift model and by $p$ the reverse process of ResShift trained on generator data.
\begin{table*}[ht]
    \renewcommand{\arraystretch}{1.2}
    \setlength{\tabcolsep}{3pt}
    \centering
    \captionof{table}{Comparison of distillation objectives between SDS \citep{poole2023dreamfusion}, ADD \citep{sauer2024adversarial}, AddSR \citep{xie2024addsr} and RSD}
    \resizebox{\linewidth}{!}{
    \begin{tabular}{c|c}
        \hline
        Methods & Objective Function \\ \hline
        SDS \citep{poole2023dreamfusion}, ADD \citep{xie2024addsr}, AddSR \citep{xie2024addsr} & $\mathbb{E}_{p(y_0)}\left[\sum_{t=1}^Tw_t\mathcal{D}_{\text{KL}}\Big(p(x_t| y_0)||p^*(x_t|y_0)\Big)\right]$ \\ 
        \hline
        %ResShift-VSD (Appendix \ref{app:vsd}) & 25.23 & 0.5898 & 0.5890 & 0.746 & 35.6009 & 0.2405 & 5.6967 & 0.4793 \\
        RSD (\textbf{Ours}, distill-only) & $\mathbb{E}_{p(y_0)}\Big[\mathcal{D}_{\text{KL}}\Big(p(x_{0:T}|y_0)\,\|\,p^*(x_{0:T}|y_0)\Big)\Big]$ \\ \hline
    \end{tabular}
    }
    \label{table:sds_rsd_comparison}
\end{table*}

\textbf{Practical difference}. In addition to theoretical differences between RSD and ADD objectives discussed above, we list practical differences between implementations of RSD and AddSR for real-world SR.

\textbf{1. Objective implementation}. The implementation of objective in Eq. 1 in AddSR \citep{xie2024addsr} is different from RSD objective in Eq. \eqref{eq:final loss} in many aspects:
\begin{enumerate}
    \item \textbf{RSD does not have any hyperparameters in the distillation loss}. The distillation loss of AddSR, $\mathcal{L}_{ta-dis}$ (Equation 1, \citep{xie2024addsr}), requires a weighting function $d(s, t)$ (Equation 3, \citep{xie2024addsr}), which is defined by two hyperparameters, $\mu$ and $\nu$. The choice $\mu$ and $\nu$ is based solely on empirical analysis of performance results, as shown in Table 7 and Table 8 of AddSR. In contrast, RSD loss in Eq. \eqref{eq:tractable_objective} relies only on weights $w_{t}$, which are used for the training of the ResShift model (Eq. 8 in \citep{yue2023resshift}). These weights are derived from the theory of DDPM \citep{ho2020denoising} and in practice are omitted by ResShift and RSD following the conclusion of DDPM (see Appendix \ref{app:appendix_resshift}).
    \item \textbf{Different supervised losses}. The adversarial loss of AddSR, $\mathcal{L}_{ta-dis}$, follows the hinge loss used in ADD \citep{sauer2024adversarial}. We follow the adversarial loss of DMD2 \citep{yin2024improved} and use the standard non-saturating loss. We also use LPIPS loss following OSEDiff \citep{wu2024onestep}, while AddSR omits it.
    \item \textbf{Fake model}. Contrary to AddSR, the objective of RSD involves a trainable fake model.
\end{enumerate}

\textbf{Architecture}. The major architectural differences are as follows:
\begin{enumerate}
    \item \textbf{RSD does not have any networks related to text-to-image models}. The architecture of generator and fake model in RSD is a UNet model \citep{ronneberger2015unet} following ResShift. We avoid ControlNet \citep{zhang2023adding} and other models used in AddSR.
    \item \textbf{The sizes of the AddSR and RSD architectures differ by 1 order of magnitude}. In total, the architecture of AddSR requires 2.28B parameters, while RSD requires 174M parameters.
\end{enumerate}

\textbf{Empirical results}. We show the comparison between results of 1-step AddSR and RSD in Table \ref{tab:benchmarks_crops_short_addsr}. It shows that RSD outperforms AddSR in most fidelity and perceptual metrics while having $\times 10$ much fewer parameters.

\begin{table*}[!ht]
% \captionsetup{font=footnotesize, labelfont=footnotesize}
    \caption{\footnotesize{Quantitative results of AddSR and RSD models on crops $512 \times 512$ from StableSR \citep{wang2024exploiting}. The best results are highlighted in \textbf{bold}.}}\label{tab:benchmarks_crops_short_addsr}
    \renewcommand{\arraystretch}{1.2}
    \setlength{\tabcolsep}{3pt}
    \resizebox{\linewidth}{!}{
    \glb{\begin{tabular}{c|c|ccccccccc}
        \hline
        Datasets & Methods &  PSNR$\uparrow$ & SSIM$\uparrow$ & LPIPS$\downarrow$ & DISTS$\downarrow$ & NIQE$\downarrow$ & MUSIQ$\uparrow$ & MANIQA$\uparrow$ & CLIPIQA$\uparrow$ \\
        \hline
        \multirow{2}{*}{DIV2K-Val} 
        & AddSR \citep{xie2024addsr} & 23.26 & 0.5902 & 0.3623 & 0.2123 & \textbf{4.7610} & 63.39 & 0.5657 & 0.5734 \\
        & RSD (\textbf{Ours}) & \textbf{23.91} & \textbf{0.6042} & \textbf{0.2857} & \textbf{0.1940} & 5.1987 & \textbf{68.05} & \textbf{0.5937} & \textbf{0.6967}  \\
        \hline
        \multirow{2}{*}{DRealSR} & AddSR \citep{xie2024addsr} & \textbf{27.77} & \textbf{0.7722} & 0.3196 & \textbf{0.2242} & 6.9321 & 60.85 & 0.5490 & 0.6188 \\
        & RSD (\textbf{Ours}) & 27.40 & 0.7559 & \textbf{0.3042} & 0.2343 & \textbf{6.2577} & \textbf{62.03} & \textbf{0.5625} & \textbf{0.7019}  \\
        \hline
        \multirow{2}{*}{RealSR}  
        &  AddSR \citep{xie2024addsr} & 24.79 & 0.7077 & 0.3091 & \textbf{0.2191} & \textbf{5.5440} & \textbf{66.18} & \textbf{0.6098} & 0.5722 \\
        & RSD (\textbf{Ours}) & \textbf{25.61} & \textbf{0.7420} & \textbf{0.2675} & 0.2205 & 5.7500 & 66.02 & 0.5930 & \textbf{0.6793} \\
        \hline
    \end{tabular}
    }
    }
\end{table*}

\section{Details of ResShift}
\label{app:appendix_resshift}

As part of the diffusion model class, ResShift can be described by specifying the forward (degradation) process, the parameterization of the reverse (restoration) process, and the objective of training the reverse process.

\noindent \textbf{Forward process}. Consider a pair of $(\text{LR}, \text{HR})$ images ${(y_0, x_0) \!\sim\! p_{\text{data}}(y_0, x_0)}$. For a residual $e_{0} = y_{0} - x_{0}$, ResShift proposes a transition from  $x_{0}$ to $y_{0}$ with the Markov chain $\{x_{t}\}_{t = 1}^{T}$ of length $T$ through the following Gaussian transition distribution:
\begin{equation}
    q(x_{t} | x_{t - 1}, y_{0}) = \mathcal{N}(x_{t} | x_{t - 1} + \alpha_{t}e_{0}, \kappa^{2}\alpha_{t} \m I),
    \label{eq:forward_process_resshift_appendix}
\end{equation}
where:
\begin{itemize}
    \item $\alpha_{t} = \eta_{t} - \eta_{t - 1}$ for $t > 1$ and $\alpha_{1} = \eta_{1}$ are defined by the shifting sequence $\{\eta_{t}\}_{t = 1}^{T}$, and $\m I$ denotes the identity matrix. 
    \item $\kappa$ is a hyper-parameter controlling the noise variance, and the shifting sequence $\{\eta_{t}\}_{t = 1}^{T}$ monotonically increases with the timestep $t$. 
\end{itemize}
% The transition distribution \eqref{eq:forward_process_resshift} leads to an analytically tractable marginal distribution of $q(x_{t} | x_{0}, y_{0})$ at any timestep $t$:
% \begin{equation}
%     q(x_{t} | x_{0}, y_{0}) = \mathcal{N}(x_{t} | x_{0} + \eta_{t}e_{0}, \kappa^{2}\eta_{t} \m I), t \in [1, T],
%     \label{eq:forward_process_resshift_integrable}
% \end{equation}

The shifting sequence satisfies $\eta_{1} \approx 0$ and $\eta_{T} \approx 1$, which guarantees the convergence of the marginal distributions of $x_{1}$ and $x_{T}$ to approximate distributions of the HR image and the LR image, respectively. Notably, the posterior distribution $q(x_{t - 1} | x_{t}, x_{0}, y_{0})$ for the transition distribution \eqref{eq:forward_process_resshift} is tractable and can be derived using Bayes's rule:
\begin{align}
    q(x_{t - 1} | x_{t}, x_{0}, y_{0}) = 
    \mathcal{N}\left(x_{t - 1} \Bigl| \frac{\eta_{t - 1}}{\eta_{t}}x_{t} + \frac{\alpha_{t}}{\eta_{t}}x_{0}, \kappa^{2}\frac{\eta_{t - 1}}{\eta_{t}}\alpha_{t} \m I\right).
    \label{eq:reverse_resshift_analytic}
\end{align}

\noindent \textbf{Reverse process.}
ResShift suggests the construction of the reverse process to estimate the posterior distribution $p(x_{0}|y_{0})$ in the following parameterized form:
\begin{equation}
    p_{\theta}(x_{0}|y_{0}) = \int p(x_{T}|y_{0})\prod\limits_{t=1}^{T}p_{\theta}(x_{t-1}|x_{t},y_{0})dx_{1:T}
    \label{eq:reverse_resshift_parametric_appendix}
\end{equation}
Here $p(x_{T}|y_{0}) \approx \mathcal{N}(x_{T} | y_{0}, \kappa^{2}I)$ and $p_{\theta}(x_{t-1}|x_{t},y_{0})$ is the inverse transition kernel from $x_{t-1}$ to $x_{t}$ with learnable parameters $\theta$.
Following DDPM \citep{ho2020denoising}, ResShift parametrizes this transition kernel with the Gaussian:
\begin{equation}
    p_{\theta}(x_{t-1}|x_{t},y_{0}) = \mathcal{N}(x_{t - 1} | \mu_{\theta}(x_{t}, y_{0}, t), \Sigma_{\theta}(x_{t}, y_{0}, t))
    \label{eq:normal_reverse_process}
\end{equation}

\noindent \textbf{Objective.} To derive the minimization objective for parameters $\theta$, ResShift applies the variational bound estimation on negative log-likelihood for the $p_{\theta}(x_{0}|y_{0})$, as in DDPM:
\begin{align}
  & \min_{\theta}\mathbb{E}_{(x_{0}, y_{0})}\sum\limits_{t = 1}^{T}\mathbb{E}_{x_{t} \sim q(x_{t}|x_{0}, y_{0})} \bigl[\mathcal{D}_{KL}(q(x_{t - 1}|x_{t}, x_{0}, y_{0})||p_{\theta}(x_{t-1}|x_{t},y_{0}))\bigr]
  \label{eq:resshift_objective}
\end{align}
Inspired by the tractable formula for the posterior $ q(x_{t - 1} | x_{t}, x_{0}, y_{0})$ in \eqref{eq:reverse_resshift_analytic}, ResShift sets the variance parameter $\Sigma_{\theta}(x_{t}, y_{0}, t)$ to be independent of $x_{t}$ and $y_{0}$ and reparametrized the parameter $\mu_{\theta}(x_{t}, y_{0}, t)$ as follows:
\begin{align}
    & \Sigma_{\theta}(x_{t}, y_{0}, t) = \kappa^{2}\frac{\eta_{t - 1}}{\eta_{t}}\alpha_{t} \m I \\    
    & \mu_{\theta}(x_{t}, y_{0}, t) = \frac{\eta_{t - 1}}{\eta_{t}}x_{t} + \frac{\alpha_{t}}{\eta_{t}}f_{\theta}(x_{t}, y_{0}, t),
\end{align}
where $f_{\theta}$ is a deep neural network with parameter $\theta$, aiming to predict $x_{0}$. Given the Gaussian form of the distributions $q(x_{t - 1} | x_{t}, x_{0}, y_{0})$ \eqref{eq:reverse_resshift_analytic} and $p_{\theta}(x_{t-1}|x_{t},y_{0})$  \eqref{eq:normal_reverse_process}, the objective \eqref{eq:resshift_objective} simplifies as follows:
\begin{equation}
\min_{\theta}\left[\sum\limits_{t = 1}^{T}w_{t}\mathbb{E}_{(x_{0}, y_{0}, x_{t})}\|f_{\theta}(x_{t}, y_{0}, t) - x_{0}\|^{2}\right],
\label{eq:fake-resshift-obj_appendix}
\end{equation}
where $w_{t} = \frac{\alpha_{t}}{2\kappa^{2}\eta_{t}\eta_{t - 1}}$. Empirically, omitting the weight $w_{t}$ leads to a noticeable improvement in performance, which aligns with the conclusion in DDPM.

% \clearpage
\section{Limitations and Failure Cases}\label{app:limitations}
Below, we present failure cases for image restoration for RSD and other models. Our method may produce images with mistakes since the teacher model is imperfect. However, we stress that T2I-based SR models with rich priors also face such problems. Specifically, in Figure \ref{fig:comparison_realset65_failure_small_details} (top), we observe that the teacher model produces an indistinguishable image compared to the simple bicubic upsampling image. A similar issue occurs with OSEDiff, while all other methods, including ours, SinSR, SUPIR, and GAN-based models, produce images with visible artifacts. Another typical failure case of diffusion-based methods, ResShift, SinSR, and RSD, includes images with rich background details that are hard to predict due to insufficient contextual information in the LR image. As we show in Figure \ref{fig:comparison_realset65_failure_small_details} (bottom), the hallucination properties of T2I-based methods, SUPIR and OSEDiff, provide realistic continuations of the road and cars with greater and richer details compared to the results of ResShift, SinSR, and RSD. In Figure \ref{fig:comparison_realsr_failure_small_details}, we show failure cases of the considered SR methods on the real-world RealSR benchmark \citep{cai2019toward} with available ground truth images. All methods struggle when running on images with many small details, like bush patterns. Hallucinations of diffusion-based methods do not coincide with the original HR image in small details. 

% Figure \ref{fig:comparison_div2k_failure} demonstrates several failure cases for real-world SR methods on synthetic low-resolution crops of StableSR \citep{wang2024exploiting}, which were obtained from DIV2K dataset \citep{agustsson2017ntire} using Real-ESRGAN degradations \citep{wang2021real}. Due to the lack of context diffusion-based methods, ResShift, SinSR, and RSD failed to reconstruct details of the squirrel's body and forest, as well as T2I-based SUPIR and OSEDiff methods with richer prior. Their predictions have significant visual differences with ground truth HR images. 

\begin{figure*}[!ht]
    \centering
    \resizebox{0.95\textwidth}{!}{
    % First row of images
    \begin{subfigure}{0.99\textwidth}
        \includegraphics[width=\linewidth]{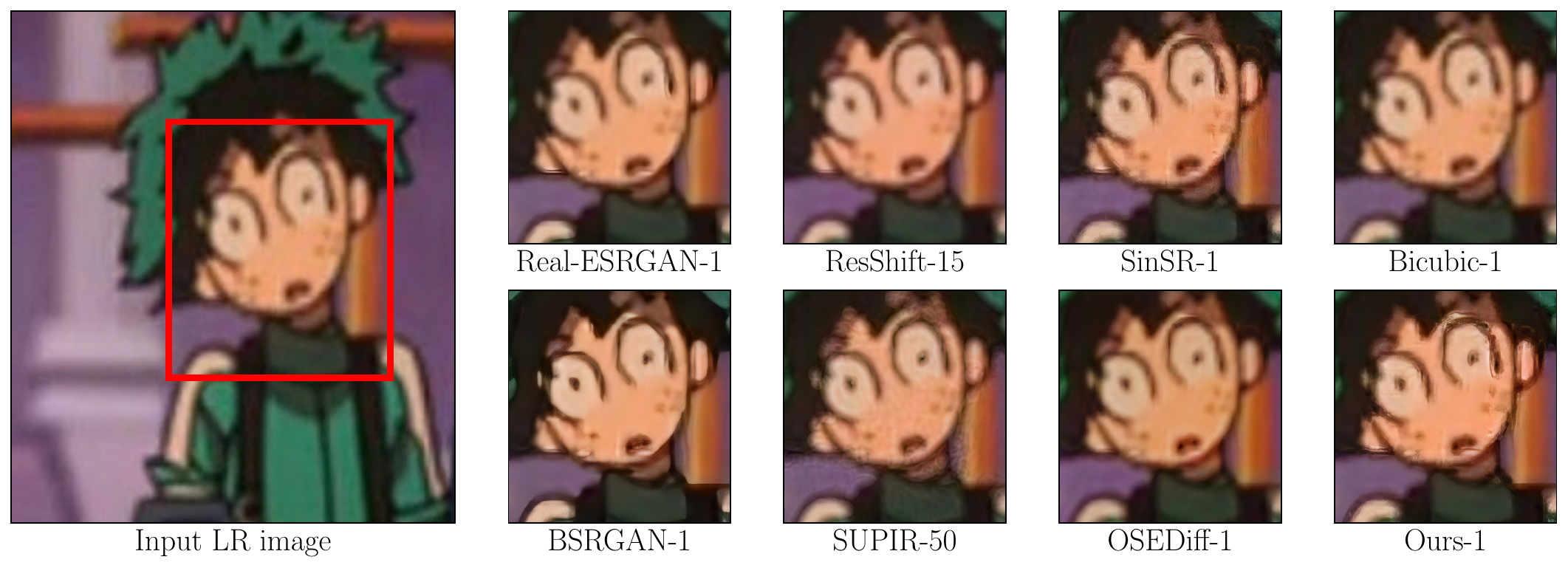}
        %\caption*{(a) LR input}
    \end{subfigure}
    }
    %\caption{Failure case on the image with indistinguishable results from bicubic upsampling on RealSet65 \citep{yue2023resshift}. Please zoom in for a better view.}
    
    % \resizebox{0.95\textwidth}{!}{
    % % First row of images
    % \begin{subfigure}{0.99\textwidth}
    %     \includegraphics[width=\linewidth]{images/realset65_results/bridge1/appendix_ours_final_image_realset65_bridge1_crop_7.pdf}
    %     %\caption*{(a) LR input}
    % \end{subfigure}
    % }
    \caption{Failure cases on images from RealSet65 \citep{yue2023resshift}. Please zoom in for a better view.}
    \label{fig:comparison_realset65_failure_small_details}
\end{figure*}

\begin{figure*}[!ht]
    \centering
    % \resizebox{0.95\textwidth}{!}{
    % % First row of images
    % \begin{subfigure}{0.99\textwidth}
    %     \includegraphics[width=\linewidth]{images/realsr_results/appendix_ours_final_image_realsr_Nikon_001_crop_0.pdf}
    %     %\caption*{(a) LR input}
    % \end{subfigure}
    % }
    \resizebox{0.95\textwidth}{!}{
    \begin{subfigure}{0.99\textwidth}
        \includegraphics[width=\linewidth]{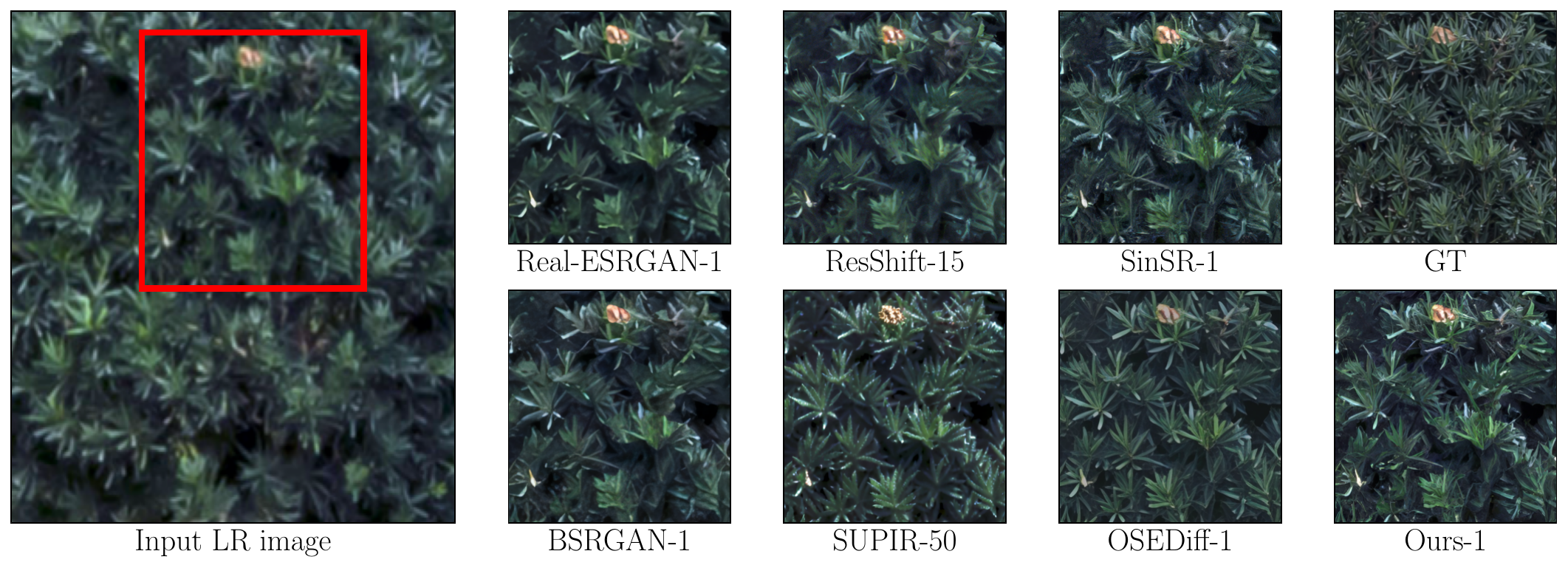}
        %\caption*{(a) LR input}
    \end{subfigure}
    }
    \caption{Failure cases on images from RealSR \citep{yue2023resshift}. Please zoom in for a better view.}
    \label{fig:comparison_realsr_failure_small_details}
\end{figure*}

% \begin{figure*}[!ht]
%     \centering
%     \resizebox{0.95\textwidth}{!}{
%     % First row of images
%     \begin{subfigure}{0.99\textwidth}
%         \includegraphics[width=\linewidth]{images/div2k/appendix_ours_final_image_div2k_0810_pch_00025.pdf}
%         %\caption*{(a) LR input}
%     \end{subfigure}
%     }
%     \resizebox{0.95\textwidth}{!}{
%     \begin{subfigure}{0.99\textwidth}
%         \includegraphics[width=\linewidth]{images/div2k/appendix_ours_final_image_div2k_0856_pch_00010.pdf}
%         %\caption*{(a) LR input}
%     \end{subfigure}
%     }
%     \caption{Failure cases on synthetic images from DIV2K crops \citep{agustsson2017ntire,wang2024exploiting}. Please zoom in for a better view.}
%     \label{fig:comparison_div2k_failure}
% \end{figure*}

% \clearpage
% \newpage
\section{Proofs}\label{app:proofs}
\begin{proof}[Motivation for the assumption in Equation \eqref{eq:assumption_statement}]. We prove that
\begin{equation}
    f_{G_{\theta}} = f^{*} \quad \Rightarrow  \quad \mathcal{L}_{\theta} = 0 \quad \Rightarrow \quad p_{\theta}(x_{0}, y_{0}) = p_{\text{data}}(x_{0}, y_{0}) \label{eq:p_g_f}
\end{equation}
under the additional assumptions:
\begin{equation}
    p^{*}(x_{0}|y_{0}) = p_{\text{data}}(x_{0}|y_{0}), \label{eq:assumption_teacher} 
\end{equation}
\begin{equation}
    p(x_{0}|y_{0}) = p_{\theta}(x_{0}|y_{0}) \label{eq:p_theta_conditional}
\end{equation}
Practically, assumption in Equation \eqref{eq:assumption_teacher} means that the teacher reverse process exactly recovers the data conditional distribution, assumption in Equation \eqref{eq:p_theta_conditional} means that the fake reverse process recovers the generator distribution ideally. From the definition of the loss $\mathcal{L}_{\theta}$ in Equation \eqref{eq:main_loss}, exact matching of the fake and teacher models implies zero loss:
\begin{equation}
    f_{G_{\theta}} = f^{*} \quad \Rightarrow  \quad \mathcal{L}_{\theta} = 0
\end{equation}
From Equation \eqref{eq:RSD KL loss}, we obtain that $\mathcal{L}_{\theta} = 0$ leads to 
\begin{equation}
    p(x_{0:T}|y_{0}) = p^{*}(x_{0:T}|y_{0}) \label{eq:p_star_chain}
\end{equation}
for almost all $y_{0}$ with respect to $p(y_{0})$. We integrate the statement in Equation \eqref{eq:p_star_chain} with respect to $x_{1:T}$:
\begin{eqnarray}
    p(x_{0}|y_{0}) = p^{*}(x_{0}|y_{0}) 
\end{eqnarray}
holds for almost all $y_{0}$ with respect to $p(y_{0})$. Using assumptions in Equations \eqref{eq:assumption_teacher} and \eqref{eq:p_theta_conditional}, we obtain that:
\begin{equation}
    p_{\theta}(x_{0}|y_{0}) = p(x_{0}|y_{0}) = p^{*}(x_{0}|y_{0})  = p_{\text{data}}(x_{0}|y_{0}) \label{eq:data_equal}
\end{equation}
holds for almost all $y_{0}$ with respect to $p(y_{0})$. We multiply Equation \eqref{eq:data_equal} on the same LR marginal $p(y_{0})$ and derive:
\begin{equation}
    p_{\theta}(x_{0}, y_{0}) = p_{\text{data}}(x_{0}, y_{0}),
\end{equation}
which justifies the final statement in Equation \eqref{eq:p_g_f} and motivates the assumption in Equation \eqref{eq:assumption_statement}.
\end{proof}

\begin{proof}[Proof of Proposition~\ref{prop:main-proposition}]

\textbf{First stage.} We first prove that using objective $\mathcal{L}_{\text{fake}}$ (see Eq. \eqref{eq:tractable_objective}) is equivalent to training a fake model $f_{\phi}$ with objective \eqref{eq:fake-resshift-obj}. We recall the $\mathcal{L}_{\text{fake}}$ minimization objective:
\begin{equation}
    \argmin_{\phi} \mathcal{L}_{\text{fake}},
\end{equation}
where
\begin{equation}
    \mathcal{L}_{\text{fake}} = \Big( \sum_{t=1}^T w_t \mathbb{E}_{p_{\theta}(\widehat{x}_0, y_0, x_t)} \Big\{\|f_{\phi}(x_t, y_0, t)\|_2^2 - 2 \langle f_{\phi}(x_t, y_0, t) + \underbrace{f^*(x_t, y_0, t)}_{\text{Does not depend on $\phi$}} , \widehat{x}_0 \rangle  \Big\} \Big)
\end{equation}

Then we prove:
\begin{eqnarray}
     \argmin_{\phi} \underbrace{\Big(\sum_{t=1}^T w_t \mathbb{E}_{p_{\theta}(\widehat{x}_0, y_0, x_t)} \| f_{\phi}(x_t, y_0, t) - \widehat{x}_0 \|_2^2 \Big)}_{\text{Training a fake model $f_{\phi}$ with objective \eqref{eq:fake-resshift-obj}}} = 
     \nonumber
     \\
     \argmin_{\phi} \Big( \sum_{t=1}^T w_t \mathbb{E}_{p_{\theta}(\widehat{x}_0, y_0, x_t)} \Big\{\|f_{\phi}(x_t, y_0, t)\|_2^2 - 2 \langle f_{\phi}(x_t, y_0, t) , \widehat{x}_0 \rangle  \Big\} + \nonumber \\
     \underbrace{\sum_{t=1}^T w_t \mathbb{E}_{p_{\theta}(\widehat{x}_0, y_0, x_t)} \|\widehat{x}_0\|_2^2}_{\text{Does not depend on $\phi$}} \Big) = 
     \nonumber
     \\
     %\argmin_{\phi} \Big( \sum_{t=1}^T w_t \mathbb{E}_{p_{\theta}(\widehat{x}_0, y_0, x_t)} \Big\{\|f_{\phi}(x_t, y_0, t)\|_2^2 - 2 \langle f_{\phi}(x_t, y_0, t) , \widehat{x}_0 \rangle  \Big\} \Big) = 
     % \nonumber
     % \\
     \argmin_{\phi} \Big( \sum_{t=1}^T w_t \mathbb{E}_{p_{\theta}(\widehat{x}_0, y_0, x_t)} \Big\{\|f_{\phi}(x_t, y_0, t)\|_2^2 - 2 \langle f_{\phi}(x_t, y_0, t), \widehat{x}_0 \rangle  \Big\} - \nonumber \\
     \underbrace{\sum_{t=1}^T w_t \mathbb{E}_{p_{\theta}(\widehat{x}_0, y_0, x_t)} \Big\{2 \langle f^*(x_t, y_0, t), x_0 \rangle\Big\}}_{\text{Does not depend on $\phi$}} \Big) =
     \nonumber
     \\
     \argmin_{\phi} \Big( \sum_{t=1}^T w_t \mathbb{E}_{p_{\theta}(\widehat{x}_0, y_0, x_t)} \Big\{\|f_{\phi}(x_t, y_0, t)\|_2^2 - \nonumber \\
     2 \langle f_{\phi}(x_t, y_0, t) + \underbrace{f^*(x_t, y_0, t)}_{\text{Does not depend on $\phi$}} , \widehat{x}_0 \rangle  \Big\} \Big) = 
     \nonumber
     \\
     \argmin_{\phi} \mathcal{L}_{\text{fake}}.
\end{eqnarray}

\textbf{Second stage.} Now we prove that:
\begin{eqnarray}
     \underbrace{\sum_{t=1}^T w_t \mathbb{E}_{p_{\theta}(\widehat{x}_0, y_0, x_t)} \| f_{G_{\theta}}(x_t, y_0, t) - f^*(x_t, y_0, t) \|_2^2}_{\mathcal{L}_{\theta}} =
     \nonumber
     \\
     - \min_{\phi} \Big\{\sum_{t=1}^T w_t \mathbb{E}_{p_{\theta}(\widehat{x}_0, y_0, x_t)} \Big(\| f_{\phi}(x_t, y_0, t)\|_{2}^2  - \| f^*(x_t, y_0, t)\|_2^2 + \nonumber \\
     2 \langle f^*(x_t, y_0, t) \!- \! f_{\phi}(x_t, y_0, t), \widehat{x}_0  \rangle \Big) \Big\}
    \label{proof:main-equality}
\end{eqnarray}

Note, that since ResShift objective \eqref{eq:fake-resshift-obj_appendix} is an MSE, the solution $f_{G_{\theta}}$ for the data produced by generator $G_{\theta}$ is given by the conditional expectation as:
\begin{equation}
    f_{G_{\theta}}(x_t, y_0, t) = \mathbb{E}_{p_{\theta}(\widehat{x}_0 | y_0, x_t)}[\widehat{x}_0].
\end{equation}

We start from the right part of \eqref{proof:main-equality} and transform it back to the left part:
\begin{eqnarray}
    - \min_{\phi} \Big\{\sum_{t=1}^T w_t \mathbb{E}_{p_{\theta}(\widehat{x}_0, y_0, x_t)} \Big( - \| f^*(x_t, y_0, t)\|_2^2 +
    \| f_{\phi}(x_t, y_0, t)\|_{2}^2 + \nonumber \\
    2 \langle f^*(x_t, y_0, t) \!- \! f_{\phi}(x_t, y_0, t), \widehat{x}_0  \rangle \Big) \Big\} =
    \nonumber
    \\
    \sum_{t=1}^T w_t \mathbb{E}_{p_{\theta}(\widehat{x}_0, y_0, x_t)} \Big\{ \| f^*(x_t, y_0, t)\|_2^2 - 2  \langle f^*(x_t, y_0, t), \widehat{x}_0 \rangle \Big\} -
    \nonumber
    \\
    \min_{\phi} \Big\{\sum_{t=1}^T w_t \mathbb{E}_{p_{\theta}(\widehat{x}_0, y_0, x_t)} \Big(
    \| f_{\phi}(x_t, y_0, t)\|_{2}^2 - 2 \langle f_{\phi}(x_t, y_0, t), \widehat{x}_0  \rangle \Big) \Big\} =
    \nonumber
    \\
    \sum_{t=1}^T w_t \mathbb{E}_{p_{\theta}(y_0, x_t)} \Big( \| f^*(x_t, y_0, t)\|_2^2 - 2 \langle f^*(x_t, y_0, t), \underbrace{\mathbb{E}_{p_{\theta}(\widehat{x}_0|y_0, x_t)}  \widehat{x}_0}_{f_{G_{\theta}(x_t, y_0, t)}} \rangle \Big)  -
    \nonumber
    \\
    \min_{\phi} \Big\{\sum_{t=1}^T w_t \mathbb{E}_{p_{\theta}(\widehat{x}_0, y_0, x_t)} \Big(
    \| f_{\phi}(x_t, y_0, t)\|_{2}^2 - 2 \langle f_{\phi}(x_t, y_0, t), \widehat{x}_0  \rangle \Big) \Big\} =
    \nonumber
    \\
    \sum_{t=1}^T w_t \mathbb{E}_{p_{\theta}(y_0, x_t)} \Big( \| f^*(x_t, y_0, t)\|_2^2 - 2 \langle f^*(x_t, y_0, t), f_{G_{\theta}(x_t, y_0, t)} \rangle \Big)  -
    \nonumber
    \\
    \sum_{t=1}^T w_t \mathbb{E}_{p_{\theta}(y_0, x_t)}  \Big( \| f_{G_{\theta}}(x_t, y_0, t)\|_2^2 - 2 \underbrace{\langle f_{G_{\theta}}(x_t, y_0, t), f_{G_{\theta}(x_t, y_0, t)}}_{\|f_{G_{\theta}}\|_2^2} \rangle \Big) = 
    \nonumber
    \\
    \nonumber
    \sum_{t=1}^T w_t \mathbb{E}_{p_{\theta}(y_0, x_t)} \Big( \| f^*(x_t, y_0, t)\|_2^2 - 2 \langle f^*(x_t, y_0, t), f_{G_{\theta}(x_t, y_0, t)} \rangle + \|f_{G_{\theta}}(x_t, y_0, t) \|_2^2 \Big) = 
    \nonumber
    \\
    \sum_{t=1}^T w_t \mathbb{E}_{p_{\theta}(y_0, x_t)} \underbrace{\| f_{G_{\theta}}(x_t, y_0, t) - f^*(x_t, y_0, t) \|_2^2}_{\text{Does not depend on $\widehat{x}_0$ so we can add $\widehat{x}_0$ in expectation.}} =
    \nonumber
    \\
    \sum_{t=1}^T w_t \mathbb{E}_{p_{\theta}(\widehat{x}_0, y_0, x_t)} \| f_{G_{\theta}}(x_t, y_0, t) - f^*(x_t, y_0, t)\|_2^2 .
    \nonumber
\end{eqnarray}

\end{proof}

\noindent \textbf{Discussion}. We explain the intractability of the gradient for $\mathcal{L}_{\theta}$ in Equation \eqref{eq:main_loss} as follows. The gradient of $\mathcal{L}_{\theta}$ \eqref{eq:main_loss} over parameters $\theta$ of $G_{\theta}$ is given by the chain rule:
\begin{equation}
    \frac{d\mathcal{L}_{\theta}}{d\theta} = \underbrace{\frac{\partial \mathcal{L}}{\partial \theta}}_{\text{direct}} + \frac{\partial \mathcal{L}}{\partial f_{G_{\theta}}} \cdot \underbrace{\frac{\partial f_{G_{\theta}}}{\partial \theta}}_{\text{implicit}}
\end{equation}
$\frac{d\mathcal{L}_{\theta}}{d\theta}$ contains an implicit term, which requires a differentiation through the full ResShift training loop and is computationally infeasible (because one needs to backpropagate through all the gradient updates used to train $f_{G_{\theta}}$):
\begin{equation}
   \frac{\partial f_{G_{\theta}}}{\partial \theta} = \frac{\partial }{\partial \theta}\underbrace{\left[\arg\min_{\phi}[\mathcal{L}(\phi, \theta)]\right]}_{\text{Training on the Generator outputs}} 
\end{equation}
In contrast, our Proposition \ref{prop:main-proposition} and Equation \eqref{eq:tractable_objective} resolve this by deriving the mathematically equivalent form of Equation \eqref{eq:main_loss}, which does not require directly differentiating through the "re-training" step $f_{G_{\theta}}$ i.e., it does not contain terms with argmin operations.
% \clearpage
% \input{sec/appendix_rsd_generalization}
% \clearpage
% \input{sec/appendix_more_visual_results}
% \clearpage

\end{document}